\theoremstyle{plain}
\newtheorem{theorem}{Theorem}[section]
\newtheorem{proposition}[theorem]{Proposition}
\newtheorem{property}[theorem]{Property}
\newtheorem{corollary}[theorem]{Corollary}
\theoremstyle{definition}
\newtheorem{definition}[theorem]{Definition}
\newtheorem{assumption}[theorem]{Assumption}
\theoremstyle{remark}
\newtheorem{remark}[theorem]{Remark}
\pgfplotsset{compat=1.18} 
\def\cdf(#1)(#2)(#3){0.5*(1+(erf((#1-#2)/(#3*sqrt(2)))))}%
\pgfplotsset{compat=1.7}
\pgfplotsset{
    every axis/.append style={
        width=0.85\linewidth,  
        height=0.55\linewidth, 
        grid=major,           
        grid style={dotted,gray!50},  
        axis line style={thick,black},  
        tick style={black, thick},  
        xlabel style={at={(axis description cs:0.5,-0.1)}, anchor=north, font=\normalsize},  
        ylabel style={at={(axis description cs:-0.1,0.5)}, anchor=south, font=\normalsize},
        label style={font=\normalsize},  
        title style={font=\bfseries\footnotesize},  
        legend style={align=left, legend cell align=left, draw=none, fill=none, font=\scriptsize, inner sep=1pt, row sep=-2pt},  
        tick label style={font=\normalsize},  
        cycle list name=Set1-7,  
    }
}
\pgfplotsset{cycle list/Set1-7}
\def\eqref#1{equation~\ref{#1}}
\def\1{\bm{1}}
\DeclareMathAlphabet{\mathsfit}{\encodingdefault}{\sfdefault}{m}{sl}
\SetMathAlphabet{\mathsfit}{bold}{\encodingdefault}{\sfdefault}{bx}{n}
\title{Polynomial, trigonometric, and tropical \\ activations}
\author{Ismail Khalfaoui-Hassani \& Stefan Kesselheim\\
Jülich Supercomputing Centre\\
Forschungszentrum Jülich\\
Jülich, Germany\\
\texttt{\{i.khalfaoui,s.kesselheim\}@fz-juelich.de}\\
}
\begin{document}

\maketitle

\begin{abstract}
Which functions can be used as activations in deep neural networks? This article explores families of functions based on orthonormal bases, including the Hermite polynomial basis and the Fourier trigonometric basis, as well as a basis resulting from the tropicalization of a polynomial basis. Our study shows that, through simple variance-preserving initialization and without additional clamping mechanisms, these activations can successfully be used to train deep models, such as GPT-2 for next-token prediction on OpenWebText and ConvNeXt for image classification on ImageNet. Our work addresses the issue of exploding and vanishing activations and gradients, particularly prevalent with polynomial activations, and opens the door for improving the efficiency of large-scale learning tasks. Furthermore, our approach provides insight into the structure of neural networks, revealing that networks with polynomial activations can be interpreted as multivariate polynomial mappings. Finally, using Hermite interpolation, we show that our activations can closely approximate classical ones in pre-trained models by matching both the function and its derivative, making them especially useful for fine-tuning tasks. These activations are available in the torchortho\footnote{\href{https://github.com/K-H-Ismail/torchortho}{https://github.com/K-H-Ismail/torchortho}} library.
\end{abstract}

\section{Introduction}
\label{sec:intro}
Modern deep learning is largely built upon the Multi-Layer Perceptron (MLP) \cite{mcculloch1943logical, rosenblatt1958perceptron} and the gradient backpropagation algorithm \cite{rumelhart1986learning}. The MLP can be described as a combination of a multiplication by a matrix of learnable weights and the application of a nonlinear activation function. Gradient backpropagation, on the other hand, relies on the chain rule to compute partial derivatives necessary for optimizing weights through gradient descent. In a deep neural network, \emph{preserving variance across layers} is critical to ensure stable training dynamics. 
\citet{glorot2010understanding} and \citet{he2015delving} were the first to consider a variance-preserving analysis for deep neural networks. 

 The analysis shown in \cite{he2015delving} could be stated as \emph{the output signal of each MLP block should have the same variance as the input signal}. And since learning is performed with backpropagation, this same rule should apply to the gradients as well, meaning that \emph{the variance of the gradient of the input should also be equal to the variance of the gradient of the output of the MLP}. 

In this manner, \citet{he2015delving} demonstrated the methodology for initializing the weights of a deep neural network, thereby attaining performance on ImageNet classification that exceeds that of humans. This process entails the calculation of the ratio between the variance pre- and post-activation, called forward gain, as well as the ratio of variance with respect to the derivative of the activation, called backward gain. Remarkably, for the ReLU function, both forward and backward gains are equal to 2.

Recently, \citet{yang2025kolmogorovarnold} employed the same principle to train learnable rational activations. However, they encountered a challenge: the second-order moment has no closed formulation in the case of rational fractions. The authors' solution for ensuring the convergence of such rational activation networks consisted in initializing them by fitting the polynomial coefficients to a classical activation such as ReLU or SiLU \cite{ ramachandran2017searching, elfwing2018sigmoid}. Here, we  propose a solution to the aforementioned problem by employing orthogonal basis functions (Fig.~\ref{fig:schema}), specifically polynomial and trigonometric functions. Orthogonal basis functions in a chosen $L^2$ space, as will be elucidated in the subsequent sections, facilitate the calculation of the second-order moment integral, thereby yielding a closed and straightforward formula. Additionally, we demonstrate that rational functions are unnecessary, asserting that polynomial activation functions are sufficient.

More generally, the convergence of polynomial networks shown in this work proves that deep neural networks can be seen as multivariate polynomial mappings. Indeed, the successive layers of a feed-forward network activated by a polynomial activation can be seen as a composition of weighted sums of multivariate polynomials, ultimately resulting in a polynomial mapping. A parallel representation was made by \cite{zhang2018tropical} for ReLU-activated networks, demonstrating that they are tropical rational mappings. In a later section, we also explore tropical polynomial functions as activation functions. We demonstrate that these can be interpreted as the discrete convex conjugate of a learnable function, thus encoding the convex hull of its epigraph (the set of points lying on or above the function's graph). The contributions of this paper span theoretical proofs, technical developments, and empirical confirmations, and can be summarized in the following list:
\setlist{nolistsep}
\begin{itemize}[leftmargin=1cm]
    \item A novel variance-preserving initialization method is introduced for orthogonal learnable activations in neural networks. Assuming an orthonormal function basis, this method ensures that the output variances are unitary and match those of the derivative, leading to stable training.
    \item Empirically showing that deep neural networks like ConvNeXt \citep{liu2022convnet} and GPT-2 \citep{radford2019language} can be trained using orthogonal learnable activations for tasks like image classification on ImageNet1k \citep{deng2009imagenet} and language modeling on OpenWebText \citep{Gokaslan2019OpenWeb}. 
    The innovation eliminates the need for additional mechanisms (e.g., ReLU, SoftSign...) to maintain training stability.
    \item Proving in Appendix~\ref{appendix:mapping} that polynomially activated neural networks are polynomial mappings.
    \item Developing Hermite, Fourier, and Tropical activations, addressing finite-precision floating-point issues, and designing efficient parallel algorithms and kernels for their implementation.
\end{itemize}

\section{Related Work}

The use of polynomial activations has long been denigrated, probably by the rise of works such as \cite{pinkus1999approximation} and \cite{leshno1993multilayer} which have mathematically demonstrated that the universal approximation property is equivalent to the use of a non-polynomial activation function. The Universal Approximation Theorem \cite{cybenko1989approximation,hornik1990universal} holds for neural networks of arbitrary width and bounded depth. However, recent work such as \cite{kidger2020universal,gaoglobal} show that in the framework of bounded width and arbitrary depth, every nonaffine continuous function is possible to use in practice, including polynomial activation functions. We show empirically in this work that polynomial activations can converge in the context of large-scale deep networks and datasets, provided coefficients are learnable, and initialization is suitable.
The empirical demonstration of the effectiveness of polynomial activations made here was achieved without the use of other functions intended to regularize convergence, such as the SoftSign function borrowed from \cite{turian2009quadratic} and used in \cite{lokhande2020generating} for Hermite activations, or a ReLU function, or any normalization, as recently done in \cite{zhuo2025polynomial}.
This confirmation that polynomial activations are practicable opens the way to representing deep neural networks as multivariate polynomial mappings. As in \cite{kileel2019expressive} and \cite{kubjas2024geometry}, which see that these types of networks have greater expressive potential, we show in Appendix~\ref{appendix:mapping} that deep polynomially activated neural networks are indeed multivariate polynomial mappings.

The adoption of polynomial bases is further justified by recent theoretical advancements in neuroalgebraic geometry, particularly through the notions of neuromanifolds, which are the image of a polynomially activated neural network, and neurovarieties,  which are defined as the closure of neuromanifolds
in the Zariski topology \cite{amari1994information, amari2001geometrical, amari2006singularities, calin2020deep}. A key advantage of polynomial activations is that they can make neural networks identifiable: under appropriate assumptions, the network parameters are uniquely determined by the represented function, up to a finite set of permutations.

Recent results by \citet{usevich2025identifiability} provide a constructive proof for finite identifiability for MLPs with monomial activations: for a general choice of learnable parameters (i.e., outside a set of Lebesgue measure zero), the network admits a finite number of non-equivalent representations. The proof proceeds inductively, showing that if pairs of layers are identifiable, the whole network is identifiable. In particular, non-increasing-width networks are shown to be identifiable, though they may require large and linearly increasing polynomial degrees, which motivates stable training for higher-degree polynomial activations, as explored in our work. Similarly,  \citet{shahverdi2026learning} show that general polynomial activations in MLPs and CNNs are identifiable, and that singularities of their neuromanifolds correspond to sparse subnetworks. For MLPs, these singularities often coincide with critical points of the mean-squared error loss. Together, these results suggest that the algebraic structure of polynomials allows for a principled understanding of the loss landscape. This perspective is consistent with the findings of \citet{ainsworth2023git}, who studied neural network basins and demonstrated that the weights of one trained model can be permuted to align with those of another. Such alignment effectively establishes identifiability up to permutation symmetries, making the parametrization finite-to-one. 

Furthermore, quadratic forms can be interpreted as MLPs with degree-2 polynomial activations, which have recently been successfully employed in deep architectures \cite{fan2023expressivity, chen2025quadenhancer}. 

The subject of learnable activations has seen a resurgence thanks to the popularity enjoyed by the KAN article \cite{liu2025kan}. In Appendix~\ref{appendix:kan}, we'll digress for a while to explain how these are inspired by the Kolmogorov-Arnold theorem \cite{kolmogorov1957representation}. 
Further related work appears in Appendix~\ref{appendix:related}.

\section{Methods}

\subsection{Variance Preserving Initialization}
The variance-preserving principle \cite{he2015delving} mentioned in the introduction is expressed in the following.
Consider an input vector $x = (x_0, \dots, x_i, \dots, x_{C_{in}}) \in \mathbb{R}^{C_{in}}$, $C_{in} \in \mathbb{N}^*$, where all $x_i$ are mutually independent and uniformly distributed. Preserving the variance in an MLP layer with a learnable weight tensor $W$ of inner dimension $C_{in}$ and an activation function $F$ amounts to:
\begin{equation}
\operatorname{Var}[x]=C_{i n} \operatorname{Var}[W F(x)] 
\end{equation}
If we suppose that $x$ and $W$ are independent and of finite variance, we have:
\begin{equation}
\label{eq:variance}
    \operatorname{Var}[x]=C_{i n} \left(\operatorname{Var}[W] \cdot \mathbb{E}\left[F(x)^2\right] + \operatorname{Var}[F(x)] \cdot \mathbb{E}\left[W\right]^2\right)
\end{equation}
\begin{assumption}
We initialize $W$ such as $ \mathbb{E}\left[W\right] = 0$.
\label{ass:weight_zero mean}
\end{assumption}
Since we always assume that $W$ is initialized with a zero mean, Eq.~\ref{eq:variance} simplifies into:
\begin{equation}
    \operatorname{Var}[x]=C_{i n} \operatorname{Var}[W] \cdot \mathbb{E}\left[F(x)^2\right]
\end{equation}
Thus, to calculate the variance of the weights, we should calculate the following ratios:
\begin{definition}
\label{def:gain_forward} 
The forward gain of the MLP layer is defined by:
\begin{equation}
    \alpha =  {\operatorname{Var}[x]}\cdot{\mathbb{E}\left[F(x)^2\right]}^{-1}
\end{equation}
\end{definition}
Similarly, and in a backward manner, 
\begin{definition}
\label{def:gain_backward} 
The backward gain is the gain  of the derivative of the activation with respect to $x$, which we denote by $F'$, and is defined as:
\begin{equation}
    \alpha' =  {\operatorname{Var}[x]} \cdot {\mathbb{E}\left[F'(x)^2\right]}^{-1}
\end{equation}
\end{definition}
Since a deep neural network is essentially a composition of MLP layers, an appropriate initialization method must avoid reducing or amplifying the input signals \cite{he2015delving}.
\begin{assumption}
    From now on, we assume that both the input signal $x$ and its gradient $\Delta x$ follow a distribution of mean 0 and variance 1. 
\end{assumption}
Therefore, calculating the gains $\alpha$ and $\alpha'$ in an MLP (or equivalently, a convolution layer) involves calculating only the inverse of the second-order moments of the activation functions and their derivatives. Interestingly, for the ReLU function, we have $\alpha = \alpha' = 2$. Hence, the scaling of the standard deviation of the weights $W$ in \cite{he2015delving} by a factor $\sqrt{2/C_{in}}$. More details can be found in Appendix \ref{appendix:relu}.

Given an arbitrary activation, equality of forward and backward gains is not always achieved by default, as in ReLU. In the next section, we show the conditions for an activation function written in an orthonormal coordinate system to verify the forward-backward gain equality. To illustrate this point, we will calculate the second moment for Hermite and Fourier basis decompositions, given their compatibility with the normal and uniform distributions, respectively.

\subsection{Variance Preserving Initialization for the Hermite Activation Function}
\begin{definition}
$\forall n \in \mathbb{N}$, the probabilist Hermite polynomials can be defined as follows: 
\begin{equation}
\label{eq:hermite_def}
    \operatorname{He}_n(x)=(-1)^n e^{\frac{x^2}{2}} \frac{d^n}{d x^n} e^{-\frac{x^2}{2}}
\end{equation}
\end{definition}
$n$ is called the degree of the Hermite polynomial, and we have the first terms:
\begin{equation*}
\begin{aligned}
\mathrm{He}_0(x)&=1 & \mathrm{He}_1(x)&=x & \mathrm{He}_2(x)&=x^2-1 & \mathrm{He}_3(x)&=x^3-3 x 
\end{aligned}
\end{equation*}
Hermite polynomials constitute a suitable choice for calculating the moment of order 2 when $x$ follows a standard normal distribution $\mathcal{N}(0,1)$ as evidenced by the following property \ref{property:orthonormal_main}.

\begin{property}
$\forall m,n \in \mathbb{N}^2$, we have:
\begin{equation}
    \int_{-\infty}^{\infty} \operatorname{He}_m(x) \operatorname{He}_n(x) e^{-\frac{x^2}{2}} d x=\sqrt{2 \pi} n!\delta_{n m}
\end{equation}
With $\delta_{nm}$, the Kronecker delta.
\label{property:orthonormal_main}
\end{property}
\begin{definition}
\label{def:hermite_main}
We define the Hermite activation $F \colon \mathbb{R}\to\mathbb{R}$ with its learnable coefficients $\forall n \in \mathbb{N}$, $\forall k \in \llbracket 0, n \rrbracket$, $a_k \in \mathbb{R}$ as:
\begin{equation}
\label{eq:hermite}
     x \mapsto F(x)=\sum^{n}_{k=0} \frac{a_k}{k!}\operatorname{He}_k(x)
\end{equation}
\end{definition} 

\begin{theorem}
\label{thm:hermite}
Variance-preserving coefficient initialization of Hermite activation. Let 
\begin{equation}
     \forall k \in \llbracket 1,n \rrbracket \ a_k = 1 \ \text{and}\ a_0 = \sqrt{1 - \frac{1}{n!}}
\end{equation}
Then, using this initialization, the forward and backward gains become the same and are equal to:
\begin{equation}
    \alpha = \alpha' = \left(\sum_{k=0}^{n-1} \frac{1}{k!}\right)^{-1}
\end{equation}
\begin{proof}
    The proof is provided in Appendix~\ref{appendix:hermite}.
\end{proof}
\end{theorem}
\begin{corollary}
\label{coro:hermite}
In the limit case $n \to +\infty$, the coefficient initialization in Theorem \ref{thm:hermite} could be divided by a factor $\sqrt{e}$, with $e\approx2.7182 \ldots$, in order to have unitary forward and backward gains. $\forall k \in \llbracket 1,n \rrbracket:$
    \begin{equation}
      a_k = \frac{1}{\sqrt{e}} \ \text{and} \ a_0 = \frac{1}{\sqrt{e}} \sqrt{1 - \frac{1}{n!}}
    \end{equation}
\end{corollary}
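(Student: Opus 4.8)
The plan is to leverage Theorem~\ref{thm:hermite} directly, reducing the corollary to an elementary limit combined with a homogeneity (scaling) observation about the gains. From Theorem~\ref{thm:hermite} I already know that the initialization $a_k=1$ for $k\in\llbracket 1,n\rrbracket$ together with $a_0=\sqrt{1-1/n!}$ yields
\begin{equation*}
\alpha=\alpha'=\left(\sum_{k=0}^{n-1}\frac{1}{k!}\right)^{-1}.
\end{equation*}
The first step is to identify the limit of the denominator: the partial sums $\sum_{k=0}^{n-1}1/k!$ are exactly the truncations of the series defining $e$, so $\sum_{k=0}^{n-1}1/k!\to e$ as $n\to+\infty$, and consequently $\alpha=\alpha'\to 1/e$.

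The second step is to record how the gains transform when every coefficient is rescaled by a common factor $c$. Since the activation $F(x)=\sum_{k=0}^{n}(a_k/k!)\operatorname{He}_k(x)$ is linear in its coefficients, replacing each $a_k$ by $c\,a_k$ replaces $F$ by $cF$ and hence $F'$ by $cF'$. Recalling from Definitions~\ref{def:gain_forward} and~\ref{def:gain_backward} that under the standing assumption $\operatorname{Var}[x]=1$ one has $\alpha=\mathbb{E}[F(x)^2]^{-1}$ and $\alpha'=\mathbb{E}[F'(x)^2]^{-1}$, this rescaling multiplies both second moments by $c^2$, so both gains are homogeneous of degree $-2$:
\begin{equation*}
\alpha\mapsto c^{-2}\alpha,\qquad \alpha'\mapsto c^{-2}\alpha'.
\end{equation*}
Crucially, the coefficients displayed in the corollary are precisely $c\,a_k$ with $c=1/\sqrt{e}$ applied to the Theorem~\ref{thm:hermite} values, \emph{including} $a_0$, which becomes $\tfrac{1}{\sqrt{e}}\sqrt{1-1/n!}$, so this uniform scaling applies verbatim.

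Combining the two steps, the rescaled gains equal $c^{-2}=e$ times the Theorem values, i.e.
\begin{equation*}
\alpha=\alpha'=e\left(\sum_{k=0}^{n-1}\frac{1}{k!}\right)^{-1}\xrightarrow[n\to\infty]{}e\cdot\frac{1}{e}=1,
\end{equation*}
which is the claimed unitary limit. I do not anticipate a genuine obstacle here, as the argument is just a limit plus a degree-$-2$ homogeneity; the only points requiring care are conceptual rather than technical. First, the equality $\alpha=\alpha'=1$ is asymptotic, holding exactly only as $n\to+\infty$, while for finite $n$ the scaling by $\sqrt{e}$ leaves a residual factor $e\big(\sum_{k=0}^{n-1}1/k!\big)^{-1}$ slightly above $1$. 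Second, one must ensure the scaling is applied to every coefficient uniformly: the forward gain sees $a_0$ through $\mathbb{E}[F^2]$ whereas the backward gain does not, since the constant term has zero derivative; nonetheless the common factor $c$ factors out of both moments identically, so the forward--backward balance established in Theorem~\ref{thm:hermite} is preserved.
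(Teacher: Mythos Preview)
Your proposal is correct and follows precisely the natural route implied by the paper: the corollary is an immediate consequence of Theorem~\ref{thm:hermite} via the limit $\sum_{k=0}^{n-1}1/k!\to e$ together with the degree-$-2$ homogeneity of the gains in the coefficients, and the paper treats it as such without giving a separate argument. Your additional remarks on the finite-$n$ residual factor and on the uniform scaling of $a_0$ are accurate and go slightly beyond what the paper makes explicit.
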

\begin{remark}
The choice of an orthonormal family of functions depends on the input's probability distribution. For a normally distributed input, Hermite polynomials simplify the computation of second-order moments and related gains. For a uniform distribution over $[-\pi,\pi]$, trigonometric functions (Fourier series) are appropriate. If the input follows a Wigner semi-circle distribution (of measure $\sqrt{1-x^2}dx$), then the Chebyshev polynomials of the second kind are the suitable choice.
\end{remark}

\subsection{Variance Preserving Initialization for the Fourier Activation Function}
The forward and backward gains for a Hermite activation have been calculated under the assumption that the input $x$ follows a normal distribution, such that the initial coefficients provide equal gains. The subsequent analysis will establish the same result for a truncated Fourier series expansion of order $n \in \mathbb{N}$. 
\begin{assumption}
    The input \( x \) is assumed now to follow a uniform distribution on the interval \( [-\pi, \pi] \), with $\pi\approx3.1415 \ldots$,  denoted as $x \sim \mathcal{U}\left(-\pi, \pi\right)$.
\end{assumption}
\begin{definition}
\label{def:fourier_main}
We consider the following Fourier activation $F \colon  \mathbb{R} \to \mathbb{R}$:
\begin{equation}
\label{eq:fourier}
    x \mapsto F(x) = a_0 +
    \sum_{k=1}^{n} \frac{\left(a_k \cos(k x) + b_k \sin(k x)\right)}{k!}
\end{equation}
where \( (a_k)_{k \in \mathbb{N}} \) and \( (b_k)_{k \in \mathbb{N}^*} \) are real learnable coefficients.
\end{definition}
\begin{theorem}
\label{thm:fourier}
Variance-preserving coefficient initialization of Fourier activation. Let 
\begin{equation}
     \forall k \in \llbracket 1,n \rrbracket \ a_k = 1, \ b_k = 1, \ \text{and}\ a_0 = \sqrt{1 - \frac{1}{(n!)^2}}
\end{equation}
Then, using this initialization, the forward and backward gains become the same and are equal to:
\begin{equation}
    \alpha = \alpha' = \left(\sum_{k=0}^{n-1} \frac{1}{(k!)^2}\right)^{-1}
\end{equation}
\begin{proof}
    The proof is provided in Appendix~\ref{appendix:fourier}.
\end{proof}
\end{theorem}
\begin{corollary}
\label{coro:fourier}
In the limit case $n \to +\infty$, in order to have unitary forward and backward gains, the coefficient initialization in Theorem \ref{thm:fourier} could be divided by a factor $\sqrt{I_0(2)}$, with $I_\alpha(x)$ is the modified Bessel function of the first kind of order $\alpha$, and we have $I_0(2)\approx2.2795\ldots$ $\forall k \in \llbracket 1,n \rrbracket:$
    \begin{equation}
      a_k = \frac{1}{\sqrt{I_0(2)}}, \ b_k = \frac{1}{\sqrt{I_0(2)}}, \text{and} \ a_0 = \frac{1}{\sqrt{I_0(2)}} 
      \sqrt{1 - \frac{1}{(n!)^2}}
    \end{equation}
\end{corollary}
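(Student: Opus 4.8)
The plan is to build directly on Theorem~\ref{thm:fourier}, which already establishes that, for the stated initialization, the forward and backward gains coincide and equal $\left(\sum_{k=0}^{n-1} \frac{1}{(k!)^2}\right)^{-1}$. The corollary is then purely a statement about the $n\to\infty$ behaviour of this partial sum together with a homogeneity argument on the coefficients, so I would separate those two concerns cleanly.

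First I would identify the limit of the partial sum with the Bessel value. Recalling the power-series representation of the modified Bessel function of the first kind,
\begin{equation}
I_0(z) = \sum_{m=0}^{\infty} \frac{1}{(m!)^2}\left(\frac{z}{2}\right)^{2m},
\end{equation}
evaluation at $z=2$ collapses every power of $(z/2)$ to $1$ and gives $I_0(2) = \sum_{m=0}^{\infty} \frac{1}{(m!)^2}$. Since the terms are positive and the series converges absolutely (it is dominated by $\sum_m 1/m!$), the partial sums increase monotonically to $I_0(2)\approx 2.2795$, so that $\lim_{n\to\infty}\left(\sum_{k=0}^{n-1}\frac{1}{(k!)^2}\right)^{-1} = 1/I_0(2)$.

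Next I would exploit the homogeneity of the second-order moments in the coefficients. As in the proof of Theorem~\ref{thm:fourier}, orthogonality of $\{\cos(kx),\sin(kx)\}$ under $x\sim\mathcal{U}(-\pi,\pi)$ makes both $\mathbb{E}[F(x)^2]$ and $\mathbb{E}[F'(x)^2]$ homogeneous quadratic forms in the coefficient vector $(a_0,\dots,a_n,b_1,\dots,b_n)$. Hence dividing every coefficient by $\lambda=\sqrt{I_0(2)}$ replaces $F$ by $F/\lambda$, which scales both second moments by $\lambda^{-2}=1/I_0(2)$ and therefore multiplies each gain $\alpha=\operatorname{Var}[x]\cdot\mathbb{E}[F(x)^2]^{-1}$ and $\alpha'=\operatorname{Var}[x]\cdot\mathbb{E}[F'(x)^2]^{-1}$ by the common factor $I_0(2)$. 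Because both moments are rescaled identically, the equality $\alpha=\alpha'$ proved in the theorem is preserved and only the common value moves. Combining this with the previous step yields $\alpha=\alpha' = I_0(2)\cdot\bigl(\sum_{k=0}^{n-1} 1/(k!)^2\bigr)^{-1}\to 1$ as $n\to\infty$, which is exactly the claimed unitary gain, and the rescaled coefficients are precisely $a_k = 1/\sqrt{I_0(2)}$ and $a_0 = \sqrt{1-1/(n!)^2}\,/\sqrt{I_0(2)}$.

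The only genuinely non-routine step is recognizing $\sum_{k\ge 0} 1/(k!)^2$ as $I_0(2)$; everything else is scaling bookkeeping. The places to be careful, which I expect to be the mildest of obstacles, are confirming that the constant term $a_0$ is rescaled alongside the others even though it never enters $F'$, and noting that the limit and the rescaling commute precisely because the defining series converges absolutely.
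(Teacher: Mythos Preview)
Your proposal is correct and follows essentially the same route the paper implicitly takes: the corollary is stated without a separate proof and is treated as immediate from Theorem~\ref{thm:fourier}, relying on exactly the two observations you make explicit, namely that $\sum_{k\ge 0}1/(k!)^2=I_0(2)$ via the power series of $I_0$, and that uniformly rescaling all coefficients by $1/\sqrt{I_0(2)}$ multiplies both second moments by $1/I_0(2)$ and hence both gains by $I_0(2)$. Your write-up is simply a more detailed version of what the paper leaves as an obvious consequence.
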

\begin{remark}
In both Definitions \ref{def:hermite_main} and \ref{def:fourier_main}, the terms inside the sum are scaled by a factor of $k!$, yielding exponential series. In practice, it is possible to scale the terms using other converging series such as $k^p$ with $p > 1$. We experimented with this last alternative and observed no statistically significant impact on loss convergence, though we did observe better stability for higher polynomial degrees in the exponential variant.
\end{remark}
\subsection{Variance Preserving Initialization for the Tropical Activation Function}
\begin{definition}
The max-tropical semiring  $\mathbb{T}$ is the semiring  $\mathbb{T}=(\mathbb{R} \cup\{+\infty\}, \oplus, \otimes)$, with the operations, $\forall x,y \in {\mathbb{R}\cup\{+\infty\}}^2$: 
\begin{equation}
x \oplus y:=\max \{x, y\} \quad\text{and}\quad x \otimes y:=x+y
\end{equation}
Equivalently, we could define the min-tropical semiring by substituting the max operation in $\oplus$ with a $\min$ operation.
By extension, we define for all $a \in \mathbb{N}$ the tropical power of $x$ raised to $a$ as multiplying $x$ to itself $a$ times:
\begin{equation}
x^{\otimes a}:=x \otimes \cdots \otimes x=a \cdot x
\end{equation}
\end{definition}
\begin{definition}
The \emph{tropicalization} of a polynomial of degree $n \in \mathbb{N}$ is defined as $F \colon \mathbb{R} \mapsto \mathbb{R}$, with $\forall n \in \mathbb{N}$, $\forall k \in \llbracket 0, n \rrbracket$ $a_k \in \mathbb{R}$ are the polynomial learnable coefficients:
    \begin{align}
    x &\mapsto  F(x) = \bigoplus_{k=0}^n a_k \otimes x^{\otimes k} := \max_{k=0}^{n} \left\{ a_k + kx \right\}
\end{align}
With $\max\limits_{k=0}^{n} \left\{ a_k + kx \right\}:= \max ( a_0, a_1 + x, \ldots,$ $ a_n + nx )$.
\end{definition}
\begin{definition} \emph{Convex conjugate (Legendre-Fenchel).} Let $x \in \mathbb{R}$, $f^* \colon \mathbb{R} \to \mathbb{R}$ is the convex conjugate of $f\colon \mathbb{R} \to \mathbb{R}$ if and only if: 
\begin{equation}
f^*(x) = \sup_{k \in \mathbb{R}} \{ kx - f(k)\} 
\end{equation}
\end{definition}
\begin{theorem}
\label{thm:tropical}
Variance-preserving coefficient initialization of Tropical activation. Let 
\begin{equation}
     \forall k \in \llbracket 0,n \rrbracket \ a_k = 1 
\end{equation}
Then, applying this initialization to the limit case of $n\to\infty$ yields an equal unitary gain both forward and backward for the following "scaled" definition of the tropical activation:
\begin{equation}
x \mapsto  F(x) = \frac{\sqrt{2}}{n} \max_{k=0}^{n} \left\{ a_k + kx \right\}
\end{equation}
\begin{proof}
    The proof is provided in Appendix~\ref{appendix:tropical}  .
\end{proof}
\end{theorem}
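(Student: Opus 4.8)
The plan is to first make the initialized activation fully explicit, then recognize it as a rescaled ReLU (plus a vanishing constant) so that both gains reduce to the classical ReLU computation of \cite{he2015delving}. Substituting $a_k = 1$ for all $k \in \llbracket 0, n\rrbracket$ into the tropical sum gives $\max_{k=0}^{n}\{1 + kx\} = 1 + \max_{k=0}^{n}\{kx\}$. Since $kx$ is monotone in $k$ for a fixed sign of $x$, the inner maximum is attained at $k=n$ when $x \geq 0$ and at $k=0$ when $x < 0$, i.e. $\max_{k=0}^{n}\{kx\} = n\max(x,0)$. Therefore the scaled activation collapses to
\begin{equation}
F(x) = \frac{\sqrt{2}}{n}\bigl(1 + n\max(x,0)\bigr) = \frac{\sqrt{2}}{n} + \sqrt{2}\,\operatorname{ReLU}(x),
\end{equation}
a scaled ReLU offset by a constant of order $1/n$.

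Next I would compute the forward gain. Expanding the square and using the standing assumption that $x$ has mean $0$ and variance $1$ and is symmetric about the origin (so that $\mathbb{E}[\operatorname{ReLU}(x)^2] = \tfrac{1}{2}\mathbb{E}[x^2] = \tfrac{1}{2}$ and $\mathbb{E}[\operatorname{ReLU}(x)]$ is a finite constant), one obtains
\begin{equation}
\mathbb{E}[F(x)^2] = \frac{2}{n^2} + \frac{4}{n}\,\mathbb{E}[\operatorname{ReLU}(x)] + 2\,\mathbb{E}[\operatorname{ReLU}(x)^2].
\end{equation}
As $n \to \infty$ the first two terms vanish and the last equals $1$, so $\mathbb{E}[F(x)^2]\to 1$ and the forward gain $\alpha = \operatorname{Var}[x]\cdot\mathbb{E}[F(x)^2]^{-1} \to 1$.

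For the backward gain I would differentiate $F$. Away from the single kink at $x=0$ the active index in the maximum is either $0$ or $n$, so $F'(x) = \tfrac{\sqrt{2}}{n}\cdot 0 = 0$ for $x < 0$ and $F'(x) = \tfrac{\sqrt{2}}{n}\cdot n = \sqrt{2}$ for $x > 0$; that is, $F'(x) = \sqrt{2}\,\mathbf{1}_{\{x>0\}}$ exactly, for every finite $n$. Hence $\mathbb{E}[F'(x)^2] = 2\,\mathbb{P}(x>0) = 1$ by symmetry, giving $\alpha' = 1$ for all $n$ and in particular in the limit. Combining the two computations yields $\alpha = \alpha' = 1$ as $n\to\infty$.

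The main obstacle — or rather the point demanding care — is less the algebra than the modelling hypotheses. The function is piecewise linear and non-differentiable at $x=0$, but since this is a measure-zero event the expectation defining $\alpha'$ is unaffected. More importantly, reducing everything to the ReLU moments requires the input to be symmetric about $0$ so that $\mathbb{P}(x>0)=\tfrac{1}{2}$ and $\mathbb{E}[x^2\mathbf{1}_{\{x>0\}}]=\tfrac{1}{2}\operatorname{Var}[x]$; this is exactly the hypothesis under which the ReLU gains equal $2$ in \cite{he2015delving}. The role of the prefactor $\sqrt{2}/n$ is then transparent: the factor $\sqrt{2}$ converts the ReLU gain of $2$ into a unitary gain, while the $1/n$ suppresses the constant offset that would otherwise spoil the forward moment, which is precisely why the equality $\alpha=\alpha'=1$ holds only in the limit $n\to\infty$.
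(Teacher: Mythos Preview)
Your proposal is correct and follows essentially the same route as the paper: reduce the initialized tropical activation to $\frac{\sqrt{2}}{n} + \sqrt{2}\,\operatorname{ReLU}(x)$, then invoke the ReLU second-moment computations (forward and backward both equal $\tfrac{1}{2}$) to conclude unitary gains in the limit $n\to\infty$. If anything, your version is slightly more explicit than the paper's, which simply writes the piecewise form, takes the limit to $\sqrt{2}\max(0,x)$, and appeals to the earlier ReLU calculation without expanding the cross terms.
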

The tropical polynomial activation can be viewed as a generalization of the ReLU activation. Furthermore, it can be interpreted as a discrete version of the convex conjugate of a function $f$ whose values at the natural integers $k \in \mathbb{N}$ are $f(k) = -a_k$ effectively encoding the convex hull of the epigraph of $f$, as illustrated in Figures \ref{fig:gelu_vs_tropical} and \ref{fig:x_squared_tropical}.

\subsection{Practical Implementation}
In what follows, we outline the considerations we have taken to implement Hermite, Fourier, and Tropical polynomial activations efficiently in PyTorch.

\textbf{Weight decay.} An important aspect of training learnable activations is that their learnable coefficients should be trained without weight decay, as it could bias them toward zero. 

\textbf{Explicit Hermite formula.} We can show by induction that the following definition is equivalent to the one in Eq.~\ref{eq:hermite_def}:
\begin{equation}
\label{eq:hermite_explicit}
    \frac{\operatorname{He}_n(x)}{n!}=\sum_{m=0}^{\left\lfloor\frac{n}{2}\right\rfloor} \frac{(-1)^m}{m!(n-2 m)!} \frac{x^{n-2 m}}{2^m}
\end{equation}
We can see that the formula~\ref{eq:hermite_explicit} can be parallelized, and is, therefore, the core of the algorithm we have developed in native PyTorch to compute Hermite activations (see Algorithm~\ref{alg:hermite_forward}).

\textbf{A dedicated Hermite kernel.}
\label{hermite_rec}
Along with the parallel implementation of the Hermite activation, we developed a dedicated kernel that leverages the derivation established in \ref{prop:hermite_deriv2} for the backward pass exploiting the fact that the derivative of a polynomial is a polynomial of lower degree  and the following recurrence formula in the forward pass to optimize performance and memory usage (see Algorithm~\ref{alg:hermite_cuda_forward}):
\begin{equation}
\label{eq:hermite_rec}
    {\displaystyle\operatorname {He} _{n+1}(x) =x\operatorname {He} _{n}(x)-n\operatorname {He} _{n-1}(x)}
\end{equation}
\textbf{Alternative Fourier formula.} The definition of Fourier activation given in Definition~\ref{def:fourier_main} is under the Sine-Cosine form. In practice, we use the following equivalent Amplitude-Phase formulation (see Algorithm~\ref{alg:fourier_forward}):
\begin{equation}
    x \mapsto F(x) = a_0 + \sqrt2\sum_{k=1}^{n} \frac{a_k \cos(f_k x - \phi_k)}{k!}
\end{equation}
as it is less expensive in terms of FLOP. The learnable parameters here are initialized as follows: $\forall k \in \mathbb{N}^* f_k = k, \phi_k = \frac{\pi}{4}$ and $a_k$ and $a_0$ initialized as in \ref{coro:fourier}.
In our implementation of Fourier activation, not only were the coefficients learnable, but also the frequencies, yielding to what is known as a ``cosine basis'' \cite{mallat2009wavelet} rather than the Fourier series.

\textbf{Initializing by fitting a classical activation Function.}\label{sec:finetuning}
Using a family of orthonormal functions permits an easy calculation of the initialization gain without resorting to the trick of fitting a function to an activation whose gain is known or easy to calculate, as in \cite{yang2025kolmogorovarnold} with Safe Padé activation \cite{Molina2020Padé}.
However, in some cases, such as continuing or fine-tuning a model that was pretrained with a classical activation, using one of the learnable activations presented here to fit a classical activation could still be relevant. By \emph{fitting} we mean performing a Lagrange interpolation. This could be accomplished via a direct method involving the inversion of a Vandermonde matrix (Lagrange, or Newton's methods), or by an iterated gradient descent method (Gauss-Jordan method).

Two precautions need to be taken, however, when performing such interpolation. The first concerns the maximum degree that should be considered in order to fit the function on a given interval. Figure~\ref{fig:gelu_vs_hermite} (left) shows how far a Hermite activation of degree 3 can be accurately fitted, while Figure~\ref {fig:gelu_vs_hermite} (right) shows the extent to which a Hermite activation of degree 8 can be accurately fitted. The second precaution concerns the derivative of the activation with respect to the derivative of the target function to be interpolated. A Lagrange interpolation of a function is not always sufficient to fit its $k$-th derivatives. If we want to interpolate a function and its derivative(s) simultaneously, we refer to this as a Hermite interpolation. 

\begin{figure}[!htbp]
    \centering
    \begin{minipage}{0.48\textwidth}
        \centering
        \begin{tikzpicture}[scale=1.2]
            \begin{axis}[
                xlabel={$x$-axis}, 
                ylabel={F(x)}, 
                legend pos=north west
            ]
            
            \addplot[very thick, blue] table[x=x, y=GELU, col sep=space] {figs/hermite_3.txt};
            \addlegendentry{GELU}
    
            \addplot[very thick, orange] table[x=x, y=GELU_deriv, col sep=space] {figs/hermite_3.txt};
            \addlegendentry{GELU deriv.}

            \addplot[very thick, red, dashed] table[x=x, y=Hermite, col sep=space] {figs/hermite_3.txt};
            \addlegendentry{Hermite}      
            
            \addplot[very thick, green, densely dashed] table[x=x, y=Hermite_deriv, col sep=space] {figs/hermite_3.txt};
            \addlegendentry{Hermite deriv.}

            \end{axis}
        \end{tikzpicture}
    \end{minipage}%
    \hfill
    \begin{minipage}{0.48\textwidth}
        \centering
        \begin{tikzpicture}[scale=1.2]
            \begin{axis}[
                xlabel={$x$-axis}, 
                ylabel={F(x)}, 
                legend pos=north west
            ]
            
            \addplot[very thick, blue] table[x=x, y=GELU, col sep=space] {figs/hermite_8.txt};
            \addlegendentry{GELU}
    
            \addplot[very thick, orange] table[x=x, y=GELU_deriv, col sep=space] {figs/hermite_8.txt};
            \addlegendentry{GELU deriv.}
            
            \addplot[very thick, red, dashed] table[x=x, y=Hermite, col sep=space] {figs/hermite_8.txt};
            \addlegendentry{Hermite}
    
            \addplot[very thick, green, densely dashed] table[x=x, y=Hermite_deriv, col sep=space] {figs/hermite_8.txt};
            \addlegendentry{Hermite deriv.}
            \end{axis}
        \end{tikzpicture}
    \end{minipage}
    \caption{Fitting a GELU with a Hermite Activation of degree 3 (left) and of degree 8 (right).}
        \label{fig:gelu_vs_hermite}    
\end{figure}

In the case of the Fourier activation, we observe in Figure~\ref{fig:gelu_vs_fourier} (left) that a Lagrange interpolation is not sufficient and that higher-order frequencies occur in the derivative approximation. This phenomenon can be likened to aliasing and can be circumvented by performing a simple Hermite interpolation instead of a Lagrange interpolation, as shown in Figure~\ref{fig:gelu_vs_fourier} (right). \citet{berrut2007fourier} examined the exact solutions to this last problem for trigonometric functions.
\begin{figure}[!htbp]
    \centering
    \begin{minipage}{0.48\textwidth}
        \centering
        \begin{tikzpicture}[scale=1.2]
            \begin{axis}[
                xlabel={$x$-axis}, 
                ylabel={F(x)}, 
                legend pos=north west
            ]
            
            \addplot[very thick, blue] table[x=x, y=GELU, col sep=space] {figs/fourier_lagrange_6.txt};
            \addlegendentry{GELU}
    
            \addplot[very thick, orange] table[x=x, y=GELU_deriv, col sep=space] {figs/fourier_lagrange_6.txt};
            \addlegendentry{GELU deriv.}
            
            \addplot[very thick, red, dashed] table[x=x, y=Fourier, col sep=space] {figs/fourier_lagrange_6.txt};
            \addlegendentry{Fourier}
    
            \addplot[thin, green] table[x=x, y=Fourier_deriv, col sep=space] {figs/fourier_lagrange_6.txt};
            \addlegendentry{Fourier deriv.}
    
            \end{axis}
        \end{tikzpicture}
    \end{minipage}%
    \hfill
    \begin{minipage}{0.48\textwidth}
        \centering
        \begin{tikzpicture}[scale=1.2]
            \begin{axis}[
                xlabel={$x$-axis}, 
                ylabel={F(x)}, 
                legend pos=north west
            ]
            
            \addplot[very thick, blue] table[x=x, y=GELU, col sep=space] {figs/fourier_6.txt};
            \addlegendentry{GELU}
    
            \addplot[very thick, orange] table[x=x, y=GELU_deriv, col sep=space] {figs/fourier_6.txt};
            \addlegendentry{GELU deriv.}
            
            \addplot[very thick, red, dashed] table[x=x, y=Fourier, col sep=space] {figs/fourier_6.txt};
            \addlegendentry{Fourier}
    
            \addplot[very thick, green, densely dashed] table[x=x, y=Fourier_deriv, col sep=space] {figs/fourier_6.txt};
            \addlegendentry{Fourier deriv.}
    
            \end{axis}
        \end{tikzpicture}

    \end{minipage}
    \caption{Lagrange interpolation (left) and Hermite interpolation (right) of a GELU with a Fourier Activation of degree 6.}
    \label{fig:gelu_vs_fourier}
\end{figure}

The success in fitting classical activations with Padé approximants in \cite{yang2025kolmogorovarnold} could be attributed to the fact that a Padé approximant is by definition the rational function  that coincides with a function to be interpolated to the highest possible order, thus naturally achieving a Hermite interpolation. A good fit of a non-convex function by a tropical polynomial activation is impossible since tropical polynomials are convex by definition. Therefore, in Appendix~\ref{appendix:tropical_rational} we show how rational tropical activations (an extension of tropical polynomials) could, in principle, achieve this fitting.

\section{Experiments}
\subsection{Preliminary image classification results on CIFAR10}
We trained ConvNeXt-T \citep{liu2022convnet} on CIFAR-10 \citep{krizhevsky2009learning} for 300 epochs, averaging results over 10 random seeds. The experimental setup and results for CIFAR10 classification can be found in Appendix~\ref{appendix:cifar10}. The three proposed learnable activations consistently outperformed baseline activations on test metrics. Results are shown in Table~\ref{res:cifar_pretrain} and Figures~\ref{fig:cifar_train}, \ref{fig:cifar_acc1}, and \ref{fig:cifar_acc5}.
\subsection{Decision boundaries on noisy classification datasets}
We compared the decision boundaries of four single-layer neural networks trained on a simple, noisy classification dataset, each using a different activation function to evaluate how the choice of activation function affects classification behavior and boundary smoothness. Details of the visualizations of decision boundaries on multiple noisy datasets are provided in Appendix~\ref{appendix:noisy_boundaries}.
\begin{figure}[!htbp]
    \centering
    \begin{subfigure}[b]{0.45\textwidth}
        \includegraphics[width=\textwidth]{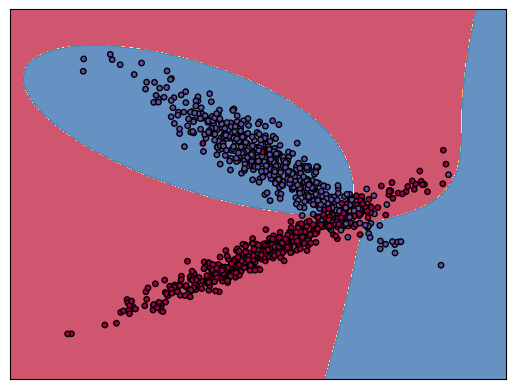}
        \caption{Hermite}
    \end{subfigure}
    \hfill
    \begin{subfigure}[b]{0.45\textwidth}
        \includegraphics[width=\textwidth]{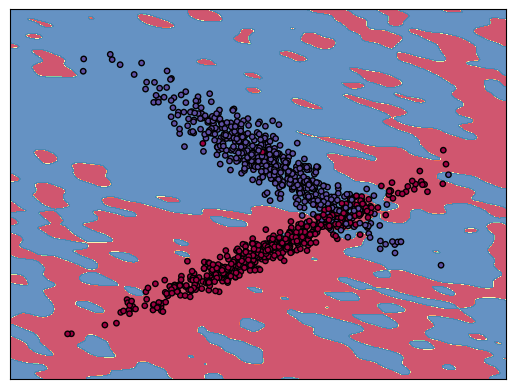}
        \caption{Fourier}
    \end{subfigure}
    \hfill
    \begin{subfigure}[b]{0.45\textwidth}
        \includegraphics[width=\textwidth]{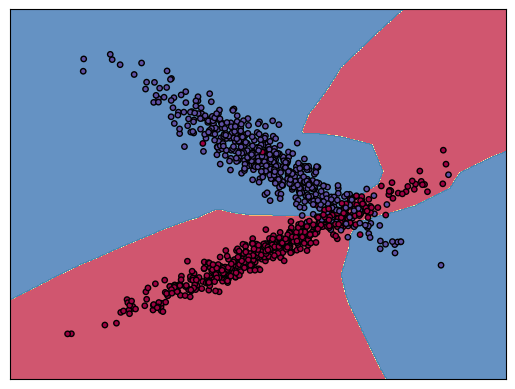}
        \caption{Tropical}
    \end{subfigure}
    \hfill
    \begin{subfigure}[b]{0.45\textwidth}
        \includegraphics[width=\textwidth]{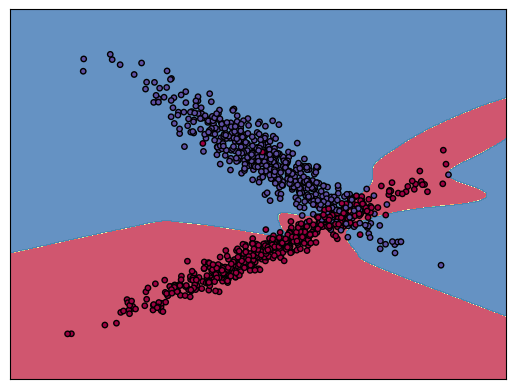}
        \caption{GELU}
    \end{subfigure}
    \caption{Decision boundaries for different activation functions}
    \label{fig:decision_boundaries}
\end{figure}
\subsection{Vision Task: ConvNeXt-T Image Classification on ImageNet1k}
We evaluated the ConvNeXt-T model on the ImageNet1k dataset \cite{deng2009imagenet} for single-class image classification. The baseline ConvNeXt-T model employed GELU as the activation function in its MLP blocks. To analyze the impact of our learnable activations, we replaced GELU with Hermite polynomial, Fourier trigonometric, and Tropical polynomial activation functions under our proposed initialization scheme. Each model was trained under identical conditions with fixed random seeds to ensure reproducibility and comparability. The evaluation metrics included: training loss, Top-1 and Top-5 validation accuracy. Table~\ref{res:imagenet} and Figures~\ref{fig:imagenet_train}, \ref{fig:imagenet_acc1}, and \ref{fig:imagenet_acc5} summarize our results. We reproduced all experiments using five different random seeds. For each trial, we report the mean ± standard deviation at a fixed epoch. The experimental setup followed the approach and hyperparameter configuration detailed in \cite{liu2022convnet}. 

\textbf{Ablation Studies.} Additionally, ablation studies were performed on this vision task to establish the impact of the degree for the learnable activations (Table~\ref{res:ablation_degree}), the impact of our proposed initialization scheme (Table~\ref{res:ablation_init}), and if making the activation coefficients learnable was useful (Table~\ref{res:ablation_learnable}). Higher degrees generally improved performance, with all proposed activations showing consistent improvements in Top-1 and Top-5 accuracy as the degree increased. Furthermore, making activation coefficients learnable consistently resulted in better performance across all activation functions. Initialization with the proposed method led to improvements, especially for Hermite activation, where our derived initialization scheme outperformed GELU-based initialization.

\begin{table*}[!htbp]
\caption{Training and validation results of ConvNeXt-T (28M) model on ImageNet-1k classification. Values are reported as mean ± standard deviation over 5 seeds. p-values (two-tailed Student’s t-test assuming equal variances) are for Val Top-1 accuracy compared to GELU.}
\label{res:imagenet}
\begin{center}
\resizebox{\textwidth}{!}{%
\begin{tabular}{lccccccc}
\toprule
\textbf{Act.} & \textbf{Deg.} & \textbf{Train Loss $\downarrow$} & \textbf{Val Top-1(\%) $\uparrow$} & \textbf{Val Top-5(\%) $\uparrow$} & \textbf{FLOP} & \textbf{FLOP/Act.} & \textbf{p-value (Top-1)} \\
\midrule
GELU     & - & 2.824 ± 0.0051  & 82.06 ± 0.072 & 95.92 ± 0.038 & 4.57G & 12             & -     \\
Tropical & 6 & 2.854 ± 0.0080  & 82.17 ± 0.063 & 95.95 ± 0.072 & 4.62G & 3d + 1 = 19    & 0.0345 (*) \\
Fourier  & 6 & \textbf{2.759 ± 0.0167}  & 81.64 ± 0.153 & 95.47 ± 0.049 & 4.83G & 7d + 1 = 43    & 0.0005 (***) \\
Hermite  & 3 & 2.788 ± 0.0072  & \textbf{82.22 ± 0.064} & \textbf{95.97 ± 0.045} & 4.58G & 4d + 1 = 13 & 0.0062 (**) \\
\bottomrule
\end{tabular}
}
\end{center}
\end{table*}

\subsection{Language Task: GPT-2 (124M) Next Token Prediction on OpenWebText}
For the language modeling task, we trained the GPT-2 model \cite{radford2019language} on the OpenWebText dataset \cite{Gokaslan2019OpenWeb} for next-token prediction. The baseline GPT-2 used GELU activation, and we compared it against SiLU \citep{elfwing2018sigmoid}, Hermite, Fourier, and Tropical activations under our proposed initialization scheme. All models were trained with identical hyperparameters and initialization seeds to ensure consistent and reproducible comparisons. The evaluation metrics included: training and test losses and perplexities (which are simply the exponential of the loss). Table~\ref{res:gpt} and Figures~\ref{fig:owt_train} and \ref{fig:owt_val} summarize our results. We reproduced all experiments using five different seeds. For each trial, we report the mean ± standard deviation at a fixed iteration. The experimental design followed the guidelines established in \cite{radford2019language} and the open source reproduction available at \cite{Karpathy2022}. We used a total batch size of $786,432$ of which a context length of $1024$ tokens for a total of $210,000$ iterations.

\begin{table*}[!htbp]
\caption{Training and validation results for next-token prediction using GPT-2 (124M) model with different activations. Values are reported as mean ± standard deviation over 5 different seeds. Perplexity is computed as $\exp(\text{loss})$. p-values (two-tailed Student’s t-test assuming equal variances) compare each activation’s validation loss against GELU.}
\label{res:gpt}
\begin{center}
\resizebox{\textwidth}{!}{%
\begin{tabular}{lcclcccc}
\toprule
\textbf{Act.} & \textbf{Deg.}  & \textbf{Train PPL $\downarrow$} & \textbf{Train Loss $\downarrow$} & \textbf{Val PPL $\downarrow$} & \textbf{Val Loss $\downarrow$} & \textbf{FLOP} & \textbf{p-value (Val Loss)} \\
\midrule
GELU     & - & 19.003 ± 0.156 & 2.944 ± 0.0082 & 19.319 ± 0.076 & 2.961 ± 0.0039 & 87.52G & - \\
SiLU     & - & 19.324 ± 0.106 & 2.962 ± 0.0055 & 19.664 ± 0.088 & 2.979 ± 0.0045 & 87.37G & 0.0001 (***) \\
Tropical & 6 & 18.840 ± 0.107 & 2.936 ± 0.0057 & 19.027 ± 0.055 & 2.946 ± 0.0029 & 87.75G & 0.0001 (***) \\
Fourier  & 6 & 18.761 ± 0.071 & 2.930 ± 0.0038 & 18.965 ± 0.154 & 2.941 ± 0.0086 & 88.69G & 0.0014 (**) \\
Hermite  & 3 & \textbf{18.678 ± 0.093} & \textbf{2.926 ± 0.0049} & \textbf{18.821 ± 0.293} & \textbf{2.932 ± 0.0175} & 87.56G & 0.0067 (**) \\
\bottomrule
\end{tabular}
}
\end{center}
\end{table*}

All experiments were conducted under fixed configurations to ensure that any observed differences were solely due to the choice of activation function, allowing for fair and reproducible comparisons\footnote{The code to reproduce the experiments is available at: \href{https://github.com/K-H-Ismail/torchortho}{https://github.com/K-H-Ismail/torchortho}}.
\subsection{Finetuning experiment on CIFAR10}
Using the insights from Sec.~\ref{sec:finetuning}, we conducted a fine-tuning experiment in a transfer learning setting. Specifically, we investigated whether initializing a learnable activation by fitting a classical one, using Hermite interpolation, can improve performance when adapting a pretrained model to a new dataset. This experiment complements our theoretical analysis by demonstrating how fitting classical activations can serve as an effective initialization strategy. The experimental procedure and results for activation finetuning are available in Appendix~\ref{appendix:finetuning}.

\section{Parameters, Memory, FLOP Count, and Execution Time}

The proposed activation functions introduce a negligible number of additional parameters. For example, Hermite activations of degree $d=3$ add only 72 parameters to ConvNeXt-Tiny (28M total), corresponding to 0.0002\%, with similarly minimal overheads for Tropical and Fourier activations. Hermite activations leverage a recursive formulation (Alg.~\ref{alg:hermite_cuda_forward}) that reduces both FLOP and required memory (vRAM) complexity from $\mathcal{O}(d^2)$ to $\mathcal{O}(d)$, requiring only simple arithmetic per term. Fourier and Tropical activations also scale linearly with degree ($\mathcal{O}(d)$), as illustrated in Figure~\ref{fig:degree_benchmark_cpu} and Table~\ref{tab:degree_benchmark_cpu}, measured on CPU. On GPUs, smaller degrees benefit from vectorized computation, leading to reduced runtime and near-constant $\mathcal{O}(1)$ scaling for low degrees (Figure~\ref{fig:degree_gpu_timings}, Table~\ref{tab:degree_gpu_timings}).

We further evaluated average training times per epoch across varying MLP widths and depths (Table~\ref{tab:width_depth_timings}). The proposed activations can incur higher latency compared to GELU in deep networks, but are often faster in shallower ones. Slowdowns relative to GELU were analyzed across widths (Figure~\ref{fig:slowdown_vs_width}) and depths (Figure~\ref{fig:slowdown_vs_depth}). Slowdowns are largely independent of width but increase approximately linearly with depth, with Hermite activations showing the largest slope, followed by Fourier and Tropical. This suggests that the proposed activations are more suitable for shallow, wide MLPs. This observation aligns with Appendix~\ref{appendix:mapping}, where a polynomially activated MLP of arbitrary depth is shown to be equivalent to a high-degree single-layer multivariate polynomial.


\section{Discussion}

The results presented in this paper demonstrate the potential of using learnable activation functions based on orthogonal function bases and tropical polynomials in large-scale neural network tasks. Our experiments on ImageNet-1K and OpenWebText with deep models such as ConvNeXt and GPT-2 show for the first time that such activations can lead to improvements over traditional static functions like ReLU and GELU, both in terms of image classification and language modeling. 

This challenges the long-standing notion that polynomial activations are inherently unsuitable for deep learning, as demonstrated by prior work. Our approach provides empirical evidence that, with appropriate initialization, polynomial activations can indeed be competitive. One of the key takeaways from our findings is the effectiveness of our proposed variance-preserving initialization scheme. The choice of orthogonal functions plays an essential role in achieving a closed-form expression for the second-order moment. Furthermore, the use of tropical polynomials, which are not orthogonal, introduces a FLOP-light alternative approach to polynomial activations.

While our approach shows promise, there are several avenues for future exploration. Extending the framework to other activation families, such as wavelets, is straightforward.  Multiplying the Hermite activation presented in this work  by the term $\exp{(-x^2/2)}$ gives what is known as Hermitian wavelets \cite{brackx2008hermitian}, and applying the same to the Fourier activation yields the Morlet wavelet \cite{grossmann1984decomposition} (or Gabor wavelet \cite{gabor1946theory}). Wavelets retain good orthogonal properties with respect to the adequate scalar product, and the calculation of the second moment is slightly modified to take account of the additional decaying exponential term.
Using wavelet activations instead of polynomials could enhance variance stability by providing finite function support, with potential bio-plausibility implications. By expressing a Fourier series in its complex form, a network with Fourier activation can be viewed as a complex-valued neural network, offering a framework for modeling neuronal synchronization through the phase and amplitude relationships of oscillatory brain activity. Extension to other non-orthogonal functions, such as rational functions, could be done, for example, by means of a Laplace transform of the Fourier activation.

\section{Conclusion}

In this work, we introduced a novel framework for integrating learnable activation functions based on orthogonal function bases and tropical polynomials into deep neural networks, addressing challenges like variance preservation and stable gradient flow. Extensive experiments with the ConvNeXt model on ImageNet1k and the GPT-2 model on OpenWebText showed that learnable polynomial activations match or exceed traditional activation functions during large-scale training and fine-tuning on smaller tasks, demonstrating their practical viability and challenging conventional beliefs about polynomial activations in neural networks. Importantly, we adopt a broad notion of “polynomial”, encompassing not only classical algebraic polynomials but also trigonometric expansions and tropical polynomials. Our results pave the way for representing deep neural networks as polynomial mappings whose hypothesis classes correspond to geometric objects akin to algebraic varieties, or, in the tropical setting, piecewise-linear polyhedral complexes, with future work focused on exploring a careful relaxation of these last.

\section*{Acknowledgements}
In alphabetical order, we thank Emile de Bruyn, Jan Ebert, Jiangtao Wang, and Oleg Filatov for their helpful discussions and feedback on this manuscript. We also thank David Guo and @iiisak for helpful technical discussions. This work would not have been possible without financial and computational support. This research was supported by the German Federal Ministry for Economic Affairs and Climate Action through the project “NXT GEN AI METHODS” and by the Helmholtz AI Cooperation Unit
(HAICU) through the Helmholtz Foundation Model Initiative as a part of the Synergy Unit. We gratefully acknowledge the Gauss Centre for Supercomputing e.V. (www.gauss-centre.eu) for funding this project by providing computing time on the Supercomputers JUWELS and JURECA at Jülich Supercomputing Centre (JSC).

\bibliography{iclr2026_conference}
\bibliographystyle{iclr2026_conference}

\newpage
\appendix
\section{Schematic of Basis-MLP}
\begin{figure}[ht]
\begin{center}
\centerline{\includegraphics[scale=0.2]{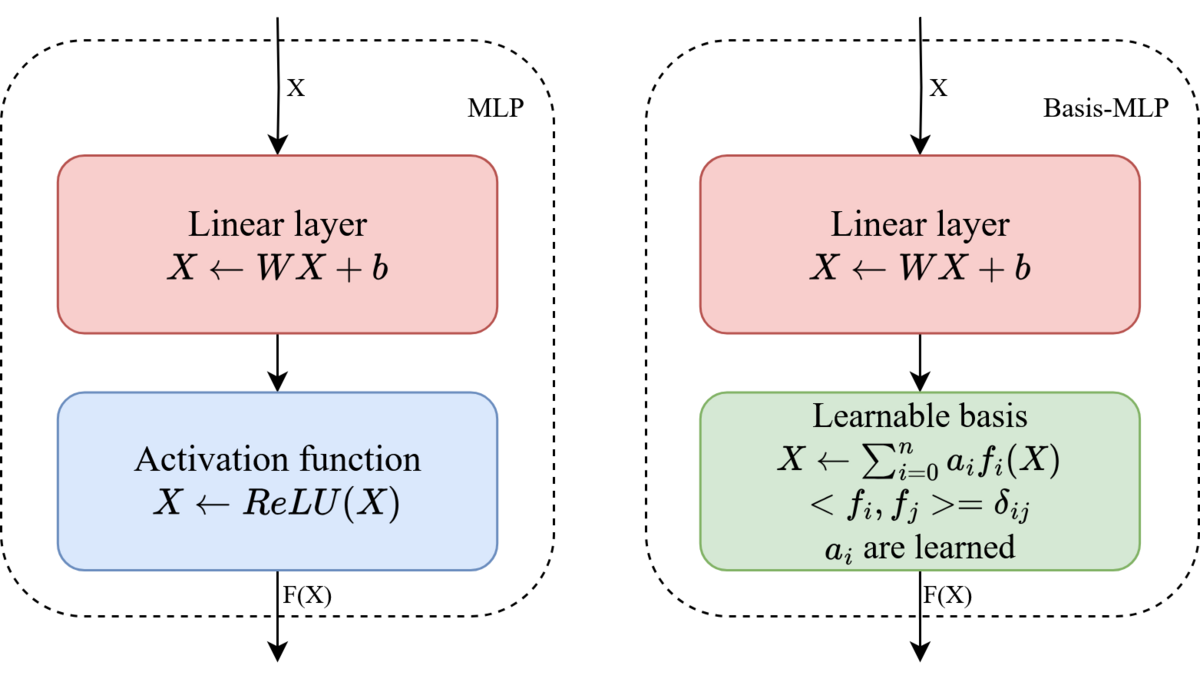}}
\caption{A classical MLP (linear + ReLU) vs Basis-MLP (linear + learnable basis function) blocks.}
\label{fig:schema}
\end{center}
\vskip -0.2in
\end{figure}
\section{Forward and Backward Second Moment calculation for the ReLU Activation Function}
\label{appendix:relu}
\subsection{Second Moment of the ReLU Activation Function}

The Rectified Linear Unit (ReLU) activation function \cite{nair2010rectified}, defined as: 
\begin{equation}
    \text{ReLU}(x) = \max(0, x)
\end{equation}
is commonly used in neural networks due to its simplicity and effective gradient propagation. When \( x \) is drawn from a standard normal distribution \( x \sim \mathcal{N}(0, 1) \), the second moment of the ReLU function is:
\begin{equation}
\mathbb{E}[\text{ReLU}(x)^2] = \int_{0}^{\infty} x^2 \frac{1}{\sqrt{2\pi}} e^{-x^2/2} dx = \frac{1}{2}
\end{equation}

\subsection{Second Moment of the Derivative of ReLU}
The derivative of ReLU, given by:
\begin{equation}
\frac{d}{dx} \text{ReLU}(x) =
\begin{cases}
1, & x > 0, \\
0, & x \leq 0,
\end{cases}
\end{equation}
acts as a binary indicator of positive inputs. The second moment of this derivative when \( x \sim \mathcal{N}(0, 1) \) is:
\begin{equation}
\mathbb{E}\left[\left(\frac{d}{dx} \text{ReLU}(x)\right)^2\right] = \int_{0}^{\infty}  \frac{1}{\sqrt{2\pi}} e^{-x^2/2} dx = \frac{1}{2}
\end{equation}
This result matches the variance of the ReLU function itself and validates the gain of 2 for variance-preserving weight initialization with ReLU activations.
\section{Proof of the Theorem~\ref{thm:hermite}}
\label{appendix:hermite}
\begin{definition}
\label{def:hermite}
We define the Hermite activation $F \colon \mathbb{R}\to\mathbb{R}$ with its learnable coefficients $\forall n \in \mathbb{N}$, $\forall k \in \llbracket 0, n \rrbracket$ $a_k \in \mathbb{R}$ as:
\begin{equation}
     x \mapsto F(x)=\sum^{n}_{k=0} \frac{a_k}{k!}\operatorname{He}_k(x)
\end{equation}
\end{definition} 
\begin{property}
$\forall m,n \in \mathbb{N}^2$, we have:
\begin{equation}
    \int_{-\infty}^{\infty} \operatorname{He}_m(x) \operatorname{He}_n(x) e^{-\frac{x^2}{2}} d x=\sqrt{2 \pi} n!\delta_{n m}
\end{equation}
With $\delta_{nm}$ the Kronecker delta.
\label{property:orthonormal}
\end{property}
\begin{proposition}
\label{prop:hermite_forward_gain}
The second moment of this activation with respect to $\mathcal{N}(0,1)$ is:
\begin{align}
\label{eq:hermite_final}
\mathbb{E}\left[F(x)^2\right]=&\sum^{n}_{k=0} \frac{a_k^2}{k!}
\end{align}
\begin{proof}
    The proof relies on the orthonormality property \ref{property:orthonormal}.
    
    The orthonormality property~\ref{property:orthonormal} means that:
$\forall m,n \in \mathbb{N}^2$,
    \begin{equation}
    \label{eq:hermite_ortho1}
    \int_{-\infty}^{\infty} \frac{\operatorname{He}_n(x)^2}{n!} \frac{e^{-\frac{x^2}{2}}}{\sqrt{2\pi}} d x=1
\end{equation}
and if $m \neq n $
\begin{equation}
    \label{eq:hermite_ortho2}
    \int_{-\infty}^{\infty} \operatorname{He}_m(x) \operatorname{He}_n(x) \frac{e^{-\frac{x^2}{2}}}{\sqrt{2\pi}} d x=0
\end{equation}
Given the definition (Def.~\ref{def:hermite}) of a Hermite activation $F$ , we have:
\begin{align}
\mathbb{E}\left[F(x)^2\right]=&\int_{-\infty}^{+\infty} F^2(x) \frac{e^{-\frac{x^2}{2}}}{\sqrt{2\pi}} d x \\
=&\int_{-\infty}^{+\infty}\left(\sum^{n}_{k=0} \frac{a_k}{k!}\operatorname{He}_k(x)\right)^2  \frac{e^{-\frac{x^2}{2}}}{\sqrt{2\pi}} d x
\end{align}
Using the orthogonal property Eq.~\ref{eq:hermite_ortho2}, the cross terms cancel out, and we have:
\begin{align}
\mathbb{E}\left[F(x)^2\right]=&\int_{-\infty}^{+\infty}\sum^{n}_{k=0} \frac{a_k^2}{(k!)^2}\operatorname{He}_k(x)^2  \frac{e^{-\frac{x^2}{2}}}{\sqrt{2\pi}} d x\\
=&\sum^{n}_{k=0} \frac{a_k^2}{(k!)^2}\int_{-\infty}^{+\infty}\operatorname{He}_k(x)^2  \frac{e^{-\frac{x^2}{2}}}{\sqrt{2\pi}} d x
\end{align}
Given the normality property Eq.~\ref{eq:hermite_ortho1}, we have
\begin{align}
\mathbb{E}\left[F(x)^2\right]=&\sum^{n}_{k=0} \frac{a_k^2}{k!}\int_{-\infty}^{+\infty}\frac{\operatorname{He}_k(x)^2}{k!}\frac{e^{-\frac{x^2}{2}}}{\sqrt{2\pi}} d x\\
=&\sum^{n}_{k=0} \frac{a_k^2}{k!}
\end{align}
\end{proof}
\end{proposition}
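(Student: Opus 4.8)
The plan is to compute the second moment directly as a Gaussian integral and to exploit the orthonormality in Property~\ref{property:orthonormal} to collapse the resulting double sum onto its diagonal. First I would write the second moment explicitly against the standard normal density,
\begin{equation}
\mathbb{E}\left[F(x)^2\right] = \int_{-\infty}^{+\infty} F(x)^2 \, \frac{e^{-\frac{x^2}{2}}}{\sqrt{2\pi}} \, dx,
\end{equation}
and substitute the expansion of $F$ from Definition~\ref{def:hermite}. Squaring the finite sum produces a double sum
\begin{equation}
F(x)^2 = \sum_{j=0}^{n} \sum_{k=0}^{n} \frac{a_j a_k}{j!\, k!} \operatorname{He}_j(x)\operatorname{He}_k(x),
\end{equation}
which is the object I would then integrate.

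Because the sum is finite, interchanging summation and integration is immediate, giving
\begin{equation}
\mathbb{E}\left[F(x)^2\right] = \sum_{j=0}^{n}\sum_{k=0}^{n} \frac{a_j a_k}{j!\, k!} \int_{-\infty}^{+\infty} \operatorname{He}_j(x)\operatorname{He}_k(x)\, \frac{e^{-\frac{x^2}{2}}}{\sqrt{2\pi}}\, dx.
\end{equation}
The central step is to evaluate each inner integral using Property~\ref{property:orthonormal}: dividing its stated value $\sqrt{2\pi}\,k!\,\delta_{jk}$ by the $\sqrt{2\pi}$ from the density yields exactly $k!\,\delta_{jk}$. This annihilates every off-diagonal term ($j\neq k$) and retains only the diagonal contributions $j=k$.

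The remaining bookkeeping is to insert $k!$ into each surviving term, so that it contributes $\frac{a_k^2}{(k!)^2}\cdot k! = \frac{a_k^2}{k!}$, which assembles to the claimed identity $\sum_{k=0}^{n} a_k^2/k!$. There is no genuine analytic obstacle here, since the result follows directly from orthogonality; the only place to be careful is the factorial accounting. The normalization constant in Property~\ref{property:orthonormal} is $\sqrt{2\pi}\,k!$ rather than $1$, so one must track both the $\sqrt{2\pi}$ coming from the Gaussian density and the $k!$ coming from the norm of $\operatorname{He}_k$ to avoid an off-by-a-factorial slip, and correctly cancel the $1/(k!)^2$ against the $k!$ to land on $1/k!$.
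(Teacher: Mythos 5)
Your proposal is correct and follows essentially the same route as the paper: expand $F(x)^2$ against the standard normal density, use the orthogonality in Property~\ref{property:orthonormal} to annihilate the cross terms, and cancel the surviving $k!$ against $1/(k!)^2$ to obtain $\sum_{k=0}^{n} a_k^2/k!$. Your explicit double-sum indexing is just a slightly more formal presentation of the paper's "cross terms cancel out" step, and the factorial bookkeeping matches the paper's exactly.
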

Having found the initialization gain for the activation $F$, we now need to enforce this same gain for its derivative. Indeed, we are going to use the gradient descent algorithm to train our learnable activation networks, and having an activation gradient of high (respectively low) variance could lead to exploding (respectively vanishing) gradients, a nondesirable property for deep neural networks trained with gradient backpropagation.
\begin{property}
The following recurrence property is derived directly from the equation \ref{eq:hermite_def}. $\forall k \in \mathbb{N}$ $\forall x \in \mathbb{R}$:
\begin{equation}
    \operatorname{He}_{k}^\prime(x) = x \operatorname{He}_{k}(x) - \operatorname{He}_{k+1}(x)
\end{equation}
\label{prop:hermite_deriv}
\end{property}
\begin{property}
\label{prop:hermite_deriv2}
The following property is shown by induction and by using the previous property \ref{prop:hermite_deriv}. $\forall k \in \mathbb{N}^*$ $\forall x \in \mathbb{R}$:
\begin{equation}
    \operatorname{He}_k^\prime(x) = k \operatorname{He}_{k-1}(x)
\end{equation}
\end{property}
\begin{proposition}
Using the last property and by the linearity of the integral, the derivative of $F$ (Def.~\ref{def:hermite}), $F^\prime \colon  \mathbb{R} \to \mathbb{R} $ is written as follows:
\begin{equation}
     x \mapsto F^\prime(x)=\sum^{n}_{k=1} \frac{a_k} {(k-1)!}\operatorname{He}_{k-1}(x)
\end{equation}
\end{proposition}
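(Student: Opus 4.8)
The plan is to differentiate the finite sum defining $F$ term by term, invoking the linearity of the derivative, and then to apply the lowering identity for Hermite polynomials already established in Property~\ref{prop:hermite_deriv2}. Since $F$ is a finite linear combination of polynomials, differentiation commutes with the summation with no convergence concerns, so I would begin by writing
\begin{equation}
F'(x)=\sum_{k=0}^{n}\frac{a_k}{k!}\operatorname{He}_k'(x).
\end{equation}

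Next I would dispose of the $k=0$ term: because $\operatorname{He}_0(x)=1$ is constant, its derivative vanishes, so the summation effectively starts at $k=1$. For each remaining index $k\in\llbracket 1,n\rrbracket$, Property~\ref{prop:hermite_deriv2} gives $\operatorname{He}_k'(x)=k\,\operatorname{He}_{k-1}(x)$. Substituting this and using the elementary factorial identity $\frac{k}{k!}=\frac{1}{(k-1)!}$ yields
\begin{equation}
F'(x)=\sum_{k=1}^{n}\frac{a_k}{k!}\,k\,\operatorname{He}_{k-1}(x)=\sum_{k=1}^{n}\frac{a_k}{(k-1)!}\operatorname{He}_{k-1}(x),
\end{equation}
which is exactly the claimed expression.

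There is no genuine obstacle here: the statement is a direct consequence of Property~\ref{prop:hermite_deriv2}, which already encapsulates the one nontrivial fact (the lowering relation for probabilist Hermite polynomials, itself proved by induction from Property~\ref{prop:hermite_deriv}). The only point requiring a moment's care is the index bookkeeping — correctly dropping the constant $k=0$ term and tracking the shift from $k$ to $k-1$ — but this is routine. As an optional remark, re-indexing with $j=k-1$ presents $F'$ as a Hermite activation of degree $n-1$ with coefficients $a_{j+1}$, making transparent that $F'$ lies in the same orthogonal family; this is precisely the structural fact that enables the closed-form backward-gain computation used in Theorem~\ref{thm:hermite}.
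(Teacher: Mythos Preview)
Your proof is correct and follows exactly the approach the paper indicates in the proposition's own wording: differentiate the finite sum term by term, invoke Property~\ref{prop:hermite_deriv2} for each $k\ge 1$, and simplify $k/k!=1/(k-1)!$. The paper does not supply a separate proof block beyond that sentence, so your argument is simply a written-out version of what the paper leaves implicit (note also that the paper's phrase ``linearity of the integral'' is almost certainly a slip for ``linearity of the derivative,'' which is what you actually use).
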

\begin{remark}
    \label{remark:hermite_lip} 
    A first remark here is that $\forall n >2$: $F^\prime$ is unbounded ($\displaystyle{\lim_{x\to\infty} F^\prime(x) \to \infty}$). This means that $F$ is not Lipschitz continuous. Lipschitz continuity is often desired (or even required) when training a deep neural network using gradient backpropagation. However, by a suitable initial choice of the coefficients $(a_k)_{k \in  \llbracket 0,n \rrbracket}$ we can keep the Lipschitz constant under control. 
\end{remark}
\begin{proposition}
\label{prop:hermite_backward_gain}
The second moment of the derivative of the Hermite activation is:
 \begin{equation}
    \mathbb{E}\left[F^\prime(x)^2\right]=\sum^{n}_{k=1}\frac{a_k^2}{(k-1)!}
\end{equation}
\begin{proof}
Knowing that $\forall k \in \mathbb{N}^*$ $\forall x \in \mathbb{R}$:
\begin{equation}
    \operatorname{He}_k^\prime(x) = k \operatorname{He}_{k-1}(x)
\end{equation}
The definition of $F^\prime$ becomes:
\begin{align}
    F^\prime \colon & \mathbb{R} \to \mathbb{R}  \nonumber \\
     & x \mapsto F^\prime(x)=\sum^{n}_{k=1} \frac{ka_k} {k!}\operatorname{He}_{k-1}(x)
\end{align}
Thus, the second-order moment of $F'$ is:
\begin{align}
    \mathbb{E}\left[F^\prime(x)^2\right]&=\int_{-\infty}^{+\infty}\left(\sum^{n}_{k=1} \frac{a_k} {(k-1)!}\operatorname{He}_{k-1}(x)\right)^2 \frac{e^{-\frac{x^2}{2}}}{\sqrt{2\pi}} d x
\end{align}
By the orthogonal property Eq.~\ref{eq:hermite_ortho2}, the cross terms cancel out, and we have:
\begin{align}
    \mathbb{E}\left[F^\prime(x)^2\right]&=\int_{-\infty}^{+\infty}\sum^{n}_{k=1} \frac{a_k^2} {((k-1)!)^2}\operatorname{He}_{k-1}(x)^2 \frac{e^{-\frac{x^2}{2}}}{\sqrt{2\pi}} d x \\
    &=\sum^{n}_{k=1}\frac{a_k^2} {(k-1)!} \int_{-\infty}^{+\infty}\frac{\operatorname{He}_{k-1}(x)^2}{(k-1)!} \frac{e^{-\frac{x^2}{2}}}{\sqrt{2\pi}} d x \\
\end{align}
By the normality property Eq.~\ref{eq:hermite_ortho1}, we finally have:
\begin{equation}
    \mathbb{E}\left[F^\prime(x)^2\right]=\sum^{n}_{k=1}\frac{a_k^2}{(k-1)!}
\end{equation}
\end{proof}
\end{proposition}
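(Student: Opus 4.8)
The plan is to reduce the second moment of $F'$ to a diagonal sum by invoking the orthonormality of the Hermite polynomials, exactly mirroring the computation of $\mathbb{E}[F(x)^2]$ carried out in Proposition~\ref{prop:hermite_forward_gain}. The whole argument rests on two facts already available: the differentiation rule $\operatorname{He}_k'(x) = k\operatorname{He}_{k-1}(x)$ from Property~\ref{prop:hermite_deriv2}, and the orthonormality relation from Property~\ref{property:orthonormal} (in the normalized forms Eq.~\ref{eq:hermite_ortho1} and Eq.~\ref{eq:hermite_ortho2}).

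First I would differentiate $F$ term by term. By linearity and Property~\ref{prop:hermite_deriv2}, the constant term ($k=0$) drops out, and each surviving term simplifies through the cancellation $k/k! = 1/(k-1)!$, giving $F'(x) = \sum_{k=1}^{n} \frac{a_k}{(k-1)!}\operatorname{He}_{k-1}(x)$. Next I would square this expression, expand it as a double sum over indices $j,k \in \llbracket 1,n\rrbracket$, and integrate against the standard Gaussian density $\frac{1}{\sqrt{2\pi}}e^{-x^2/2}$. By the orthogonality half of the property (Eq.~\ref{eq:hermite_ortho2}), every off-diagonal term with $j \neq k$ (equivalently $j-1 \neq k-1$) integrates to zero, collapsing the double sum to a single diagonal sum $\sum_{k=1}^{n} \frac{a_k^2}{((k-1)!)^2}\int_{-\infty}^{+\infty}\operatorname{He}_{k-1}(x)^2 \frac{e^{-x^2/2}}{\sqrt{2\pi}}\,dx$.

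Finally, I would apply the normalization half (Eq.~\ref{eq:hermite_ortho1}), which yields $\int_{-\infty}^{+\infty}\operatorname{He}_{k-1}(x)^2 \frac{e^{-x^2/2}}{\sqrt{2\pi}}\,dx = (k-1)!$ for each surviving term. Substituting and simplifying $\frac{a_k^2}{((k-1)!)^2}\cdot(k-1)! = \frac{a_k^2}{(k-1)!}$ delivers the claimed identity $\mathbb{E}[F'(x)^2] = \sum_{k=1}^{n}\frac{a_k^2}{(k-1)!}$. There is no genuine obstacle here: the computation is a routine orthogonality argument identical in structure to the forward-gain proof. The only point requiring care is the index shift $k \mapsto k-1$ introduced by differentiation, which one must check correctly realigns the orthogonality relation so that the cross terms indeed cancel and the diagonal normalization produces $(k-1)!$ rather than $k!$.
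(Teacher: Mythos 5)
Your proposal is correct and follows essentially the same route as the paper's own proof: differentiate term by term via $\operatorname{He}_k'(x)=k\operatorname{He}_{k-1}(x)$, simplify $k/k!=1/(k-1)!$, expand the square, cancel cross terms by orthogonality (Eq.~\ref{eq:hermite_ortho2}), and evaluate the diagonal terms by the normalization identity (Eq.~\ref{eq:hermite_ortho1}). Your explicit attention to the index shift $k\mapsto k-1$ is the only point of care in the paper's argument as well, and you handle it correctly.
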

\begin{proposition}
\label{prop:hermite_equality}
    Equality between propositions~\ref{prop:hermite_forward_gain} and \ref{prop:hermite_backward_gain} imposes that:
\begin{align}
    a_0^2 &= \sum^{n}_{k=1} \frac{(k-1)}{k!}a_k^2\\
    &= \sum^{n}_{k=1} \left(\frac{1}{(k-1)!} - \frac{1}{k!}\right)a_k^2
\end{align}
\end{proposition}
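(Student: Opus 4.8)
The plan is to set the two second-moment expressions equal and solve for $a_0^2$ by elementary algebraic manipulation. By Proposition~\ref{prop:hermite_forward_gain} the forward second moment is $\mathbb{E}[F(x)^2] = \sum_{k=0}^{n} \frac{a_k^2}{k!}$, and by Proposition~\ref{prop:hermite_backward_gain} the backward second moment is $\mathbb{E}[F'(x)^2] = \sum_{k=1}^{n} \frac{a_k^2}{(k-1)!}$. Imposing forward--backward gain equality (equivalently, since we assume $\operatorname{Var}[x]=1$, requiring $\alpha = \alpha'$, which by Definitions~\ref{def:gain_forward} and \ref{def:gain_backward} amounts to equal second moments) gives the single scalar equation
\begin{equation}
\sum_{k=0}^{n} \frac{a_k^2}{k!} = \sum_{k=1}^{n} \frac{a_k^2}{(k-1)!}.
\end{equation}

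The first step is to peel off the $k=0$ term on the left-hand side. Since $0!=1$, this term is exactly $a_0^2$, so the left-hand side becomes $a_0^2 + \sum_{k=1}^{n} \frac{a_k^2}{k!}$. The second step is to move $\sum_{k=1}^{n} \frac{a_k^2}{k!}$ to the right-hand side and combine the two sums over the common index range $k \in \llbracket 1,n \rrbracket$, yielding
\begin{equation}
a_0^2 = \sum_{k=1}^{n} \left(\frac{1}{(k-1)!} - \frac{1}{k!}\right) a_k^2,
\end{equation}
which is the second displayed form in the statement.

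The final step is purely a factorial simplification of the coefficient: writing $\frac{1}{(k-1)!} = \frac{k}{k!}$ gives $\frac{1}{(k-1)!} - \frac{1}{k!} = \frac{k}{k!} - \frac{1}{k!} = \frac{k-1}{k!}$, which recovers the first displayed form $a_0^2 = \sum_{k=1}^{n} \frac{k-1}{k!} a_k^2$. I do not anticipate any genuine obstacle here, as the argument is a one-line rearrangement followed by a routine identity; the only point meriting a word of care is the bookkeeping of the summation index, namely correctly identifying the isolated $k=0$ contribution and confirming that both remaining sums run over the same range $\llbracket 1,n \rrbracket$ so that they may be combined termwise.
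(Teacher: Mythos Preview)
Your proposal is correct and follows exactly the intended route: the paper does not spell out a proof for this proposition, but the implicit argument is precisely the one-line rearrangement you give, equating the second moments from Propositions~\ref{prop:hermite_forward_gain} and~\ref{prop:hermite_backward_gain}, isolating $a_0^2$, and using $\frac{1}{(k-1)!}=\frac{k}{k!}$. There is nothing to add.
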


To satisfy the forward-backward gain equality, we could initialize the coefficients $(a_k)_{k \in  \llbracket 0,n \rrbracket}$ such as $\forall n \in \mathbb{N}^*$:
\begin{equation}
     \forall k \in   \llbracket 1,n \rrbracket \ a_k = 1 \quad\text{and}\quad a_0 = \sqrt{1 - \frac{1}{n!}}
\end{equation}
This initialization works in practice for all $n$. Furthermore, as the term $\frac{1}{n!}$ in $a_0$ vanishes quickly with $n \rightarrow +\infty$, for larger $n$ we could initialize all the coefficients to $1$ including $a_0$.

In the limit case, by a simple injection of $a_k = 1$ in Prop.~\ref{prop:hermite_equality} and then in Prop.~\ref{prop:hermite_backward_gain}, we obtain the result.
\section{Proof of the Theorem~\ref{thm:fourier}}
\label{appendix:fourier}
\begin{definition}
\label{def:fourier}
We consider the following Fourier activation $F \colon  \mathbb{R} \to \mathbb{R}$:
\begin{equation}
\label{eq:fourier_appendix}
    x \mapsto F(x) = a_0 +
    \sum_{k=1}^{n} \frac{\left(a_k \cos(k x) + b_k \sin(k x)\right)}{k!}
\end{equation}
where \( (a_k)_{k \in \mathbb{N}} \) and \( (b_k)_{k \in \mathbb{N}^*} \) are real learnable coefficients.
\end{definition}
\begin{property}
\label{property:fourier_ortho}
The equivalent of the \ref{property:orthonormal} property for trignometric functions is given by $\forall m,n \in \mathbb{Z}^2$:
\begin{align}
\left\{\begin{aligned}
    \int_{-\pi}^{\pi} \cos(m x) \cos(n x)  d x&=\pi\delta_{n m}\\
    \int_{-\pi}^{\pi} \sin(m x) \sin(n x)  d x&=\pi\delta_{n m}\\
    \int_{-\pi}^{\pi} \cos(m x) \sin(n x)  d x&=0\\
\end{aligned}\right.
\end{align}
With $\delta_{nm}$ the Kronecker delta function.
\end{property} 
\begin{proposition}
\label{prop:fourier_forward_gain}
The second moment of this activation is:
\begin{equation}
\label{eq:fourier_simple}
\mathbb{E}[F(x)^2] = a_0^2 +  \frac{1}{2}\sum_{k=1}^{n} \frac{\left( a_k^2 + b_k^2 \right)}{(k!)^2}
\end{equation}
\begin{proof}
The proof relies on the orthonormality property \ref{property:fourier_ortho}.

The random variable \( x \) is assumed to follow a uniform distribution on the interval \( [-\pi, \pi] \), denoted as:
\begin{equation}
    x \sim \mathcal{U}\left(-\pi, \pi\right)
\end{equation}

To compute the second moment of the Fourier activation \( F(x) \), we need to compute the expected value of \( F(x)^2 \):
\begin{equation}
    \mathbb{E}[F(x)^2] = \int_{-\pi}^{\pi} F(x)^2 p(x) \, dx
\end{equation}
where \( p(x) \) is the probability density function (PDF) of the uniform distribution:
\begin{equation}
    p(x) = \frac{1}{2\pi}, \quad x \in [-\pi, \pi]
\end{equation}
Taking the square of the definition in Eq.~\ref{eq:fourier_appendix} gives:
\begin{equation}
F(x)^2 = \left( a_0 + \sum_{k=1}^{n} \frac{\left(a_k \cos(k x) + b_k \sin(k x)\right)}{k!} \right)^2
\end{equation}

Using the orthogonal property \ref{property:fourier_ortho} and the linearity of the integral, we have:
\begin{align}
	\mathbb{E}[F(x)^2] = & a_0^2 + \frac{1}{2\pi} \sum_{k=1}^{n} \frac{1}{(k!)^2}\int_{-\pi}^{\pi} a_k^2 \cos^2(k x) + b_k^2\sin^2(k x) \, dx  \\ 
    = &a_0^2 + \frac{1}{2\pi} \sum_{k=1}^{n} \frac{a_k^2}{(k!)^2}\left(\frac{\sin(2\pi k)}{2k} + \pi\right)  + \frac{b_k^2}{(k!)^2}\left(\pi-\frac{\sin(2\pi k)}{2k} \right)     
\end{align}
The second moment simplifies to:
\begin{equation}
\mathbb{E}[F(x)^2] = a_0^2 +  \frac{1}{2}\sum_{k=1}^{n} \frac{\left( a_k^2 + b_k^2 \right)}{(k!)^2}
\end{equation}
\end{proof}
\end{proposition}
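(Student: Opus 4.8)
The plan is to compute $\mathbb{E}[F(x)^2]$ directly from Definition~\ref{def:fourier} by expanding the square and then applying the orthogonality relations of Property~\ref{property:fourier_ortho}. Since $x \sim \mathcal{U}(-\pi,\pi)$ has density $p(x) = \tfrac{1}{2\pi}$ on $[-\pi,\pi]$, I would start from
\begin{equation}
\mathbb{E}[F(x)^2] = \frac{1}{2\pi}\int_{-\pi}^{\pi} F(x)^2\, dx,
\end{equation}
and expand $F(x)^2$ using Eq.~\ref{eq:fourier_appendix}. The square produces three kinds of terms: the constant $a_0^2$; the DC cross terms $2a_0 \cdot \tfrac{a_k\cos(kx) + b_k\sin(kx)}{k!}$ coupling the constant component to each harmonic; and the bilinear products $\tfrac{1}{k!\,\ell!}(a_k\cos(kx)+b_k\sin(kx))(a_\ell\cos(\ell x)+b_\ell\sin(\ell x))$ for $1 \le k,\ell \le n$.

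The next step is to discard everything except the diagonal. For every $k\ge 1$ we have $\int_{-\pi}^{\pi}\cos(kx)\,dx = \int_{-\pi}^{\pi}\sin(kx)\,dx = 0$, so integrating against the uniform density annihilates all the DC cross terms and leaves only $a_0^2$ from the constant part. For the bilinear products I would invoke Property~\ref{property:fourier_ortho}: the three orthogonality identities force every cross term with $k \neq \ell$ to vanish, together with every mixed cosine–sine product. Only the diagonal squares $\tfrac{a_k^2}{(k!)^2}\int_{-\pi}^{\pi}\cos^2(kx)\,dx$ and $\tfrac{b_k^2}{(k!)^2}\int_{-\pi}^{\pi}\sin^2(kx)\,dx$ remain; by Property~\ref{property:fourier_ortho} each of these integrals equals $\pi$, and the prefactor $\tfrac{1}{2\pi}$ turns each $\pi$ into $\tfrac{1}{2}$. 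Summing over $k$ then yields
\begin{equation}
\mathbb{E}[F(x)^2] = a_0^2 + \frac{1}{2}\sum_{k=1}^{n}\frac{a_k^2 + b_k^2}{(k!)^2},
\end{equation}
which is the claimed identity.

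There is no deep obstacle here; the argument is bookkeeping once orthogonality is in place. The one point that warrants care is that the vanishing of the DC cross terms is \emph{not} one of the three identities listed in Property~\ref{property:fourier_ortho}, which concern products of two harmonics. I would therefore justify it separately via the elementary fact that a nonconstant harmonic integrates to zero over a full period, rather than by naively setting $m=0$ in those relations (where the $\delta$-normalization is not the one relevant to the separated constant term $a_0$). Keeping the $k=0$ mode out of the harmonic sum and handling it explicitly as the constant $a_0$ is what makes the final expression clean and avoids the boundary pitfall.
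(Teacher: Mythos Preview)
Your proof is correct and follows essentially the same route as the paper's: expand $F(x)^2$, kill all cross terms via the orthogonality relations of Property~\ref{property:fourier_ortho}, and evaluate the surviving diagonal integrals to $\pi$ before dividing by $2\pi$. The only cosmetic difference is that the paper computes $\int_{-\pi}^{\pi}\cos^2(kx)\,dx$ and $\int_{-\pi}^{\pi}\sin^2(kx)\,dx$ via explicit antiderivatives (obtaining $\pi \pm \tfrac{\sin(2\pi k)}{2k}$ and then using $\sin(2\pi k)=0$), whereas you read off the value $\pi$ directly from the $m=n$ case of the property; your extra care about the $m=0$ boundary case is a nice touch not spelled out in the paper.
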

Next, we compute the second moment of the derivative of the Fourier activation \( F^\prime \). The derivative of \( F \) is given by:
\begin{proposition}
The derivative of the Fourier activation \( F^\prime \colon  \mathbb{R} \to \mathbb{R} \) is given by:
\begin{equation}
    F^\prime(x) = \sum_{k=1}^{n}\frac{1}{(k-1)!}\left(-a_k \sin(kx) + b_k \cos(kx)\right)
\end{equation}    
\end{proposition}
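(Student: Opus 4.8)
The plan is to obtain $F'$ by straightforward term-by-term differentiation of the finite sum defining $F$ in Def.~\ref{def:fourier}, justified by the linearity of the derivative operator together with the smoothness of each summand. Since $F$ is a finite linear combination of the constant $a_0$ and the trigonometric functions $\cos(kx)$ and $\sin(kx)$ for $k \in \llbracket 1,n \rrbracket$, each of which is infinitely differentiable on $\mathbb{R}$, no convergence questions arise and the derivative distributes over the sum without any additional hypotheses.

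First I would observe that the constant term $a_0$ contributes nothing, so its derivative drops out and the summation for $F'$ naturally begins at $k=1$. Then, for each fixed $k$, I would differentiate the $k$-th summand $\frac{a_k \cos(kx) + b_k \sin(kx)}{k!}$ by applying the chain rule to the two trigonometric terms, using $\frac{d}{dx}\cos(kx) = -k\sin(kx)$ and $\frac{d}{dx}\sin(kx) = k\cos(kx)$. This converts the summand into $\frac{-k\,a_k \sin(kx) + k\,b_k \cos(kx)}{k!}$, where the factor $k$ is produced by the chain rule in both terms.

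The only remaining algebraic step is the factorial simplification $\frac{k}{k!} = \frac{1}{(k-1)!}$, valid for every $k \ge 1$, which absorbs that factor $k$ and yields
\begin{equation}
F'(x) = \sum_{k=1}^{n} \frac{-a_k \sin(kx) + b_k \cos(kx)}{(k-1)!},
\end{equation}
which is exactly the claimed expression. There is no genuine obstacle in this proposition: it is a direct computation, and the only points requiring minimal care are the index/factorial bookkeeping (the disappearance of the $k=0$ term and the shift from $k!$ to $(k-1)!$). This form is also the one best suited for the subsequent backward-gain computation, since applying the orthonormality relations of Property~\ref{property:fourier_ortho} to $F'$ will produce $\mathbb{E}[F'(x)^2] = \tfrac{1}{2}\sum_{k=1}^{n}\frac{a_k^2+b_k^2}{((k-1)!)^2}$ by the same argument used in Proposition~\ref{prop:fourier_forward_gain}.
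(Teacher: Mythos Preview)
Your proposal is correct and is exactly the direct term-by-term differentiation the paper relies on; in fact the paper states this proposition without an explicit proof, treating it as an immediate computation, so your argument supplies precisely the missing routine details (linearity, chain rule, and the simplification $k/k!=1/(k-1)!$).
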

\begin{remark}
    Contrary to the remark in \ref{remark:hermite_lip}, $F^\prime$ is bounded. 
\begin{equation}
    \forall x \in \mathbb{R} \colon |F^\prime(x)| \leq \operatorname{max}(|a_k|,|b_k|)_{k \in \llbracket 1,n\rrbracket} \sum_{k=1}^{n} \frac{1}{(k-1)!} \leq e\operatorname{max}(|a_k|,|b_k|)_{k \in \llbracket 1,n\rrbracket} 
\end{equation}
This means that in the case of a Fourier activation, $F$ is Lipschitz continuous.
\end{remark} 
\begin{proposition}
\label{prop:fourier_backward_gain}
The second moment of the derivative of the Fourier activation is:
 \begin{equation}
    \mathbb{E}\left[F^\prime(x)^2\right]=\frac{1}{2}\sum^{n}_{k=1}\frac{1}{((k-1)!)^2} (a_k^2 + b_k^2)
\end{equation}
\begin{proof}
    An orthonormality argument as for the proof in the forward case suffices.
\end{proof}
\end{proposition}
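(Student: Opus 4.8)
The plan is to mirror the computation of the forward second moment in Proposition~\ref{prop:fourier_forward_gain} almost verbatim, since the derivative $F'$ is again a finite trigonometric sum with no constant term. Starting from the expression for $F'$ stated just above, I would write $\mathbb{E}[F'(x)^2]$ as the integral of $F'(x)^2$ against the uniform density $p(x)=\frac{1}{2\pi}$ on $[-\pi,\pi]$, and then expand the square into a double sum over frequency indices.

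The key steps, in order. First, expand
\begin{equation}
F'(x)^2 = \sum_{k=1}^{n}\sum_{\ell=1}^{n} \frac{1}{(k-1)!\,(\ell-1)!}\bigl(-a_k \sin(kx) + b_k \cos(kx)\bigr)\bigl(-a_\ell \sin(\ell x) + b_\ell \cos(\ell x)\bigr).
\end{equation}
Second, integrate term by term and invoke Property~\ref{property:fourier_ortho}: all mixed sine–cosine products integrate to zero, and all products of distinct frequencies ($k \neq \ell$) vanish as well, so only the diagonal $k=\ell$ terms survive. Third, evaluate the surviving integrals $\int_{-\pi}^{\pi}\sin^2(kx)\,dx = \int_{-\pi}^{\pi}\cos^2(kx)\,dx = \pi$ for each $k \geq 1$, which leaves
\begin{equation}
\int_{-\pi}^{\pi} F'(x)^2\,dx = \pi\sum_{k=1}^{n}\frac{a_k^2 + b_k^2}{((k-1)!)^2}.
\end{equation}
Multiplying by the normalization $\frac{1}{2\pi}$ then yields the claimed formula.

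There is no genuine analytic obstacle here; the computation is routine once the orthogonality relations are in hand, which is exactly why the forward argument transfers. The only points needing minor care are bookkeeping: ensuring that every off-diagonal term and every sine–cosine cross term is correctly eliminated by Property~\ref{property:fourier_ortho}, and noting that $F'$ carries no $a_0$ term (differentiation annihilates the constant), so the sum legitimately begins at $k=1$ with no stray contribution. The factor $\tfrac{1}{2}$ in the final expression arises precisely from the cancellation of the $\pi$ in the diagonal integrals against the $\tfrac{1}{2\pi}$ in the uniform density.
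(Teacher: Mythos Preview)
Your proposal is correct and follows exactly the approach the paper indicates: it carries out the same orthonormality argument used for the forward moment (Proposition~\ref{prop:fourier_forward_gain}), now applied to $F'$, with the cross terms killed by Property~\ref{property:fourier_ortho} and the diagonal integrals each contributing $\pi$. The paper's own proof is just the one-line remark that this forward argument suffices, so your write-up is a faithful expansion of it.
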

\begin{proposition}
\label{prop:fourier_equality}
    Equality between \ref{prop:fourier_forward_gain} and \ref{prop:fourier_backward_gain} imposes that:
\begin{equation}
    a_0^2 = \frac{1}{2}\sum^{n}_{k=1} \frac{(k^2 - 1)}{(k!)^2}(a_k^2 + b_k^2)
\end{equation}
\end{proposition}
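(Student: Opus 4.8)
The plan is to read Proposition~\ref{prop:fourier_equality} as the purely algebraic condition equivalent to the forward--backward gain equality $\alpha=\alpha'$, using the two closed forms already in hand. Since $\alpha = \operatorname{Var}[x]\cdot\mathbb{E}[F(x)^2]^{-1}$ and $\alpha' = \operatorname{Var}[x]\cdot\mathbb{E}[F'(x)^2]^{-1}$ share the same factor $\operatorname{Var}[x]$, the requirement $\alpha=\alpha'$ collapses to
\begin{equation}
\mathbb{E}[F(x)^2] = \mathbb{E}[F'(x)^2].
\end{equation}
Hence no probabilistic work remains: everything reduces to equating the sums of Propositions~\ref{prop:fourier_forward_gain} and~\ref{prop:fourier_backward_gain} and solving for $a_0^2$.

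First I would substitute the forward moment $a_0^2 + \tfrac{1}{2}\sum_{k=1}^n (a_k^2+b_k^2)/(k!)^2$ and the backward moment $\tfrac{1}{2}\sum_{k=1}^n (a_k^2+b_k^2)/((k-1)!)^2$ into this identity, move $a_0^2$ to one side, and merge the two sums into a single sum over $k$. This isolates
\begin{equation}
a_0^2 = \frac{1}{2}\sum_{k=1}^{n} (a_k^2+b_k^2)\left(\frac{1}{((k-1)!)^2} - \frac{1}{(k!)^2}\right),
\end{equation}
so the only genuine step is simplifying the parenthesised difference.

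The key identity is $k! = k\,(k-1)!$, which gives $((k-1)!)^{-2} = k^2/(k!)^2$ and therefore
\begin{equation}
\frac{1}{((k-1)!)^2} - \frac{1}{(k!)^2} = \frac{k^2}{(k!)^2} - \frac{1}{(k!)^2} = \frac{k^2-1}{(k!)^2}.
\end{equation}
Substituting this back into the isolated expression produces exactly the summand $\tfrac{k^2-1}{(k!)^2}(a_k^2+b_k^2)$ claimed in the proposition, completing the derivation.

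The computation is entirely routine; the one thing I would track carefully is the constant prefactor, which is really the only place a slip can occur. Both moments carry a common factor $\tfrac{1}{2}$ arising from $\int_{-\pi}^{\pi}\cos^2(kx)\,dx = \int_{-\pi}^{\pi}\sin^2(kx)\,dx = \pi$, and under the intended amplitude--phase initialization (phase $\pi/4$ with amplitude scaled by $\sqrt{2}$) the effective sine and cosine coefficients coincide, so $a_k^2+b_k^2$ contributes a compensating factor of $2$. It is this interplay of constants that makes the right-hand side telescope: once $a_0^2$ is written as a sum of $\big(((k-1)!)^{-2}-(k!)^{-2}\big)$ terms, setting $a_k=b_k=1$ collapses it to $1-1/(n!)^2$, recovering the initialization $a_0=\sqrt{1-1/(n!)^2}$ of Theorem~\ref{thm:fourier}.
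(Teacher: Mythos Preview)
Your approach is correct and matches the paper's: the proposition is stated there without a separate proof, since it follows immediately by equating the two second moments from Propositions~\ref{prop:fourier_forward_gain} and~\ref{prop:fourier_backward_gain} and isolating $a_0^2$, exactly as you do. Your derivation in fact yields
\[
a_0^2 \;=\; \frac{1}{2}\sum_{k=1}^{n}\frac{k^2-1}{(k!)^2}\,(a_k^2+b_k^2),
\]
and this version with the $\tfrac12$ is the one consistent with the rest of the paper: setting $a_k=b_k=1$ and telescoping gives $a_0^2=1-1/(n!)^2$, which is the initialization of Theorem~\ref{thm:fourier}, whereas the formula as printed (without the $\tfrac12$) would give twice that. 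So the prefactor you flagged is a genuine slip in the displayed statement, and your handling of it is correct.
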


To satisfy the forward-backward gain equality, we could again initialize the coefficients such as $\forall n \in \mathbb{N}^*$:
\small
\begin{equation}
     \forall k \in   \llbracket 1,n \rrbracket \ a_k = b_k = 1 \ \text{and} \  a_0 = \sqrt{1 - \frac{1}{(n!)^2}}
\end{equation}
\normalsize
This initialization works in practice for all $n$. Furthermore, as the term $\frac{1}{(n!)^2}$ in $a_0$ vanishes quickly with $n \rightarrow +\infty$, for larger $n$ we could initialize all the coefficients to $1$ including $a_0$.

In the limit case, by a simple injection of $a_k = 1$ in Prop.~\ref{prop:fourier_equality} and then in Prop.~\ref{prop:fourier_backward_gain}, we obtain the result.
\begin{remark}
     For an input $x$ of distribution $x \sim \mathcal{U}(-\sqrt{3},\sqrt{3})$, which has a variance of $\operatorname{Var}[x]=1$ and which is more in line with deep neural networks that seek a unitary variance preserving property across layers, we could rescale the fundamental frequency given in the  definition of $F$ in Def.~\ref{def:fourier} by redefining it as:
\begin{equation}
\label{eq:fourier_final}
    x \mapsto F(x) = a_0 + \sum_{k=1}^{n} \frac{1}{k!}\left(a_k \cos(k\frac{\pi}{\sqrt{3}} x) + b_k \sin(k\frac{\pi}{\sqrt{3}} x)\right)
\end{equation}
The computation of the second moment stays the same except for a factor $\frac{\pi}{\sqrt{3}}$. 
In general if $x \sim \mathcal{U}(-l,l)$, $l \in \mathbb{R_+^*}$, and if \( \omega \in \mathbb{Z} \) is the fundamental frequency, this last should be scaled by $\omega^\prime = \frac{\pi}{l}\omega$.
\end{remark}

\section{Proof of the Theorem~\ref{thm:tropical}}
\label{appendix:tropical}
\begin{proof}
Consider the function:
\begin{equation}
F(x) = \frac{\sqrt{2}}{n} \max_{k=0}^{n} \left\{1 + kx \right\} = \frac{\sqrt{2}}{n} \left(1 + \max_{k=0}^{n} kx\right)
\end{equation}
Note that since $x \in \mathbb{R}$, the maximum over $k$ depends on the sign of $x$:
\begin{itemize}
    \item If $x > 0$, then $\max_{k=0}^{n} \left\{kx \right\} = nx$.
    \item If $x \leq 0$, then $\max_{k=0}^{n} \left\{kx \right\} = 0$ (achieved at $k=0$).
\end{itemize}
Thus, we can write
\begin{equation}
F(x) = 
\begin{cases}
\frac{\sqrt{2}}{n}(1 + nx) = \sqrt{2}x + \frac{\sqrt{2}}{n}, & x > 0 \\
\frac{\sqrt{2}}{n}, & x \le 0
\end{cases}
\end{equation}

We now analyze the variance of $F(x)$ under the assumption that $x \sim \mathcal{N}(0, 1)$ (standard normal input):

As $n \to \infty$, the function becomes approximately:
\begin{equation}
F(x) \approx 
\begin{cases}
\sqrt{2}x, & x > 0 \\
0, & x \le 0
\end{cases}
\end{equation}
This is similar to a scaled ReLU: $F(x) \approx \sqrt{2} \cdot \max(0, x)$, which we know from Appendix~\ref{appendix:relu} has unitary forward and backward gains.

\end{proof}
\section{Deep Polynomially Activated Neural Networks are Multivariate Polynomial Mappings}
\label{appendix:mapping}
Deep MLPs are compositions of affine transformations and activation functions applied layer by layer. When the activation functions are polynomial, the entire network can be expressed as a polynomial mapping.
\begin{definition}
    Let $n, m \in \mathbb{N}$. A function $F: \mathbb{R}^n \to \mathbb{R}^m$ is called a \emph{polynomial mapping} if each component function $F_i: \mathbb{R}^n \to \mathbb{R}$, for $i = 1, \dots, m$, is a polynomial in $n$ variables. Explicitly, this means that for each $i$, $F_i$ has the form:
    \begin{equation}
        F_i(x_1, \dots, x_n) = \sum_{|\alpha| \leq d_i} c_{i, \alpha} x_1^{\alpha_1} x_2^{\alpha_2} \cdots x_n^{\alpha_n},
    \end{equation}
    where the sum is taken over all multi-indices $\alpha = (\alpha_1, \dots, \alpha_n) \in \mathbb{N}^n$ such that $|\alpha| = \alpha_1 + \alpha_2 + \dots + \alpha_n \leq d_i$, $c_{i, \alpha} \in \mathbb{R}$ are real coefficients, and $d_i \in \mathbb{N}$.
\end{definition}

\begin{definition}
    A \emph{deep neural network} with $L$ layers, input dimension $n$, and output dimension $m$ is a function $F: \mathbb{R}^n \to \mathbb{R}^m$ of the form:
    \begin{equation}
        F(x) = W_L \sigma ( W_{L-1} \sigma ( \cdots \sigma(W_1 x + b_1) \cdots ) + b_{L-1}) + b_L,
    \end{equation}
    where $\forall i \in \llbracket 1, L \rrbracket$ $C_i \in \mathbb{N}^*$. Each $W_i \in \mathbb{R}^{C_i \times C_{i-1}}$ is a weight matrix, $b_i \in \mathbb{R}^{C_i}$ is a bias vector, and $\sigma$ is an activation function applied element-wise.
\end{definition}

\begin{proposition}
\label{prop:mapping}
    Let $F: \mathbb{R}^n \to \mathbb{R}^m$ be a deep neural network with polynomial activation functions of degree $d$. Then $F$ is a polynomial mapping of degree at most $d^L$. Furthermore, any $L$-layer MLP could be collapsed into an equivalent 3-layer network with the middle layer being a polynomial mapping of degree at most $d^L$.
\end{proposition}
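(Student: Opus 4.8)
The plan is to establish both claims by induction on the depth $L$, tracking how the degree of each coordinate (viewed as a polynomial in the input variables $x_1,\dots,x_n$) evolves layer by layer, and then to invoke a monomial (Veronese) embedding to realize the resulting polynomial map as a three-layer network.

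First, for the degree bound, I would write the network as the alternating composition of affine maps $x \mapsto W_i x + b_i$ and the element-wise polynomial activation $\sigma$ of degree $d$. Let $h^{(i)}$ denote the vector of activations after the $i$-th application of $\sigma$, with $h^{(0)} = x$. The base case is immediate: each coordinate of $h^{(0)}$ is a polynomial of degree $\le 1$. For the inductive step I would use two elementary facts: (i) an affine combination $\sum_j (W_i)_{\cdot j}\, p_j + b_i$ of polynomials $p_j$ of degree $\le \delta$ is again a polynomial of degree $\le \delta$, so the pre-activation $W_i h^{(i-1)} + b_i$ inherits the degree bound of $h^{(i-1)}$; and (ii) substituting a polynomial of degree $\le \delta$ into the univariate degree-$d$ polynomial $\sigma$ yields a polynomial of degree $\le d\,\delta$. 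Hence if every coordinate of $h^{(i-1)}$ has degree $\le d^{\,i-1}$, then every coordinate of $h^{(i)}$ has degree $\le d^{\,i}$. Since $\sigma$ is applied fewer than $L$ times and the final readout $W_L(\cdot)+b_L$ is affine, every coordinate of $F$ is a polynomial of degree at most $d^{L}$, proving that $F$ is a polynomial mapping of degree $\le d^L$.

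For the collapsing claim, I would use the fact just proven that each component $F_i$ is a polynomial of degree $\le D := d^L$, hence a linear combination of the monomials $x^\alpha$ with $|\alpha| \le D$. Collecting these $N := \binom{n+D}{D}$ monomials (including the constant $x^0 = 1$, which absorbs all biases) defines the Veronese embedding $v_D \colon \mathbb{R}^n \to \mathbb{R}^N$, $v_D(x) = (x^\alpha)_{|\alpha|\le D}$, and the coefficients of the $F_i$ assemble into a single matrix $A \in \mathbb{R}^{m\times N}$ with $F(x) = A\, v_D(x)$. This factorization is exactly a three-layer architecture: an input layer, a middle layer computing the polynomial mapping $v_D$ of degree $D = d^L$, and a linear output layer $A$. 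I would note that $v_D$ is itself a polynomial (monomial) map acting after a trivial identity affine map, so the collapsed network is genuinely of the claimed linear--polynomial--linear form.

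The routine parts are facts (i) and (ii) and the linear-algebraic factorization through $v_D$; the only point demanding care is the bookkeeping in the induction — in particular counting the applications of $\sigma$ correctly so that the exponent matches the claimed bound $d^L$ (the construction in fact yields the slightly sharper $d^{L-1}$, which is consistent with the stated upper bound). I expect the main conceptual obstacle to be stating the collapsing claim precisely: one must fix the convention that the \emph{middle layer} is allowed to be a full polynomial (monomial) mapping rather than an element-wise scalar activation, after which the Veronese factorization makes the equivalence transparent.
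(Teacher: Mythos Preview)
Your induction for the degree bound is exactly the paper's argument: the paper also inducts on the number of layers, using that an affine map preserves the degree and that composing with the degree-$d$ activation multiplies the degree by $d$, arriving at $d^L$. Your bookkeeping remark about $d^{L-1}$ versus $d^L$ is apt; the paper's proof silently shifts indexing (its base case $L=1$ already contains $W_0$ and $W_1$), which is the source of the discrepancy you noticed.

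Where you differ is on the ``collapsing into a 3-layer network'' clause. The paper's proof does not argue this part separately at all; it stops once the degree bound is established and treats the 3-layer statement as an immediate consequence (effectively: the whole composite is a single polynomial map, so sandwich it between two affine layers). Your Veronese/monomial-embedding factorization $F(x)=A\,v_D(x)$ is a genuinely more explicit and more informative route: it identifies the middle layer concretely as the universal monomial map $v_D$ and pushes all network-specific information into the final linear layer $A$. This buys you a canonical form (and incidentally the monomial count $\binom{n+D}{D}$ that the paper records as a separate remark), at the cost of having to spell out, as you do, that the middle ``layer'' is a full polynomial mapping rather than a coordinatewise scalar activation. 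Both approaches are correct; yours is more constructive than what the paper actually writes.
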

\begin{proof}
    The proof proceeds by induction on the number of layers $L$ and is detailed in what follows.
    
    \textbf{Base case:} For $L = 1$, the network takes the form
    \[
        F(x) = W_1 \sigma(W_0 x + b_0) + b_1.
    \]
    Since $\sigma$ is a polynomial of degree $d$, applying it to the affine transformation $W_0 x + b_0$ yields a polynomial mapping of degree at most $d$. Therefore, $F(x)$ is a polynomial mapping of degree at most $d$.

    \textbf{Inductive step:} Assume the statement holds for $L-1$ layers, meaning the network $F_{L-1}(x)$ is a polynomial mapping of degree at most $d^{L-1}$. For the $L$-layer case, we have
    \[
        F(x) = W_L \sigma(F_{L-1}(x)) + b_L.
    \]
    Since $\sigma$ is a polynomial of degree $d$, applying it to $F_{L-1}(x)$ results in a polynomial of degree at most $d \cdot d^{L-1} = d^L$. Thus, by induction, the statement holds for all $L \geq 1$.
\end{proof}

\begin{corollary}
    Any deep neural network with polynomial activation functions realizes a polynomial mapping whose degree grows exponentially with the number of layers.
\end{corollary}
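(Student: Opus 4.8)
The plan is to read this corollary as having two parts: an \emph{upper bound}, which is immediate from Proposition~\ref{prop:mapping}, and the genuine \emph{exponential growth} claim, which requires showing that the bound is actually attained rather than merely valid. First I would invoke Proposition~\ref{prop:mapping} directly: it already states that an $L$-layer network with degree-$d$ polynomial activations is a polynomial mapping of degree at most $d^L$. Since $d^L = e^{L \ln d}$ grows exponentially in $L$ whenever $d \ge 2$, the upper-bound half of the corollary follows with no further argument.

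The content worth proving is that this exponential dependence is not a loose estimate but is genuinely realized by some network in the family. To establish this I would exhibit a minimal explicit witness. Take input and output dimension $n = m = 1$, all weight matrices equal to the scalar $1$ and all biases equal to $0$, and the activation $\sigma(t) = t^d$, which is a legitimate degree-$d$ polynomial. The network then collapses to the iterated composition $F = \sigma \circ \cdots \circ \sigma$ taken $L$ times, and a one-line induction on the number of compositions gives $F(x) = x^{d^L}$. Hence the realized degree is \emph{exactly} $d^L$, which simultaneously shows that the bound in Proposition~\ref{prop:mapping} is tight and that the degree can grow exponentially in the depth $L$.

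The only point requiring care is the reading of the phrase ``grows exponentially.'' If it is read merely as a worst-case attainment, the witness above already suffices, since the corollary only asserts that such networks exist. If instead one wants the stronger statement that the top degree $d^L$ is the \emph{generic} behavior, I would argue that the leading-order coefficient of the composed mapping is itself a polynomial in the network weights, so its vanishing locus has measure zero and degree $d^L$ holds for almost all parameter choices. I do not expect a serious obstacle here: the corollary is essentially a direct readout of Proposition~\ref{prop:mapping} combined with the explicit tight example, and the only subtlety is making the informal word ``exponentially'' precise enough to be verified.
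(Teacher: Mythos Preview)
Your proposal is correct, and in fact it does more than the paper does. In the paper the corollary is stated immediately after Proposition~\ref{prop:mapping} with no separate proof at all: it is treated as a direct readout of the bound $d^L$ established there, and the word ``exponentially'' is simply the observation that $d^L$ is exponential in $L$. You reproduce this step, but then go further by constructing an explicit witness ($\sigma(t)=t^d$, scalar weights $1$, biases $0$) showing the bound is tight, and by sketching a genericity argument via the nonvanishing of the leading coefficient. None of this extra material appears in the paper; it is additional content rather than a different route to the same result. So your first paragraph alone already matches the paper's treatment, and the rest strengthens the statement beyond what the authors actually prove.
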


\begin{remark}
    The total number of monomial terms in this mapping is $\binom{d^L + n}{d^L}$.
\end{remark}

\begin{remark}
An equivalent consideration for trigonometric polynomials can be established by approximation, but will not be covered here.
\end{remark}
\section{Algorithms}
\label{appendix:algorithms}
\begin{algorithm}[!htbp]
   \caption{Initialization of Hermite Grid and Coefficients}
   \label{alg:init_grid}
\begin{algorithmic}
   \STATE {\bfseries Input:} Polynomial degree $n$
   \STATE {\bfseries Output:} Coefficients tensor $\text{coeffs}$, Grid of powers tensor $\text{grid}$
   \STATE 
   \STATE Initialize $\text{coeffs}$ and $\text{grid}$ as zero matrices of shape $[n+1, n//2 + 1]$
   \FOR{$i=0$ {\bfseries to} $n$}
      \FOR{$j=0$ {\bfseries to} $\frac{n}{2}$}
         \IF{$j \leq \frac{i}{2}$}
            \STATE $\text{coeffs}[i][j] \leftarrow (-1)^j e^{\left( - \log(j!) - \log((i - 2 j)!)  - j \log(2)\right)} $
            \STATE $\text{grid}[i][j] \leftarrow i - 2 j$
         \ELSE
            \STATE $\text{coeffs}[i][j] \leftarrow 0$
            \STATE $\text{grid}[i][j] \leftarrow 0$
         \ENDIF
      \ENDFOR
   \ENDFOR
   \STATE \textbf{return} $\text{coeffs}, \text{grid}$
\end{algorithmic}
\end{algorithm}
\begin{algorithm}[!htbp]
   \caption{Hermite Activation Function Forward Pass}
   \label{alg:hermite_forward}
   \begin{algorithmic}
      \STATE \textbf{Input:} Input tensor $x$, polynomial degree $n$
       \STATE {\bfseries Parameters:} Learnable polynomial coefficients $A \in \mathbb{R}^n$      
      \STATE \textbf{Output:} Output tensor after applying Hermite activation function
      \STATE 
      \STATE $\text{coeffs}, \text{grid} \leftarrow  \text{Initialize\_coeffs\_grid()}$
      \STATE \textbf{Procedure} Forward($x$):
      \STATE \quad $x \leftarrow x.\text{repeat($n+1$).repeat($n//2+1$)}$
      \STATE \quad $x \leftarrow |x|^{\text{grid}} \odot \text{sign}(x)^{\text{grid}}$
      \STATE \quad $x \leftarrow x @ \text{coeffs}$
      \STATE \quad $x \leftarrow x @ A$
      \STATE \quad \textbf{return} $x$
      \STATE \textbf{End Procedure}
   \end{algorithmic}
\end{algorithm}
\begin{algorithm}[!htbp]
   \caption{Hermite Forward CUDA Kernel}
   \label{alg:hermite_cuda_forward}
   \begin{algorithmic}
      \STATE \textbf{Input:} Input tensor $x$, degree $n$, output tensor $\text{out}$
      \STATE \textbf{Output:} Computed Hermite polynomials up to degree $n$
      \STATE
      \STATE \textbf{Procedure} HermiteForwardCUDA($x, n, \text{out}$):
      \STATE \quad \textbf{for} $i$ in parallel index $\text{size}(x)$:
      \STATE \quad \quad $\text{out}[i \cdot n] \leftarrow 1.0$
      \STATE \quad \quad \text{if } $n > 1$: $\text{out}[i \cdot n + 1] \leftarrow x[i]$
      \STATE \quad \quad \text{for } $k = 2$ to $n$:
      \STATE \quad \quad \quad $\text{out}[i \cdot n + k] \leftarrow x[i] \cdot \text{out}[i \cdot n + k - 1] - (k - 1) \cdot \text{out}[i \cdot n + k - 2]$
      \STATE \textbf{End Procedure}
   \end{algorithmic}
\end{algorithm}
\begin{algorithm}[!htbp]
   \caption{Hermite Backward CUDA Kernel}
   \label{alg:hermite_cuda_backward}
   \begin{algorithmic}
      \STATE \textbf{Input:} Input tensor $x$, degree $n$, output tensor $\text{out}$, gradient tensor $\text{grad\_out}$
      \STATE \textbf{Output:} Computed gradients for Hermite polynomials
      \STATE
      \STATE \textbf{Procedure} HermiteBackwardCUDA($x, n, \text{out}, \text{grad\_out}$):
      \STATE \quad \textbf{for} $i$ in parallel index $\text{size}(\text{grad\_out})$:
      \STATE \quad \quad $\text{grad} \leftarrow 0.0$
      \STATE \quad \quad \text{for } $k = 1$ to $n$:
      \STATE \quad \quad \quad $\text{grad} \leftarrow \text{grad} + x[i \cdot n + k] \cdot k \cdot \text{out}[i \cdot n + k - 1]$
      \STATE \quad \quad $\text{grad\_out}[i] \leftarrow \text{grad}$
      \STATE \textbf{End Procedure}
   \end{algorithmic}
\end{algorithm}
\begin{algorithm}[!htbp]
   \caption{Fourier Activation Function Forward Pass}
   \label{alg:fourier_forward}
   \begin{algorithmic}
      \STATE \textbf{Input:} Input tensor $x$, degree $n$
      \STATE {\bfseries Parameters:} Learnable coefficients $A \in \mathbb{R}^n$, fundamental $a \in \mathbb{R}$, phases $P \in \mathbb{R}^n$, frequencies $F \in \mathbb{R}^n$,      
      \STATE \textbf{Output:} Output tensor after applying Fourier activation function
      \STATE
      \STATE \textbf{Procedure} FourierActivation($x$):
      \STATE \quad $x \leftarrow x.\text{repeat($n+1$)}$
      \STATE \quad $x \leftarrow F \odot x - P$
      \STATE \quad $x \leftarrow \sqrt{2}\cos(x)$
      \STATE \quad $x \leftarrow x @ A$
      \STATE \quad $x \leftarrow x + a$
      \STATE \quad \textbf{return} $x$
      \STATE \textbf{End Procedure}
   \end{algorithmic}
\end{algorithm}
\begin{algorithm}[!htbp]
   \caption{Tropical Activation Function Forward Pass}
   \label{alg:tropical_forward}
   \begin{algorithmic}
      \STATE \textbf{Input:} Input tensor $x$, degree $n$
      \STATE {\bfseries Parameters:} Learnable coefficients $A \in \mathbb{R}^n$       
      \STATE \textbf{Output:} Output tensor after applying Tropical activation function
      \STATE
\STATE $\text{powers} \leftarrow  \text{range(0,n+1)}$      
      \STATE \textbf{Procedure} Forward($x$):
      \STATE \quad $x \leftarrow x.\text{repeat($n+1$)}$
      \STATE \quad $x \leftarrow \sqrt{2}/n \cdot \max(x\odot \text{powers} + A, \text{dim}=-1)$
      \STATE \quad \textbf{return} $x$
      \STATE \textbf{End Procedure}
   \end{algorithmic}
\end{algorithm}

\section{Rational Tropical Activation}
\label{appendix:tropical_rational}
\begin{definition}
The tropical quotient $\oslash$ of $x$ over $y$ is defined as:
\begin{equation}
    x \oslash y:=x-y
\end{equation}
\end{definition}
\begin{definition}
The \emph{tropical rational activation} $F$ is defined as the \emph{quotient} of two tropical polynomials $F_1$ and $F_2$ of degree $m,n \in \mathbb{N}^2$ respectively.
    \begin{align}
    F \colon & \mathbb{R} \to \mathbb{R}  \nonumber \\
    F(x)  &\mapsto  F_1(x) \oslash F_2(x) :=  F_1(x) - F_2(x)
\end{align}
\end{definition}
An example of fitting a classical activation (GELU) with a rational tropical activation is shown in Figure~\ref{fig:gelu_vs_tropical}. Rational tropical activation is understood here in the general sense, i.e., with real powers.
\begin{figure}[!htbp]
    \centering
    \begin{tikzpicture}[scale=0.8]
        \begin{axis}[
            xlabel={$x$-axis}, 
            ylabel={F(x)}, 
            legend pos=north west
        ]
        
        \addplot[very thick, blue] table[x=x, y=GELU, col sep=space] {figs/tropical_rational_6.txt};
        \addlegendentry{GELU}

        \addplot[very thick, orange] table[x=x, y=GELU_deriv, col sep=space] {figs/tropical_rational_6.txt};
        \addlegendentry{GELU deriv.}
        
        \addplot[very thick, red, dashed] table[x=x, y=Tropical_rational, col sep=space] {figs/tropical_rational_6.txt};
        \addlegendentry{Tropical Rat.}

        \addplot[very thick, green, densely dashed] table[x=x, y=Tropical_rational_deriv, col sep=space] {figs/tropical_rational_6.txt};
        \addlegendentry{Tropical Rat. deriv.};

        \end{axis}
    \end{tikzpicture}
    \caption{Hermite interpolation of a GELU with a Tropical Rational Activation of degree 6 in both the numerator and the denominator.}
    \label{fig:gelu_vs_tropical}
\end{figure}

An example of fitting a convex function ($x \mapsto \frac{x^2}{2}$) with a polynomial tropical function (in the general sense) is shown in Figure~\ref{fig:x_squared_tropical}.
\begin{figure}[!htbp]
    \centering
    \begin{tikzpicture}[scale=0.8]
        \begin{axis}[
            xlabel={$x$-axis}, 
            ylabel={F(x)}, 
            legend pos=south west
        ]
        
        \addplot[very thick, blue]{(x^2) / 2};
        \addlegendentry{$\displaystyle\frac{x^2}{2}$}

        \addplot[very thick, red] {max(-3*x-9/2, -2*x-2,-x-1/2,0,x-1/2,2*x-2,3*x-9/2)};
        \addlegendentry{$\displaystyle\max_{k=-3}^{3}\{kx - \frac{k^2}{2}\}$}
        \end{axis}
    \end{tikzpicture}
    \caption{Interpolation of $\displaystyle\frac{x^2}{2}$ function by the Tropical-Laurent polynomial (with potentially negative powers) $\displaystyle\max_{k=-3}^{3}\{kx - \frac{k^2}{2}\}$ of degree 6.}
    \label{fig:x_squared_tropical}
\end{figure}
\newpage
\section{Ablation Studies}
\label{appendix:ablations}
\begin{table*}[!htbp]
\caption{Ablation studies for the degree of the activation on ConvNeXt-T model.}
\label{res:ablation_degree}
\begin{center}
\begin{small}
\begin{tabular}{lcccc}
\toprule
\textbf{Activation} & \textbf{Degree} & \textbf{Train Loss} & \textbf{Val Top-1 (\%)} & \textbf{Val Top-5 (\%)}\\
\midrule
Tropical & 1 & 2.925 & 81.60 & 95.73 \\
Tropical & 3 & 2.866 & 82.01 & 95.91 \\
Tropical & 5 & 2.863 & 82.18 & 96.00 \\
Tropical & 6 & 2.857 & 82.20 & 95.90 \\
\midrule
Fourier & 1 & 2.872 & 80.29 & 95.03 \\
Fourier & 3 & 2.850 & 80.61 & 95.26 \\
Fourier & 5 & 2.844 & 80.69 & 95.41 \\
Fourier & 6 & 2.837 & 80.93 & 95.44 \\
\midrule
Hermite & 2 & 2.833 & 81.66 & 95.71 \\
Hermite & 3 & 2.790 & 82.34 & 96.03 \\
\bottomrule
\end{tabular}
\end{small}
\end{center}
\end{table*}
\begin{table*}[!htbp]
\caption{Ablation studies for the initialization of the activation on ConvNeXt-T model.}
\label{res:ablation_init}
\begin{center}
\begin{small}
\begin{tabular}{lccccc}
\toprule
\textbf{Activation} & \textbf{Degree} & \textbf{Initialized from}  & \textbf{Train Loss} & \textbf{Val Top-1 (\%)} & \textbf{Val Top-5 (\%)}\\
\midrule
Fourier & 6 & GELU  & 2.775 & 81.91 & 95.77 \\
Fourier & 6 & Thrm.~\ref{thm:fourier} & 2.837 & 80.93 & 95.44 \\
\midrule
Hermite & 3 & GELU & 2.809 & 82.04 & 95.91 \\
Hermite & 3 & Thrm.~\ref{thm:hermite} & 2.790 & 82.34 & 96.03 \\
\bottomrule
\end{tabular}
\end{small}
\end{center}
\end{table*}
\begin{table*}[!htbp]
\caption{Ablation studies for the learnability of the parameters of the activation on ConvNeXt-T model.}
\label{res:ablation_learnable}
\begin{center}
\begin{small}
\begin{tabular}{lccccc}
\toprule
\textbf{Activation} & \textbf{Degree} & \textbf{Learnable?}  & \textbf{Train Loss} & \textbf{Val Top-1 (\%)} & \textbf{Val Top-5 (\%)}\\
\midrule
Tropical & 6 & $\times$ & 3.560 & 76.31 & 93.09 \\
Tropical & 6 & $\surd$ & 2.857 & 82.20 & 95.90 \\
\midrule
Fourier & 6 & $\times$ & 3.181 & 79.51 & 94.60 \\
Fourier & 6 & $\surd$ & 2.837 & 80.93 & 95.44 \\
\midrule
Hermite & 3 & $\times$ & 3.411 & 78.48 & 94.20 \\
Hermite & 3 & $\surd$ & 2.790 & 82.34 & 96.03 \\

\bottomrule
\end{tabular}
\end{small}
\end{center}
\end{table*}
\begin{table*}[!htbp]
\caption{Ablation studies for the clamping in the Hermite activation on ConvNeXt-T model.}
\label{res:ablation_clamp}
\begin{center}
\begin{small}
\begin{tabular}{lccccc}
\toprule
\textbf{Activation} & \textbf{Degree} & \textbf{Clamped?}  & \textbf{Train Loss} & \textbf{Val Top-1 (\%)} & \textbf{Val Top-5 (\%)}\\
\midrule
Hermite & 3 & $\surd$ & 2.772 & 81.98 & 95.81 \\
Hermite & 3 & $\times$ & 2.790 & 82.34 & 96.03 \\
\bottomrule
\end{tabular}
\end{small}
\end{center}
\end{table*}

\section{A brief digression on Kolmogorov Arnold Networks (KANs)}
\label{appendix:kan}

Kolmogorov-Arnold networks \cite{liu2025kan} have been presented as a potential alternative to Multilayer-Perceptrons (MLPs), promoting several merits such as greater accuracy, fewer learnable parameters, and better interpretability. While the first two advantages could only be demonstrated for simple cases in the \citep{liu2025kan} article, the third benefit is more straightforward, as these networks overcome the “black-box” aspect of traditional non-linear activations MLPs by allowing the activation to be polynomial, piece-wise polynomial, or rational, as in \cite{yang2025kolmogorovarnold}. From there, having learned the weights of the network and those of the activation, it becomes clear what approximation these functions (polynomial, rational, or trigonometric ) have converged to.

Rather than providing a direct application of the celebrated Kolmogorov-Arnold representation theorem (KART) \cite{kolmogorov1957representation, arnold1959predstavlenii},
the recent work on KAN \cite{liu2025kan} appears to take inspiration from it in a more figurative sense. For clarity, we recall that the Kolmogorov-Arnold representation theorem, cited below, states that any continuous multivariate function $f \colon [0,1]^{n}\to \mathbb {R}$ can be represented as a composition of addition and some functions of one variable denoted by $\psi_{q,p}$ and $\Phi_q$:

\begin{theorem}
\label{thm:KART}
(\citet{arnold2009representation, arnold2009functions}) Let $f: \mathbb{I}^n:=[0,1]^n \rightarrow \mathbb{R}$ be an arbitrary multivariate continuous function. Then it can be represented as follows:
\begin{equation}
\label{eq:KART}
    f\left(x_1, \ldots, x_n\right)=\sum_{q=0}^{2 n} \Phi_q\left(\sum_{p=1}^n \psi_{qp}\left(x_p\right)\right)
\end{equation}
with continuous one-dimensional functions $\Phi_q \colon \mathbb {R} \to \mathbb {R}$ and $\psi_{q, p} \colon [0,1] \to \mathbb {R}$. $\Phi_q$ are called outer funcions and $\psi_{q, p}$ are called inner functions. The inner functions $\psi_{q, p}$ are independent of the function $f$.
\end{theorem}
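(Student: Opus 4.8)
The plan is to follow the Kolmogorov–Lorentz–Kahane line of argument rather than Sprecher's explicit constructions. First I would reduce the statement to a normalized form in which the outer functions coincide, $\Phi_0=\cdots=\Phi_{2n}=:g$, and the inner functions factor as $\psi_{qp}(t)=\lambda_p\,\psi_q(t)$ for fixed positive weights $\lambda_1,\dots,\lambda_n$ that are rationally independent and continuous strictly increasing $\psi_q\colon[0,1]\to\mathbb{R}$; recovering the general form asserted in \eqref{eq:KART} is then immediate. The proof splits into two essentially independent pieces: (i) constructing \emph{universal} inner functions $\psi_q$, chosen before and hence independent of $f$, whose associated embedding $x\mapsto\bigl(\sum_p\lambda_p\psi_q(x_p)\bigr)_q$ separates a fine grid of the cube $\mathbb{I}^n$; and (ii) an iterative scheme that, given these inner functions, builds $g$ for each fixed $f$ as a geometrically convergent series.

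For (i) I would work in the complete metric space $C([0,1])^{2n+1}$ of $(2n+1)$-tuples of continuous functions with the uniform metric and argue by the Baire category theorem. The property I want to hold generically is a \emph{majority separation property}: using $2n+1$ grids of small cubes, each obtained from a base grid by a different shift, every point $x\in\mathbb{I}^n$ lies deep in the interior of its containing cube for at least $n+1$ of the $2n+1$ grids, and for each grid $q$ the images of distinct cubes under $x\mapsto\sum_p\lambda_p\psi_q(x_p)$ are pairwise disjoint. I would show that, at each fixed grid resolution, the tuples violating this property form a closed nowhere-dense set, so the tuples satisfying it for every resolution form a dense $G_\delta$ and are in particular nonempty. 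Rational independence of the $\lambda_p$ is what makes the real-valued sums $\sum_p\lambda_p t_p$ injective enough on grid nodes for the required disjointness to be achievable by an arbitrarily small perturbation of the $\psi_q$.

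For (ii), with the universal $\psi_q$ now fixed, I would prove an approximation lemma: for any continuous $F$ with $\|F\|_\infty\le M$ there is a single continuous $g$ with $\|g\|_\infty\le M/(2n+1)$ such that $\bigl\|F-\sum_{q=0}^{2n} g\bigl(\sum_p\lambda_p\psi_q(x_p)\bigr)\bigr\|_\infty\le\theta M$ for a fixed $\theta<1$. On each image interval of grid $q$ one sets $g$ equal to $\tfrac{1}{2n+1}F$ evaluated at the centre of the corresponding cube; disjointness of those intervals makes this well defined and continuous. Writing the residual at $x$ as $\sum_q \tfrac{1}{2n+1}\bigl(F(x)-F(\xi_{q})\bigr)$, where $\xi_{q}$ is the centre of the grid-$q$ cube containing $x$, the at-least-$(n+1)$ good indices contribute terms bounded by the modulus of continuity of $F$, while the at-most-$n$ bad indices contribute terms bounded by $2M$; this gives a residual bound of the form $\omega(F)+\tfrac{2n}{2n+1}M$, i.e.\ contraction by $\theta=\tfrac{2n}{2n+1}+\epsilon<1$ once the grid is fine enough. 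Applying the lemma to $F=f$, then to the residual, and so on, yields $g_1,g_2,\dots$ with geometrically decaying norms, whose sum $g=\sum_i g_i$ converges uniformly to a continuous function; taking $\Phi_q=g$ for all $q$ gives the representation.

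The hard part will be step (i): arranging the shifted grids and proving, via Baire category, that a strict majority ($\ge n+1$ of $2n+1$) of the shifts isolates every point simultaneously while keeping the image cubes pairwise disjoint. The combinatorics of the shifts together with the genericity argument are delicate and carry essentially all the content of the theorem. Once separation with a majority margin is secured, the iteration in (ii) is routine, and the independence of the $\psi_q$ from $f$ is automatic precisely because they are fixed before $f$ is chosen.
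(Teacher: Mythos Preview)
The paper does not supply its own proof of Theorem~\ref{thm:KART}: the result is quoted from the literature with attribution to Arnold, and the only remark made about its demonstration is that it ``is essentially based on Baire's theorem'' (with a reference to Kahane). There is therefore nothing in the paper to compare your argument against line by line.

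That said, your outline is precisely the Kahane--Lorentz route the paper gestures toward: reduce to the Lorentz normal form $\psi_{qp}(t)=\lambda_p\psi_q(t)$ with a single outer function, obtain universal inner functions by a Baire-category argument in $C([0,1])^{2n+1}$ enforcing the majority-separation property across all dyadic scales, and then run a geometric iteration with contraction factor $\theta=\tfrac{2n}{2n+1}+\epsilon<1$. The sketch is sound; the one place to be careful when filling in details is making the separation property hold \emph{simultaneously at every resolution} (since the grid size at each iteration depends on the modulus of continuity of the current residual, not just of $f$), and ensuring that the definition of $g$ on the image intervals extends continuously to all of $\mathbb{R}$ with the claimed sup bound. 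Both points are standard but should be stated explicitly.
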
 
This differs substantially from KAN's formulation \cite{liu2025kan}, where the outer functions disappear, the inner functions are replaced by a weighted sum of a SiLU MLP \cite{elfwing2018sigmoid} and a B-spline, and the networks are a composition of multiple feed-forward layers to accommodate recent neural network architectures.

Since the KART proof is not constructible, and is essentially based on Baire's theorem \cite{kahane1975theoreme}, the first efforts to implement a constructive proof of the KART were made by Sprecher in \cite{sprecher1996numerical, sprecher1997numerical}. These latest works are based on a more economical variant of the KART in terms of the number of outer and inner functions due to both  \citet{sprecher1965structure} and \citet{lorentz1966approximation}. 

This was followed by the first article on the practical training of this type of network by \citet{koppen2002training}, pointing out at the same time that the inner function $\psi$ constructed in this theorem was continuous but fractal! This limited its use in gradient-based learning algorithms. \citet{braun2009constructive} gave rigorous proof of termination, continuity, and monotonicity for the construction of the inner and the outer functions given by \citet{sprecher1997numerical}.

As acknowledged by both \cite{liu2025kan} and \cite{yang2025kolmogorovarnold}, the original ``KAN" layer defined in \cite{liu2025kan} could be seen as a sum of a SiLU MLP  and a weighted B-Spline combination. Let us define a linear function $\mathcal{L}_W \colon x \mapsto Wx$, with $W$ a learnable weight matrix. The ``KAN" layer \cite{liu2025kan}  is then defined as follows:
\begin{equation}
    \texttt{KAN}_\texttt{Liu}(x)=\mathcal{L}_{W_b}(\texttt{SiLU}(x)) + \mathcal{L}_{W_s} \left(\sum_i c_i B_i(x)\right)
\end{equation}
With  $W_b$ and $W_s$ two learnable weight matrices,  $(B_i)_{i \in \llbracket 0,d\rrbracket}$ a family of B-spline functions of order $d+1$, $(c_i)_{i \in \llbracket 0,d\rrbracket}$ the learnable spline weights and $\texttt{SiLU} \colon x \mapsto \frac{x}{1 + e^{-x}}$.

Indeed, if we follow the line of thought set out in KAN \cite{liu2025kan}, an MLP with learnable activation, or equivalently a learnable activation network (LAN) would be a sort of KART formulation, with the $\psi_{qp}$ inner functions being a linear combination of ReLU functions. However, this is not what the KART theorem suggests. Constructing a Kolmogorov-Arnold superposition requires a maximum of two layers formulated by inner and outer functions as in theorem~\ref{thm:KART} \citep{ismailov2025addressing}. 

It is worth noting that the concept of using splines to approximate inner functions in a Kolmogorov-Arnold network or more generally as a representation of an activation function isn't entirely new. The analogy between KANs and MLPs has been noticed since \cite{hecht1987kolmogorov} and \cite{kuurkova1992kolmogorov}. Earlier research, such as \cite{igelnik2003kolmogorov}, introduced Kolmogorov's Spline Network, which employed splines for flexible function approximation. In his PhD thesis, \citet{braun2009application} corrected the constructive proof of the KAT and gave practical examples using B-splines. Further developments in this area include \cite{bohra2020learning} and \cite{fakhoury2022exsplinet}, who focused on learning adaptive activation functions through splines, thus enhancing the network's expressiveness.

Additionally, the use of the Kolmogorov superposition theorem to tackle high-dimensional problems has been explored by \cite{laczkovich2021superposition} and \cite{lai2021kolmogorov}, who showed its potential in overcoming the curse of dimensionality. Similarly, \cite{montanelli2021deep} demonstrated how structured networks like Deep ReLU models can efficiently approximate bandlimited functions, thus expanding the practical applications of spline-based methodologies in neural networks.

With an equivalent number of parameters or FLOP, \citet{yu2024kan} observed that KAN surpasses MLP solely in symbolic formula representation, while it falls short of MLP in other machine learning tasks, including computer vision, NLP, and audio processing. \citet{cang2024can} confirmed the same finding.


Nevertheless, KANs have had the merit of rekindling interest in learnable activations in neural networks, among them polynomial and trigonometric activations.

Since the interest in KANs began, numerous researchers have proposed a multitude of learnable functions for activations, spanning a diverse range of mathematical functions, including splines, classical orthogonal polynomials, rational functions, Fourier bases, and wavelets... Despite this, in some instances, the safety of these operations, the boundedness of their gradients, their initialization, and their computational properties in the context of gradient descent have sometimes received less emphasis. Instead, many studies have highlighted proof-of-concept results, often demonstrating that such functions can achieve strong performance on benchmark datasets like MNIST \cite{lecun1998mnist}. This line of work has produced a rich body of literature. A common observation, however, is that much of it focuses on adapting a specific interpolation function within relatively shallow architectures and evaluating on small-scale datasets (such as the MNIST dataset, for example). Because a wide variety of functions can achieve test accuracies exceeding 97\% on MNIST with networks of depth three or less, it becomes challenging to distinguish which approaches provide the most robust or generalizable benefits.

\section{Extended Related Work}
\label{appendix:related}
The subject of learnable activation is a well-known one, but it has seen a resurgence thanks to the popularity enjoyed by the KAN article \cite{liu2025kan}. Examples of works in which the main theme is learning the activation function include \cite{houlsby2019parameter, goyal2019learning, tavakoli2021splash, moosavi2022adaptable, fang2023transformers, bodyanskiy2023learnable, pishchik2023trainable}.

The use of orthogonal polynomials as activation functions predates many recent developments. For example, \citet{ma2005constructive} introduced Hermite polynomial activations within a constructive 1-layer neural network framework, while \citet{venkatappareddy2021legendre} explored Legendre polynomial-based activations to better approximate max-pooling behavior.

Earlier works exploring polynomial activations in deep neural networks trained using the backpropagation algorithm include \cite{zhou2019polynomial} and \cite{chrysos2020p}, which empirically demonstrate that polynomially activated neural networks, even without non-linear activation functions, can perform well across multiple tasks. Building on this, \cite{chrysos2023regularization} sought to regularize such networks to compete with deep ReLU networks.

More recently, \cite{nebioglu2023higher} investigated the use of Chebyshev and Hermite orthogonal polynomials as activation functions, demonstrating that Chebyshev activations are computationally efficient but sensitive to problem types, while Hermite activations exhibit greater robustness and generalization. Additionally, \cite{xiao2024hope} introduced HOPE (High-order Polynomial Expansion), a novel method that represents neural networks as high-order Taylor polynomials, enabling improved interpretability, low computational complexity, and applications such as function discovery, fast inference, and feature selection.

Other recent works utilizing Chebyshev activation include \cite{deepthi2023development} and \cite{rezaeian2025sl2a}, which employed single-layer shallow networks. \citet{seydi2024exploring} conducted a comparative study of exotic polynomial activations on the MNIST dataset, while \citet{cooley2024polynomial} applied polynomial-augmented neural networks for approximating solutions to partial differential equations.
 
On the rational activation front, notable works include \cite{trefethen1987pade}, which introduced stable-Padé and Chebyshev-Padé approximators, and \cite{Molina2020Padé}, which proposed the Safe-Padé activation by ensuring the denominator of the rational activation remains nonzero. An orthogonal variant of the Padé approximant was presented in \cite{biswas2021orthogonal}, while Chebyshev rational functions \cite{castellanos1993rational} and Fourier rational functions \cite{geer1995rational} were explored in subsequent studies. More recently, advancements in rational activation using general Jacobi functions were introduced in \cite{aghaei2024rkan, aghaei2024fkan}.

Polynomial piecewise functions (such as B-splines) and rational functions (such as the Padé approximant) can exhibit finite support properties. On the other hand, these last lack the orthogonality property. Several works have aimed to formulate orthogonal splines \cite{mason1993orthogonal, alavi2023orthogonal} and orthogonal rational functions \cite{bultheel2001orthogonal}, or even a theory of spline wavelets \cite{chui1991cardinal} and rational wavelets \cite{zheng1999rational, choueiter2007implementation}.

Learning with a periodic function or a Fourier series has also been the subject of many anterior works, such as \cite{sitzmann2020implicit}, and more recently \cite{mehrabian2025implicit}, and \cite{martinez2024enn} using a Discrete Cosine Transform (DCT). Recently, \cite{hashemi2025can} introduced the Dynamic Range Activator (DRA), an activation function that combines harmonic (trigonometric) and hyperbolic components to capture the highly recursive and high-variance behavior within a deep problem in enumerative algebraic geometry.

In the context of tropical activations, prior work has been done to establish connections between tropical geometry and neural networks. For instance, \cite{zhang2018tropical} demonstrated that feedforward neural networks with ReLU activation can be interpreted as tropical rational maps, relating their decision boundaries to tropical hypersurfaces and showing how deeper networks leverage zonotopes to achieve exponentially greater expressiveness. Building on this geometric foundation, \cite{smyrnis2019tropical} introduced tropical polynomial division, an approach inspired by the max-plus semiring, and applied it to neural networks with ReLU activation. Recent work also developed tropical activation functions  \cite{yoshida2024tropical}, which were subsequently applied to convolutional neural networks (CNNs) for image classification tasks on MNIST \cite{lecun1998mnist}, CIFAR10 \cite{krizhevsky2009learning}, and SVHN \cite{netzer2011reading} in \cite{pasque2026adversarially}.


The stability of deep networks with learnable or unconventional activations is closely tied to variance-preserving initialization. Prior analyses have established theoretical conditions for stable signal propagation across depth \cite{hanin2018start}, while empirical and theoretical studies have examined variance decay phenomena in ReLU networks \cite{luther2019sample}. For sinusoidal activations, \citet{passalis2019variance} introduced a dedicated variance-preserving initialization scheme.

\section{Image classification results on CIFAR10}

We conducted an experiment using ConvNeXt-T on CIFAR10 for 300 epochs and averaged the results over 10 different seeds. The experiment shows that Hermite, Fourier, and Tropical activations are significantly above the GELU baseline in terms of test metrics. In addition, we added three different seeds for ResNet50 with ReLU, and the results of the latter are clearly inferior to those of ConvNeXt-T, ConvNeXt-T being a modernized version of ResNet50. The results, in table form, are reported in Table \ref{res:cifar_pretrain} and in graphical form in Figures \ref{fig:cifar_train}, \ref{fig:cifar_acc1}, and \ref{fig:cifar_acc5}.
\begin{table*}[!htbp]
\caption{Comparison of the proposed activation functions on ResNet50 and ConvNeXt-T models for CIFAR-10 image classification task. Values are reported as mean ± standard deviation over 10 different seeds (3 seeds for the ResNet50 case). p-values (two-tailed Student’s t-test assuming equal variances) compare each activation’s Top1-accuracy against GELU.}
\label{res:cifar_pretrain}
\vskip 0.15in
\begin{center}
\begin{small}
\begin{tabular}{l l c c c}
\toprule
\textbf{Model} & \textbf{Activation} & \textbf{Top-1 Acc. (\%)} & \textbf{Top-5 Acc. (\%)} & \textbf{p-value vs GELU (Top-1)} \\
\midrule
ResNet50 & \textbf{ReLU} & 88.9 $\pm$ 0.04 & 99.43 $\pm$ 0.47 & < 0.0001  (****) \\
ConvNeXt-Tiny & \textbf{Baseline (GELU)} & 90.47 $\pm$ 0.20 & 99.62 $\pm$ 0.06 & -- \\
ConvNeXt-Tiny & \textbf{Tropical} & 90.87 $\pm$ 0.19 & 99.63 $\pm$ 0.04 & 0.0002  (***) \\
ConvNeXt-Tiny & \textbf{Fourier} & 91.23 $\pm$ 0.65 & 99.60 $\pm$ 0.05 & 0.0023 (**)  \\
ConvNeXt-Tiny & \textbf{Hermite} & \textbf{91.35} $\pm$ 0.29 & 99.63 $\pm$ 0.05 & < 0.0001  (****) \\
\bottomrule
\end{tabular}
\end{small}
\end{center}
\vskip -0.1in
\end{table*}
\label{appendix:cifar10}

\begin{figure}[!htbp]
    \centering
    \includegraphics[width=\linewidth]{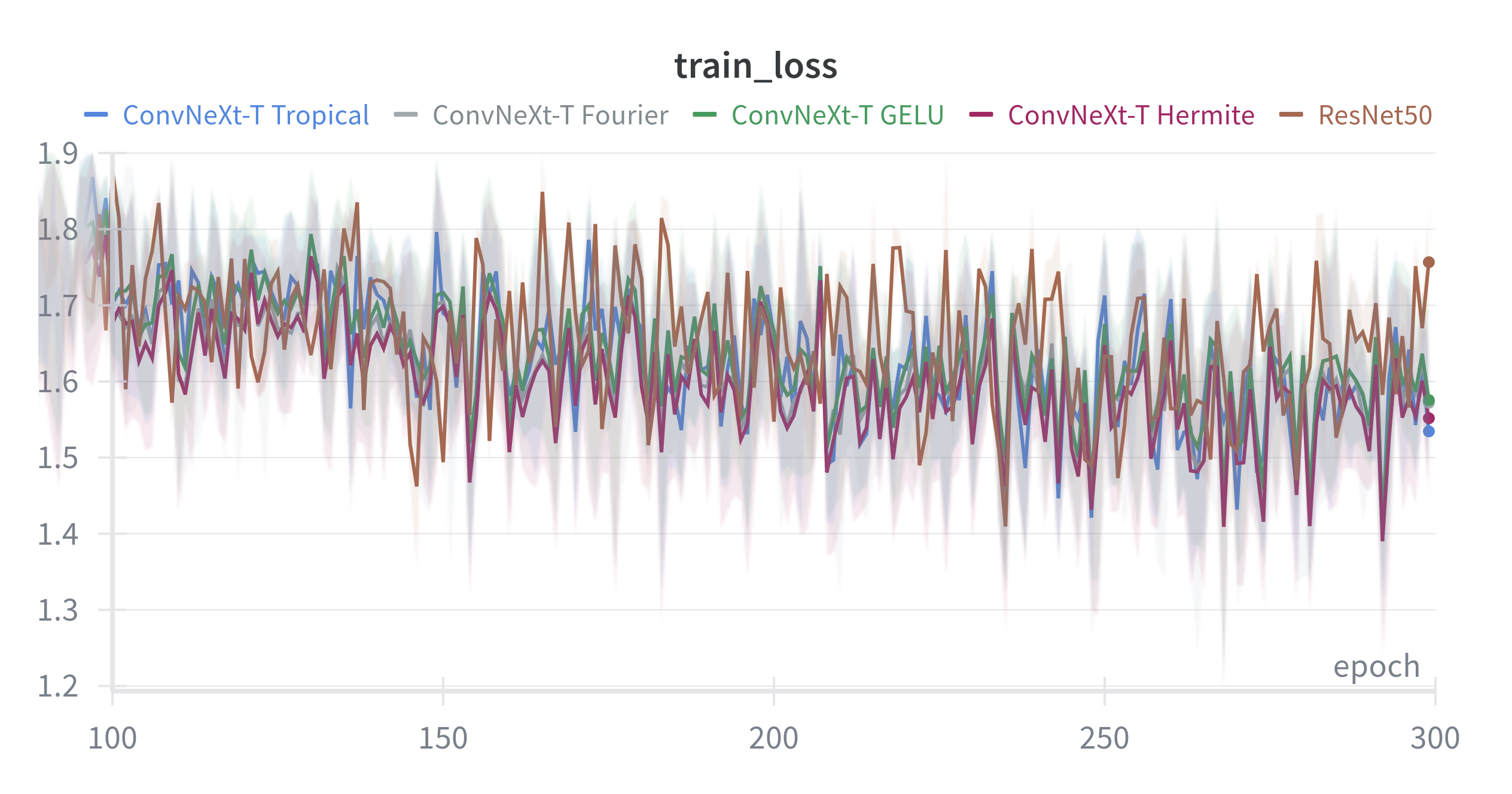}
   \caption{Training loss curves for ConvNeXt-T on CIFAR-10. 
   The solid lines represent the mean of the metric over 10 different seeds (3 seeds for the ResNet50 case), and the shaded areas show the range (min to max).}
    \label{fig:cifar_train}
\end{figure}
\begin{figure}[!htbp]
    \centering
    \includegraphics[width=\textwidth]{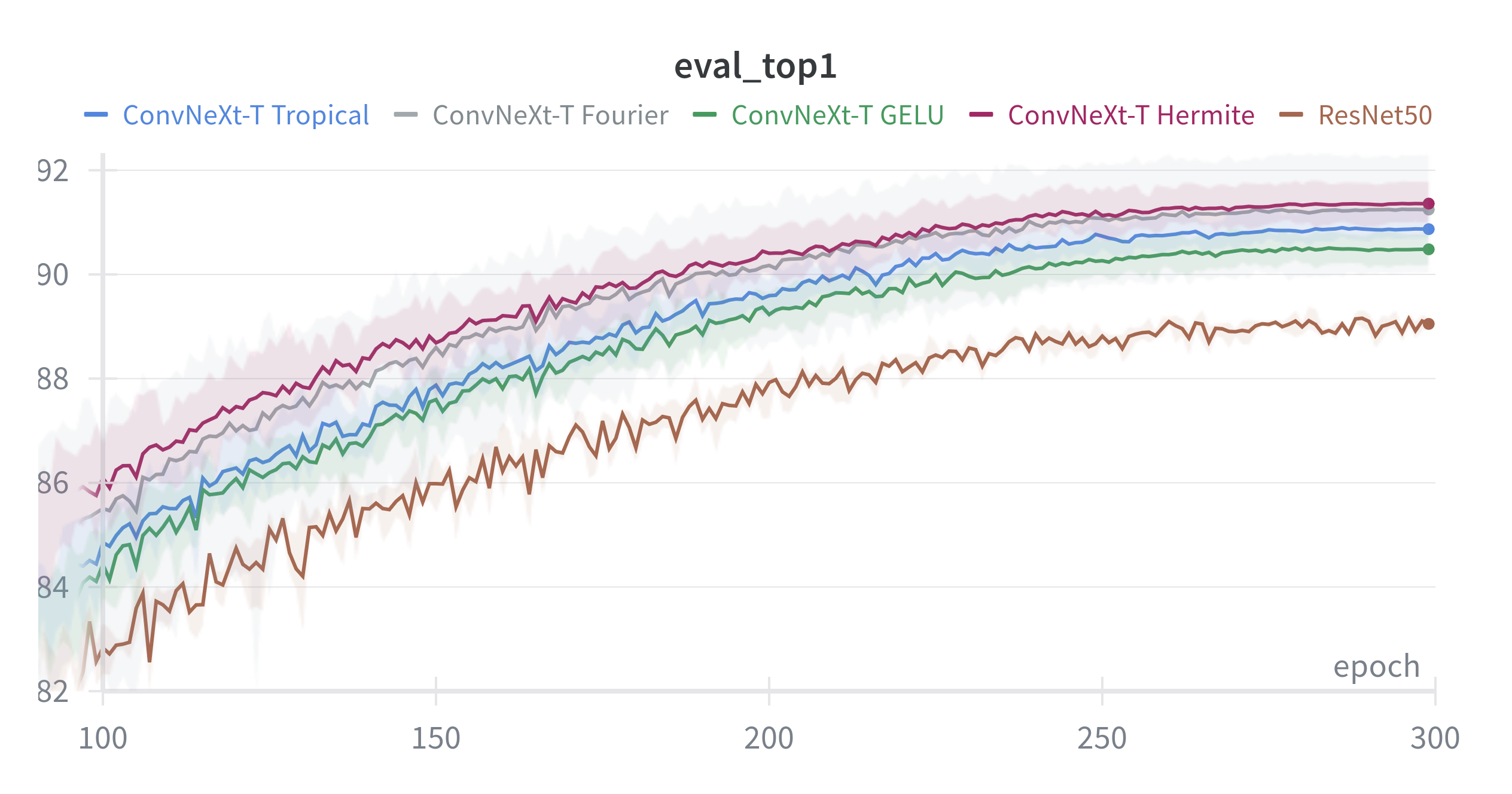}
   \caption{Evaluation Top1 accuracy curves for ConvNeXt-T on CIFAR-10. The solid lines represent the mean of the metric over 10 different seeds (3 seeds for the ResNet50 case), and the shaded areas show the range (min to max).}
    \label{fig:cifar_acc1}
\end{figure}
\begin{figure}[!htbp]
    \centering
    \includegraphics[width=\linewidth]{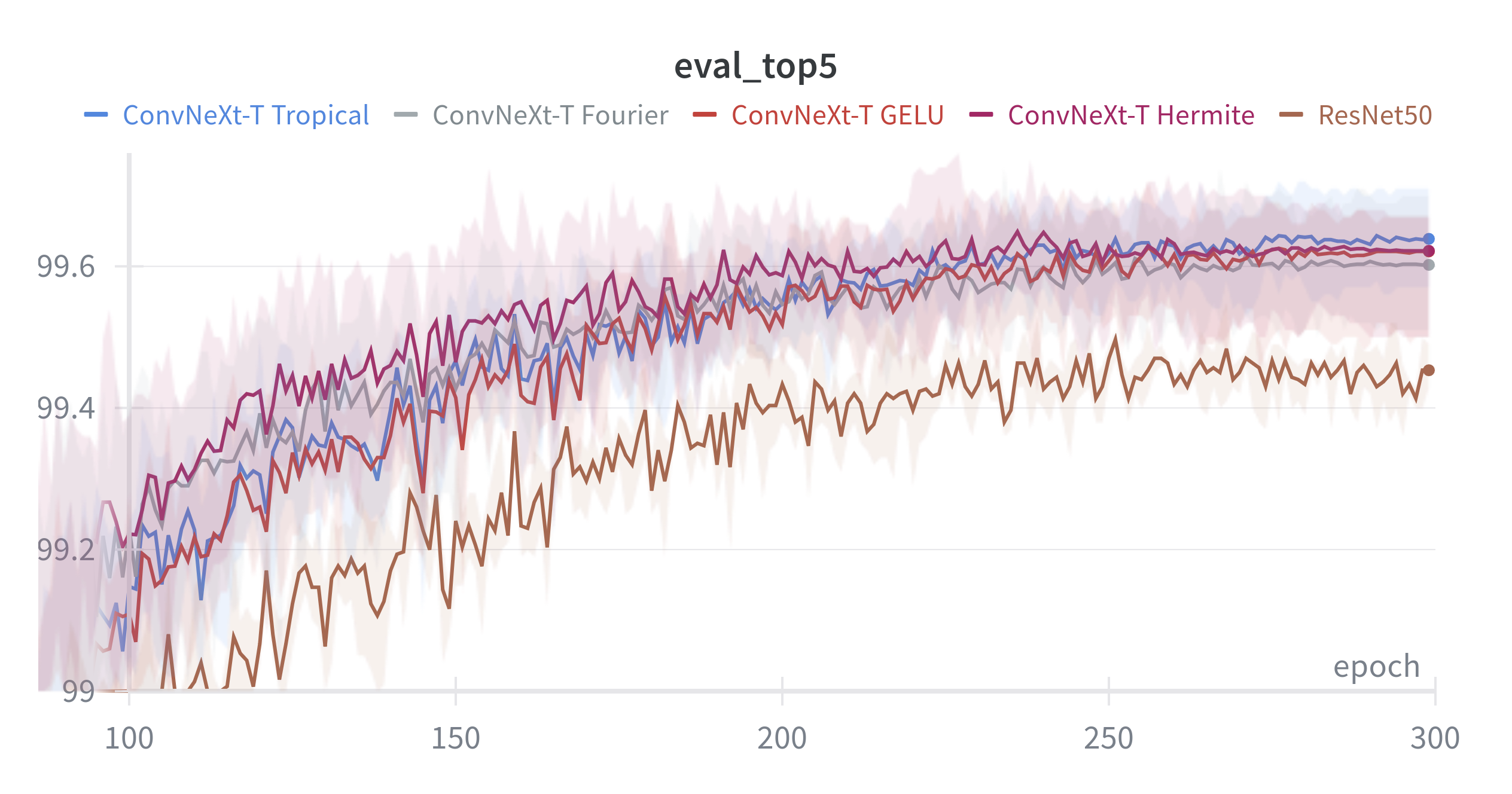}
   \caption{Evaluation Top5 accuracy curves for ConvNeXt-T on CIFAR-10. The solid lines represent the mean of the metric over 10 different seeds (3 seeds for the ResNet50 case), and the shaded areas show the range (min to max).}
    \label{fig:cifar_acc5}
\end{figure}
\newpage
For this experiment we used the \textsc{timm} library \cite{rw2019timm} with the following configuration \ref{tab:cifar_config}:

\begin{table*}[!htbp]
\caption{Configuration for the pretraining experiment on CIFAR-10.}
\label{tab:cifar_config}
\vskip 0.15in
\begin{center}
\begin{small}
\begin{tabular}{ll}
\toprule
\textbf{Hyperparameter} & \textbf{Value} \\
\midrule
Input Size & 3$\times$32$\times$32 \\
Number of Classes & 10 \\
Batch Size & 128 $\times$ 4 GPUs \\
Optimizer & AdamW \\
Learning Rate & 4e-3 \\
Epochs & 300 \\
Scheduler & Cosine \\
Drop Path Rate & 0.0 \\
Mean & 0.491, 0.482, 0.446 \\
Std & 0.247, 0.243, 0.261 \\
Warmup Epochs & 20 \\
Weight Decay & 0.0 \\
Mixup & 0.8 \\
Label Smoothing & 0.1 \\
Auto Augmentation & rand-m9-mstd0.5 \\
Re-mode & Pixel \\
Random Erasing Prob & 0.25 \\
Gradient Clipping & 5.0 \\
CutMix & 1.0 \\
\bottomrule
\end{tabular}
\end{small}
\end{center}
\vskip -0.1in
\end{table*}
\newpage
\section{Decision boundaries}
\label{appendix:noisy_boundaries}
To evaluate how the choice of activation function affects classification behavior and boundary smoothness, we compare the decision boundaries of four single-layer neural networks trained on simple 2D classification datasets (e.g., two moons, circles, etc.). Each network uses a different activation function. The Hermite activation function produces a globally smooth and coherent decision boundary, reflecting its polynomial nature. Notably, when the regions of the classes intersect, the boundary can develop a critical point corresponding to the intersection, where the local geometry degenerates. The first sub-figure in the top left shows that the decision boundary is rather a plane curve resembling an elliptic curve with two connected components and no singular points. This curve delimits three regions of the 2D space. The network uses two of these regions to classify/cluster one of the two classes of points, while the complementary region classifies the points belonging to the second class. In contrast, the Fourier activation leads to quasi-periodic patterns that adapt well to the underlying structure of the data, capturing fine-grained details and even fitting noisy points. The tropical activation yields a piecewise affine boundary, resembling the behavior of ReLU, with sharp transitions and linear segments that reflect its max-plus (tropical) structure.
\begin{figure}[!htbp]
    \centering
    \begin{subfigure}[b]{0.245\textwidth}
        \includegraphics[width=\textwidth]{figs/decision_boundary_hermite3_classification.png}
    \end{subfigure}
    \hfill
    \begin{subfigure}[b]{0.245\textwidth}
        \includegraphics[width=\textwidth]{figs/decision_boundary_fourier6_classification.png}
    \end{subfigure}
    \hfill
    \begin{subfigure}[b]{0.245\textwidth}
        \includegraphics[width=\textwidth]{figs/decision_boundary_tropical6_classification.png}
    \end{subfigure}
    \hfill
    \begin{subfigure}[b]{0.245\textwidth}
        \includegraphics[width=\textwidth]{figs/decision_boundary_gelu_classification.png}
    \end{subfigure}

    \vspace{1em}

    \begin{subfigure}[b]{0.245\textwidth}
        \includegraphics[width=\textwidth]{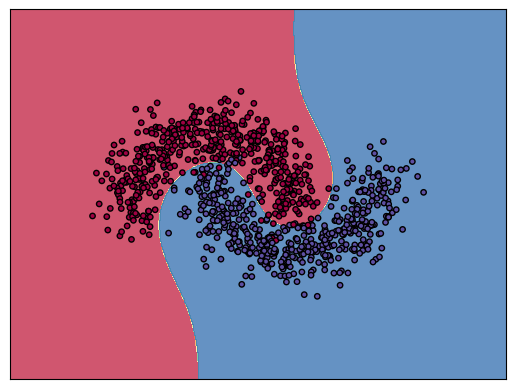}
    \end{subfigure}
    \hfill
    \begin{subfigure}[b]{0.245\textwidth}
        \includegraphics[width=\textwidth]{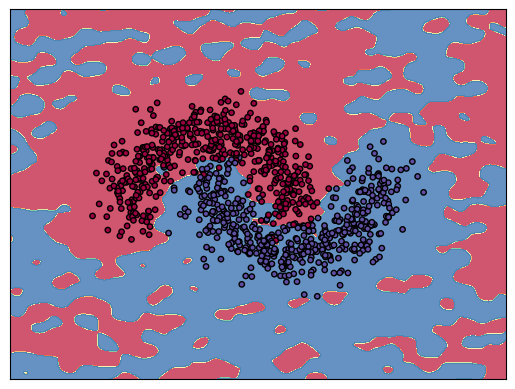}
    \end{subfigure}
    \hfill
    \begin{subfigure}[b]{0.245\textwidth}
        \includegraphics[width=\textwidth]{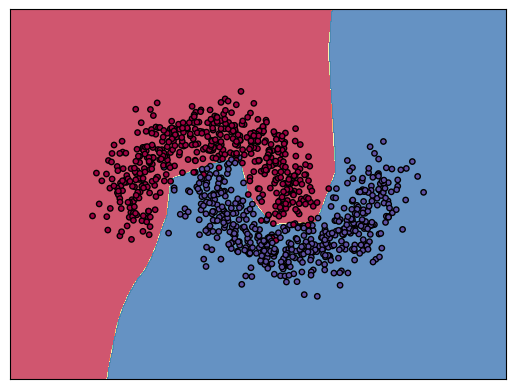}
    \end{subfigure}
    \hfill
    \begin{subfigure}[b]{0.245\textwidth}
        \includegraphics[width=\textwidth]{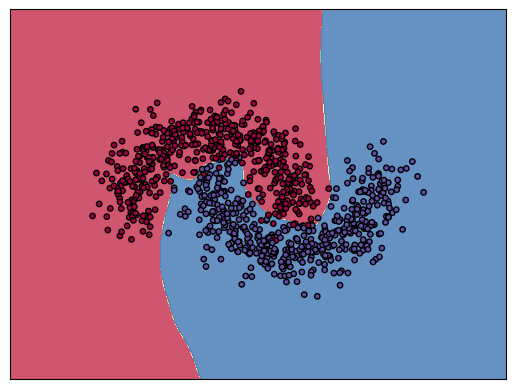}
    \end{subfigure}

    \vspace{1em}

    \begin{subfigure}[b]{0.245\textwidth}
        \includegraphics[width=\textwidth]{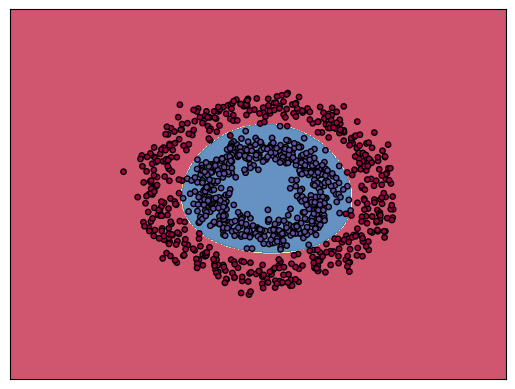}
                \caption*{Hermite}
    \end{subfigure}
    \hfill
    \begin{subfigure}[b]{0.245\textwidth}
        \includegraphics[width=\textwidth]{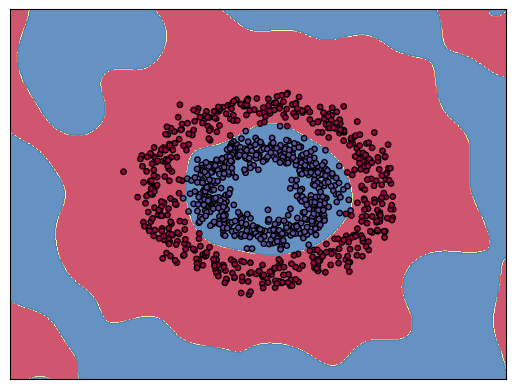}
                \caption*{Fourier}
    \end{subfigure}
    \hfill
    \begin{subfigure}[b]{0.245\textwidth}
        \includegraphics[width=\textwidth]{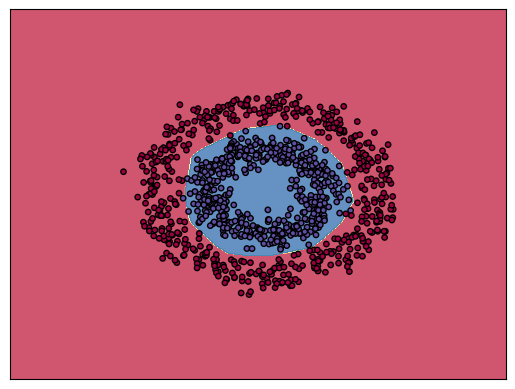}
                \caption*{Tropical}
    \end{subfigure}
    \hfill
    \begin{subfigure}[b]{0.245\textwidth}
        \includegraphics[width=\textwidth]{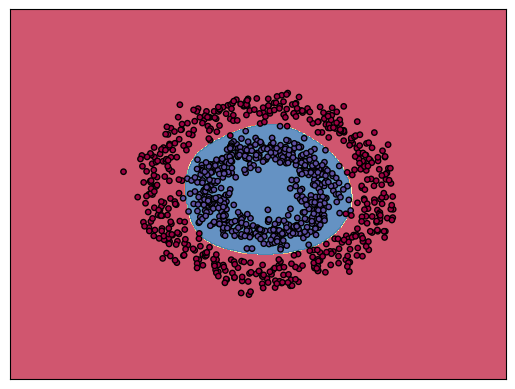}
                \caption*{GELU}
    \end{subfigure}

    \caption{Decision boundaries across datasets: Top row: classification; middle row: moons; bottom row: circles, using four different activation functions.}
    \label{fig:appendix_boundaries}
\end{figure}
\newpage
\section{Line plots}
\begin{figure}[!htbp]
    \centering
    \includegraphics[width=\linewidth]{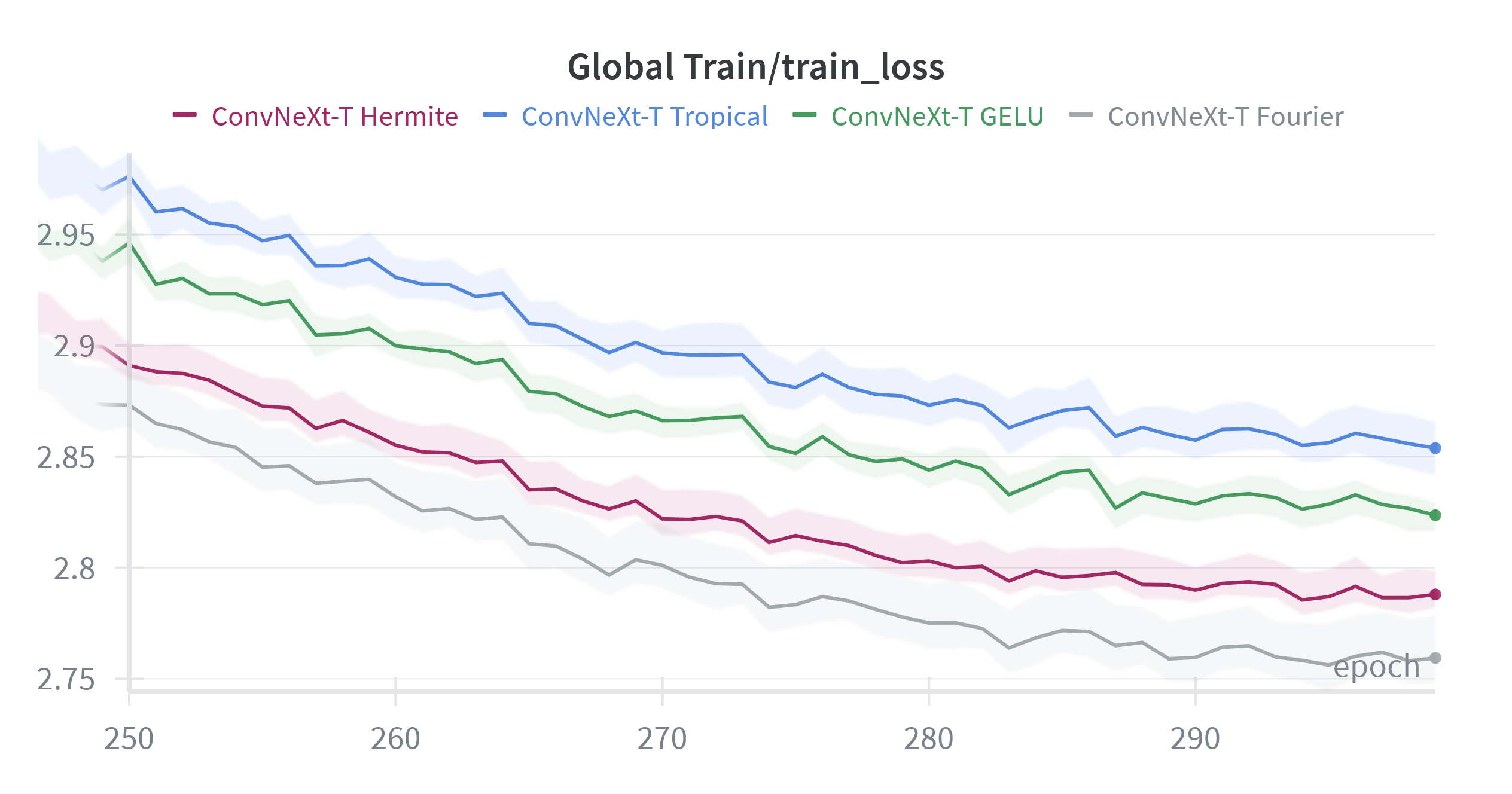}
   \caption{Training loss curves for ConvNeXt-T on ImageNet1k. 
   The solid lines represent the mean of the metric over 5 different seeds, and the shaded areas show the range (min to max).}
    \label{fig:imagenet_train}
\end{figure}
\begin{figure}[!htbp]
    \centering
    \includegraphics[width=\linewidth]{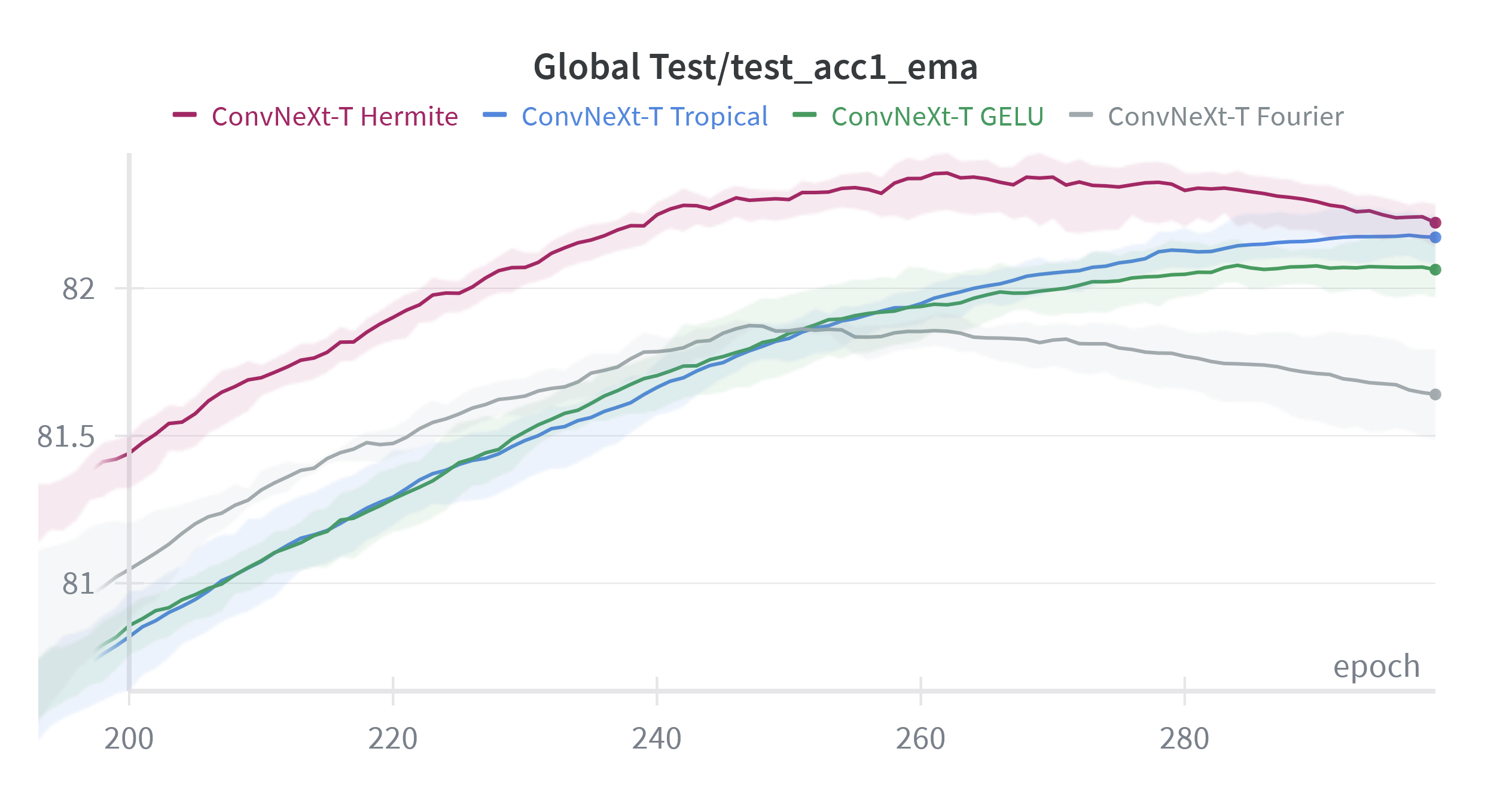}
   \caption{Top1 evaluation accuracy for ConvNeXt-T on ImageNet1k. 
   The solid lines represent the mean of the metric over 5 different seeds, and the shaded areas show the range (min to max).}
    \label{fig:imagenet_acc1}
\end{figure}
\begin{figure}[!htbp]
    \centering
    \includegraphics[width=\linewidth]{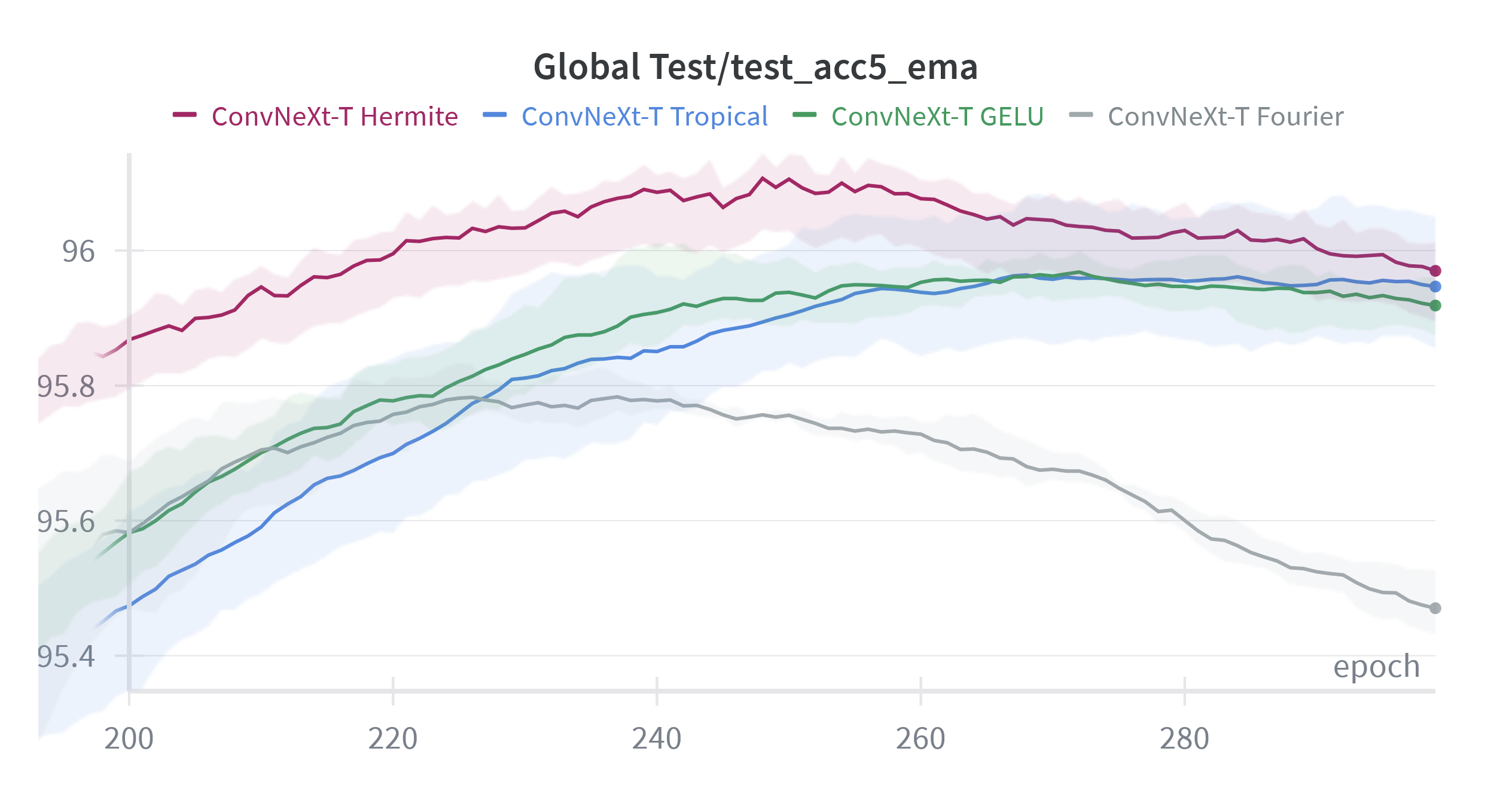}
   \caption{Top5 evaluation accuracy for ConvNeXt-T on ImageNet1k. 
   The solid lines represent the mean of the metric over 5 different seeds, and the shaded areas show the range (min to max).}
    \label{fig:imagenet_acc5}
\end{figure}
\begin{figure}[!htbp]
    \centering
    \includegraphics[width=\linewidth]{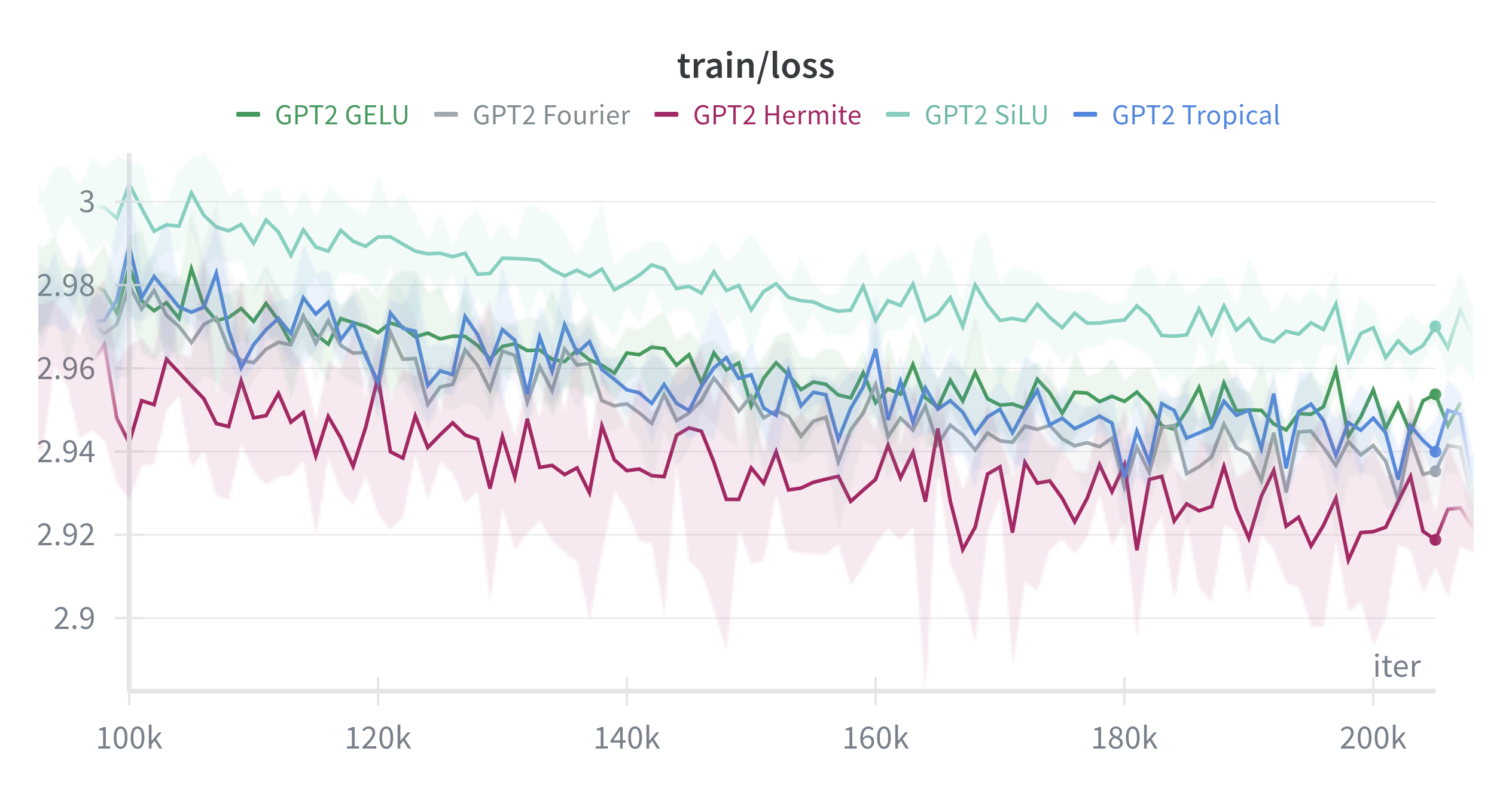}
   \caption{Comparison of the train losses of the GPT2 model (124M) on OpenWebText with GELU, SiLU, Hermite, Fourier, and Tropical activations. The solid lines represent the mean of the metric over 5 different seeds, and the shaded areas show the range (min to max).}
    \label{fig:owt_train}
\end{figure}
\begin{figure}[!htbp]
    \centering
    \includegraphics[width=\linewidth]{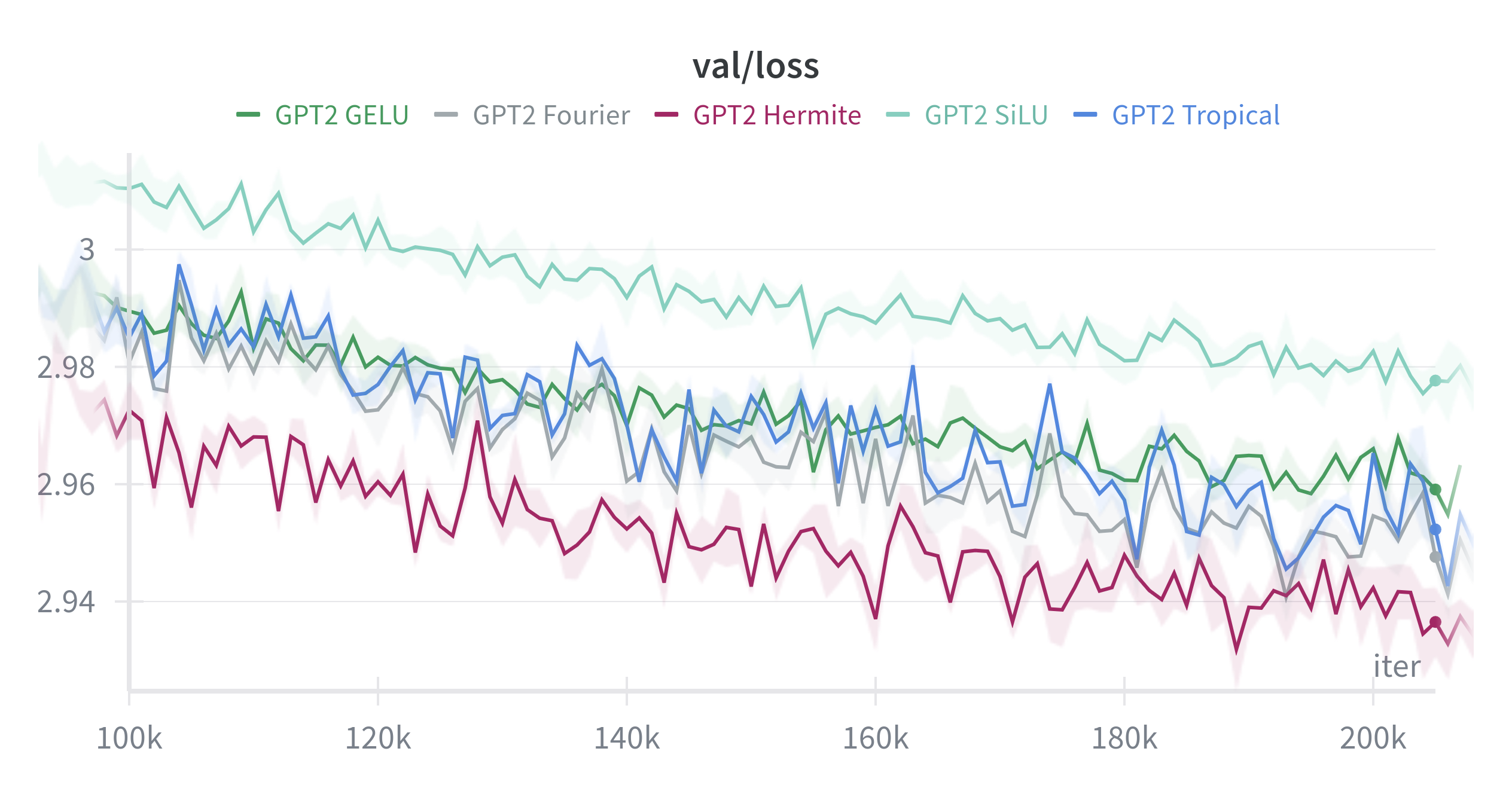}
   \caption{Comparison of the validation losses of the GPT2 model (124M) on OpenWebText with GELU, SiLU, Hermite, Fourier, and Tropical activations. The solid lines represent the mean of the metric over 5 different seeds, and the shaded areas show the range (min to max).}
    \label{fig:owt_val}
\end{figure}
\newpage
\section{Finetuning activations experiment on CIFAR10}
\label{appendix:finetuning}
We conducted an experiment for fine-tuning ConvNeXt-tiny (pre-trained on ImageNet1k) on CIFAR10. We froze all the weights except those of the last linear layer and the ones of the learnable activations, which were initialized by fitting GELU with a Hermite interpolation. The results hereby show a clear superiority of the proposed learnable activations: 
\begin{figure}[!htbp]
    \centering
    \includegraphics[width=\linewidth]{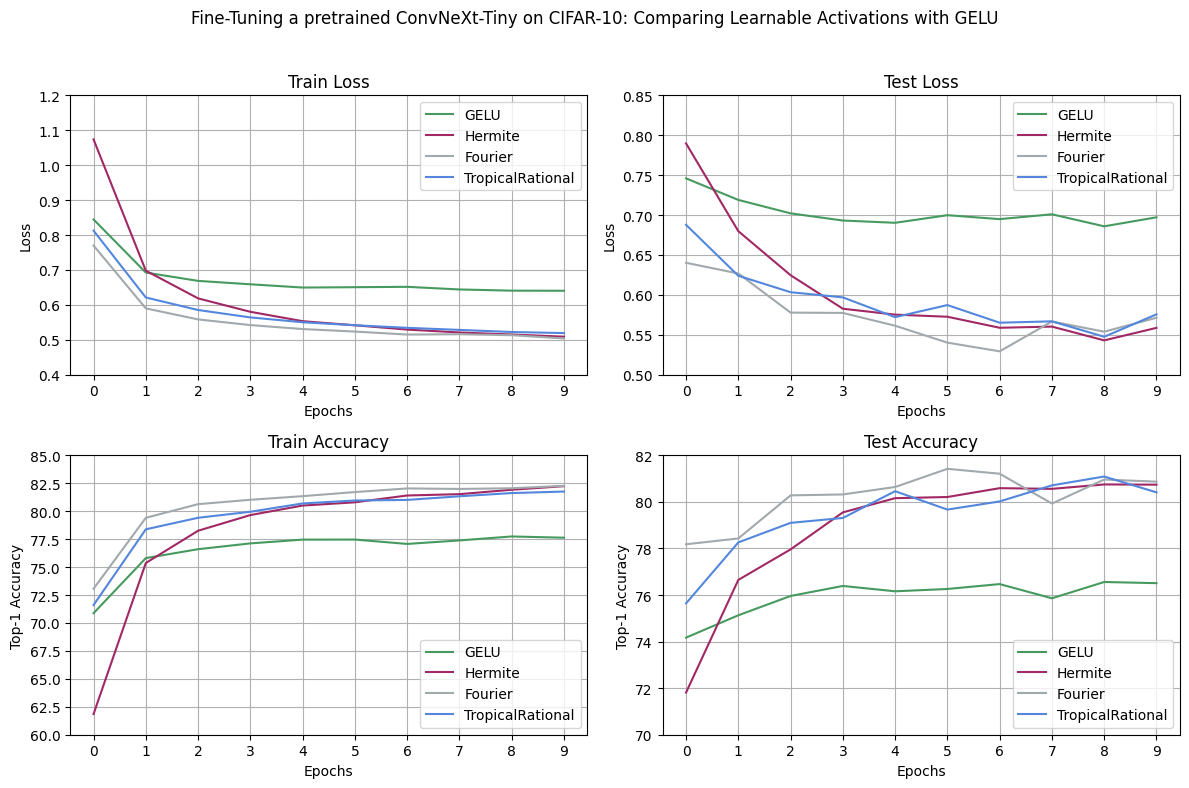}
   \caption{Performance of a pretrained ConvNeXt-T (on ImageNet1k) on CIFAR10 when fine-tuning only the final linear layer and the learnable coefficients of the activations.}
    \label{fig:cifar_finetune}
\end{figure}
\newpage
\section{Timing Results}
\label{appendix:timings}
\begin{figure}[!htbp]
    \centering
    \includegraphics[width=0.9\linewidth]{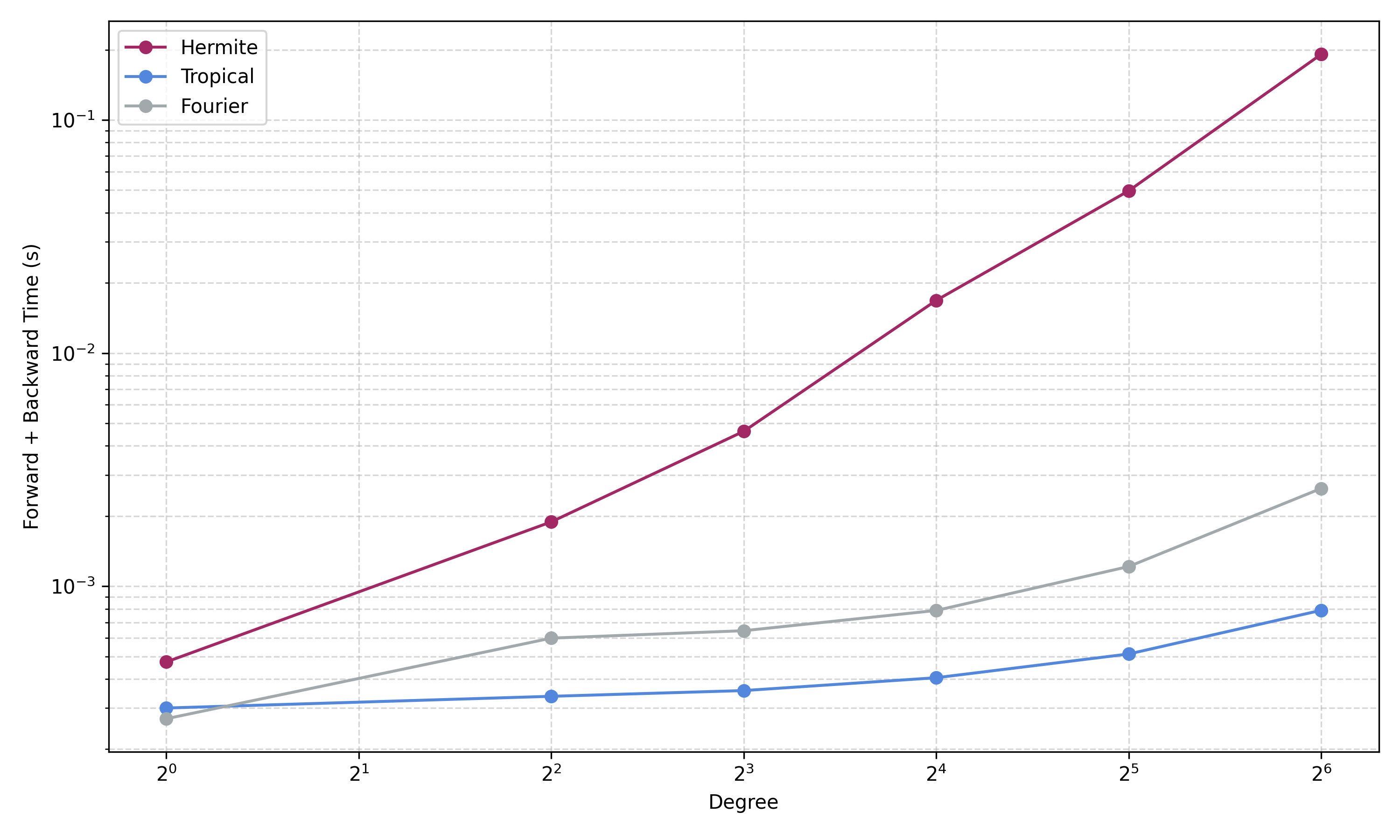}
   \caption{Forward + backward pass averaged times (in seconds and log-log scale) for Hermite, Tropical, and Fourier activations across varying degrees as benchmarked on an AMD EPYC 7402 CPU.}
    \label{fig:degree_benchmark_cpu}
\end{figure}

\begin{table*}[!htbp]
\caption{Forward + backward pass averaged times (in seconds) for Hermite, Tropical, and Fourier activations across varying degrees as benchmarked on an AMD EPYC 7402 CPU.}
\label{tab:degree_benchmark_cpu}
\vskip 0.15in
\begin{center}
\begin{small}
\begin{tabular}{ccc}
\toprule
 Activation   &   Degree &   Forward+Backward Time \\
\midrule
 Hermite      &        1 &             0.00047354  \\
 Hermite      &        4 &             0.00188883  \\
 Hermite      &        8 &             0.00462004  \\
 Hermite      &       16 &             0.0168367   \\
 Hermite      &       32 &             0.0496985   \\
 Hermite      &       64 &             0.191425    \\
\midrule
 Tropical     &        1 &             0.000300329 \\
 Tropical     &        4 &             0.000337174 \\
 Tropical     &        8 &             0.000356829 \\
 Tropical     &       16 &             0.000405076 \\
 Tropical     &       32 &             0.000512147 \\
 Tropical     &       64 &             0.000788548 \\
\midrule
 Fourier      &        1 &             0.000270138 \\
 Fourier      &        4 &             0.000599022 \\
 Fourier      &        8 &             0.000644515 \\
 Fourier      &       16 &             0.000787303 \\
 Fourier      &       32 &             0.00121512  \\
 Fourier      &       64 &             0.00262779  \\
\bottomrule
\end{tabular}
\end{small}
\end{center}
\vskip -0.1in
\end{table*}

\begin{figure}[!htbp]
    \centering
    \includegraphics[width=0.9\linewidth]{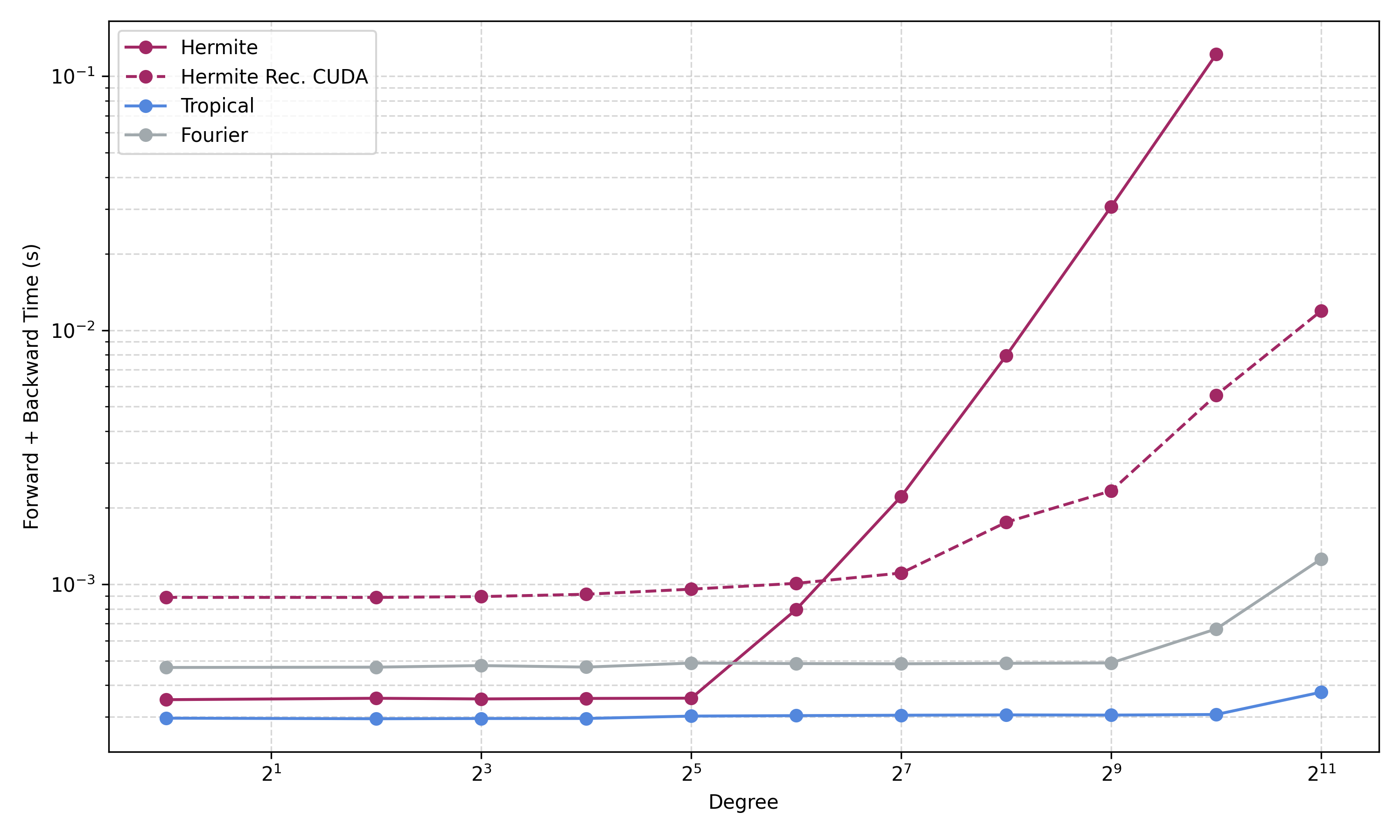}
   \caption{Forward + backward pass averaged times (in seconds) for Hermite (explicit Alg.~\ref{alg:hermite_forward}, Eq.~\ref{eq:hermite_explicit} and recurrence-based CUDA implementation Alg.~\ref{alg:hermite_cuda_forward}, Eq.~\ref{eq:hermite_rec}), Tropical, and Fourier activations across varying degrees as benchmarked on a single NVIDIA A100 GPU/40GB.}
    \label{fig:degree_gpu_timings}
\end{figure}

\begin{table*}[!htbp]
\caption{Forward + backward pass averaged times (in seconds) for Hermite (explicit Alg.~\ref{alg:hermite_forward}, Eq.~\ref{eq:hermite_explicit} and recurrence-based CUDA implementation Alg.~\ref{alg:hermite_cuda_forward}, Eq.~\ref{eq:hermite_rec}), Tropical, and Fourier activations across varying degrees as benchmarked on a single NVIDIA A100 GPU/40GB.}
\label{tab:degree_gpu_timings}
\vskip 0.15in
\begin{center}
\begin{small}
\begin{tabular}{ccc}
\toprule
 Activation        &   Degree &   Forward+Backward Time \\
\midrule
 Hermite           &        1 &             0.000350997 \\
 Hermite           &        4 &             0.00035552  \\
 Hermite           &        8 &             0.00035341  \\
 Hermite           &       16 &             0.000355005 \\
 Hermite           &       32 &             0.000355887 \\
 Hermite           &       64 &             0.000795927 \\
 Hermite           &      128 &             0.00221812  \\
 Hermite           &      256 &             0.00794526  \\
 Hermite           &      512 &             0.0306861   \\
 Hermite           &     1024 &             0.122032    \\
\midrule
 Hermite Rec. CUDA &        1 &             0.000887356 \\
 Hermite Rec. CUDA &        4 &             0.000887024 \\
 Hermite Rec. CUDA &        8 &             0.000893834 \\
 Hermite Rec. CUDA &       16 &             0.000912833 \\
 Hermite Rec. CUDA &       32 &             0.000956004 \\
 Hermite Rec. CUDA &       64 &             0.00100782  \\
 Hermite Rec. CUDA &      128 &             0.00110659  \\
 Hermite Rec. CUDA &      256 &             0.0017534   \\
 Hermite Rec. CUDA &      512 &             0.00233119  \\
 Hermite Rec. CUDA &     1024 &             0.00555244  \\
 Hermite Rec. CUDA &     2048 &             0.0119172   \\
\midrule 
 Tropical          &        1 &             0.000296884 \\
 Tropical          &        4 &             0.000295298 \\
 Tropical          &        8 &             0.000295942 \\
 Tropical          &       16 &             0.000296063 \\
 Tropical          &       32 &             0.000302484 \\
 Tropical          &       64 &             0.000303783 \\
 Tropical          &      128 &             0.000304883 \\
 Tropical          &      256 &             0.000305769 \\
 Tropical          &      512 &             0.000305247 \\
 Tropical          &     1024 &             0.000306931 \\
 Tropical          &     2048 &             0.000375793 \\
\midrule 
 Fourier           &        1 &             0.000470023 \\
 Fourier           &        4 &             0.000471404 \\
 Fourier           &        8 &             0.000478096 \\
 Fourier           &       16 &             0.000471656 \\
 Fourier           &       32 &             0.000489211 \\
 Fourier           &       64 &             0.000487039 \\
 Fourier           &      128 &             0.000486042 \\
 Fourier           &      256 &             0.00048811  \\
 Fourier           &      512 &             0.000489757 \\
 Fourier           &     1024 &             0.000666652 \\
 Fourier           &     2048 &             0.00125729  \\
\bottomrule
\end{tabular}
\end{small}
\end{center}
\vskip -0.1in
\end{table*}
\begin{figure}[!htbp]
    \centering
    \includegraphics[width=\linewidth]{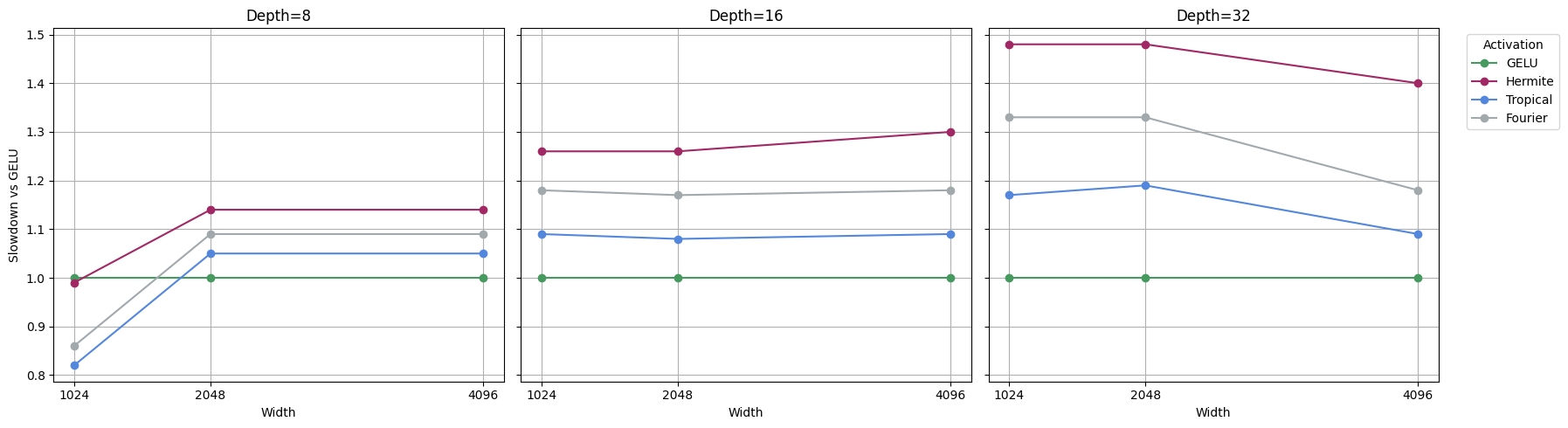}
   \caption{Relative slowdowns of Hermite, Tropical, and Fourier activations compared to GELU across different widths.}
    \label{fig:slowdown_vs_width}
\end{figure}
\begin{figure}[!htbp]
    \centering
    \includegraphics[width=\linewidth]{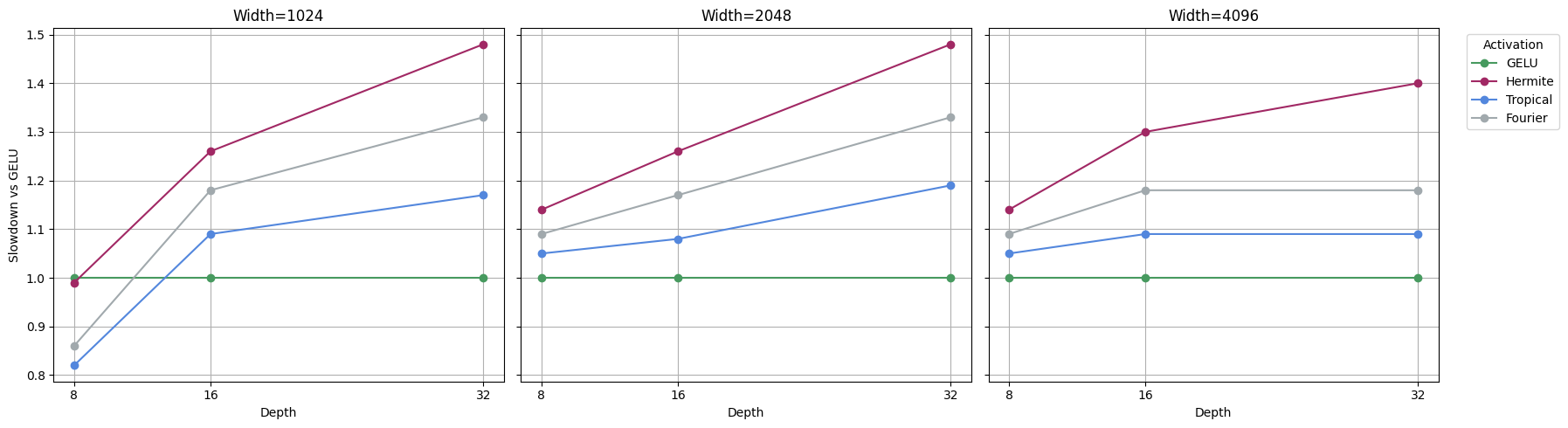}
   \caption{Relative slowdowns of Hermite, Tropical, and Fourier activations compared to GELU across different depths.}
    \label{fig:slowdown_vs_depth}
\end{figure}

\begin{table*}[!htbp]
\caption{Training times (in seconds) and relative slowdowns compared to GELU across different MLP network widths and depths, for Hermite, Tropical, and Fourier activations. The reported times were averaged per epoch and were obtained using a single NVIDIA A100 GPU with 40 GB of memory.}
\label{tab:width_depth_timings}
\vskip 0.15in
\begin{center}
\begin{small}
\begin{tabular}{cccccc}
\toprule
 Activation   & Degree   &   Width &   Depth & Training Time   & Slowdown vs Baseline   \\
\midrule
 GELU         & -        &    1024 &       8 & 16.13s          & 1.00×                  \\
 Hermite      & 3        &    1024 &       8 & 16.03s          & 0.99×                  \\
 Tropical     & 6        &    1024 &       8 & 13.27s          & 0.82×                  \\
 Fourier      & 6        &    1024 &       8 & 13.93s          & 0.86×                  \\
 \midrule
 GELU         & -        &    1024 &      16 & 12.86s          & 1.00×                  \\
 Hermite      & 3        &    1024 &      16 & 16.15s          & 1.26×                  \\
 Tropical     & 6        &    1024 &      16 & 14.06s          & 1.09×                  \\
 Fourier      & 6        &    1024 &      16 & 15.13s          & 1.18×                  \\
 \midrule
 GELU         & -        &    1024 &      32 & 13.96s          & 1.00×                  \\
 Hermite      & 3        &    1024 &      32 & 20.65s          & 1.48×                  \\
 Tropical     & 6        &    1024 &      32 & 16.29s          & 1.17×                  \\
 Fourier      & 6        &    1024 &      32 & 18.50s          & 1.33×                  \\
 \midrule
 GELU         & -        &    2048 &       8 & 12.36s          & 1.00×                  \\
 Hermite      & 3        &    2048 &       8 & 14.08s          & 1.14×                  \\
 Tropical     & 6        &    2048 &       8 & 12.98s          & 1.05×                  \\
 Fourier      & 6        &    2048 &       8 & 13.51s          & 1.09×                  \\
 \midrule
 GELU         & -        &    2048 &      16 & 12.96s          & 1.00×                  \\
 Hermite      & 3        &    2048 &      16 & 16.29s          & 1.26×                  \\
 Tropical     & 6        &    2048 &      16 & 14.05s          & 1.08×                  \\
 Fourier      & 6        &    2048 &      16 & 15.15s          & 1.17×                  \\
 \midrule
 GELU         & -        &    2048 &      32 & 13.98s          & 1.00×                  \\
 Hermite      & 3        &    2048 &      32 & 20.65s          & 1.48×                  \\
 Tropical     & 6        &    2048 &      32 & 16.64s          & 1.19×                  \\
 Fourier      & 6        &    2048 &      32 & 18.53s          & 1.33×                  \\
 \midrule
 GELU         & -        &    4096 &       8 & 12.43s          & 1.00×                  \\
 Hermite      & 3        &    4096 &       8 & 14.12s          & 1.14×                  \\
 Tropical     & 6        &    4096 &       8 & 13.02s          & 1.05×                  \\
 Fourier      & 6        &    4096 &       8 & 13.58s          & 1.09×                  \\
 \midrule
 GELU         & -        &    4096 &      16 & 13.10s          & 1.00×                  \\
 Hermite      & 3        &    4096 &      16 & 17.07s          & 1.30×                  \\
 Tropical     & 6        &    4096 &      16 & 14.30s          & 1.09×                  \\
 Fourier      & 6        &    4096 &      16 & 15.42s          & 1.18×                  \\
 \midrule
 GELU         & -        &    4096 &      32 & 23.93s          & 1.00×                  \\
 Hermite      & 3        &    4096 &      32 & 33.41s          & 1.40×                  \\
 Tropical     & 6        &    4096 &      32 & 26.10s          & 1.09×                  \\
 Fourier      & 6        &    4096 &      32 & 28.21s          & 1.18×                  \\
\bottomrule
\end{tabular}
\end{small}
\end{center}
\vskip -0.1in
\end{table*}
\newpage
\section{Large Language Model Usage Disclosure}
\label{appendix:disclosure}
We used large language models to assist in translating, rewording, and polishing the text for clarity and readability. The models were not used for idea generation, experiments, analysis, or contributions at the level of scientific authorship.
\end{document}